\title{\textbf{Beyond the Chinese Restaurant and Pitman-Yor processes:\\ Statistical Models with Double Power-law Behavior}}
\newcommand{\printfnsymbol}[1]{%
  \textsuperscript{\@fnsymbol{#1}}%
}
\author[1]{Fadhel Ayed\thanks{Equal contribution}\thanks{Corresponding author, \texttt{fadhel.ayed@stats.ox.ac.uk}}}
\author[1,2]{Juho Lee\printfnsymbol{1}}
\author[1]{Fran\c{c}ois Caron}
\affil[1]{Department of Statistics, University of Oxford}
\affil[2]{AITRICS}
\DeclareMathOperator{\Gam}{Gamma}
\DeclareMathOperator{\Bet}{Beta}
\DeclareMathOperator{\PD}{PD}
\DeclareMathOperator{\PY}{PY}
\DeclareMathOperator{\CRM}{CRM}
\DeclareMathOperator{\NCRM}{NCRM}
\DeclareMathOperator{\rhoGGP}{\rho_{\text{\tiny GGP}}}
\DeclareMathOperator{\logit}{logit}
\newtheorem{theorem}{Theorem}
\newtheorem{lemma}{Lemma}
\newtheorem{corollary}[theorem]{Corollary}
\newenvironment{proof}[1][Proof]{\noindent\textbf{#1.} }{\ \rule{0.5em}{0.5em}}
\newtheorem{definition}{Definition}[section]
\newtheorem{proposition}{Proposition}
\newtheorem{remark}{Remark}
\newcommand{\1}[1]{\mathds{1}_{#1}}
\begin{document}
\maketitle

\begin{abstract}
Bayesian nonparametric approaches, in particular the Pitman-Yor process and the associated two-parameter Chinese Restaurant process, have been successfully used in applications where the data exhibit a power-law behavior. Examples include natural language processing, natural images or networks. There is also growing empirical evidence suggesting that some datasets exhibit a two-regime power-law behavior: one regime for small frequencies, and a second regime, with a different exponent, for high frequencies. In this paper, we introduce a class of completely random measures which are doubly regularly-varying. Contrary to the Pitman-Yor process, we show that when completely random measures in this class are normalized to obtain random probability measures and associated random partitions, such partitions exhibit a double power-law behavior. We present two general constructions and discuss in particular two models within this class: the beta prime process (Broderick et al. (2015, 2018) and a novel process called generalized BFRY process. We derive efficient Markov chain Monte Carlo algorithms to estimate the parameters of these models. Finally, we show that the proposed models provide a better fit than the Pitman-Yor process on various datasets.
\end{abstract}

\section{Introduction}
\label{sec:introduction}

Power-law distributions appear to arise in a wide range of contexts, including natural languages, natural images or networks. For example, the empirical distribution of the word frequencies in natural languages is well approximated by a power-law distribution, an observation attributed to~\citet{Zipf1935}. That is, the frequency $f_{(k)}$ of the $k$th most frequent word in a corpus satisfies, within some range
\begin{align*}
 f_{(k)}\simeq C k^{-\xi}
 \end{align*}
 where $C$ is some constant and $\xi>0$ is the power-law exponent which is typically close to 1 for natural languages.  These empirical findings have motivated the development of numerous generative models that can reproduce this power-law behavior; see the reviews of \cite{Mitzenmacher2004} and \cite{Newman2005}.

Amongst these generative models, Bayesian nonparametric hierarchical models based on infinite-dimensional random measures have been successfully used to capture the power-law behavior of various datasets. Applications include natural language processing~\citep{Goldwater2006,Teh2006,Wood2009,Mochihashi2009,Sato2010}, natural image segmentation  \cite{Sudderth2009} or network analysis~\cite{Caron2012,Caron2017,Crane2018,Cai2016}. A very popular model is the Pitman-Yor (PY) process~\citep{Pitman1995,Pitman1997,Pitman2006}, an infinite-dimensional random probability measure whose properties induce a power-law behavior. It admits two parameters ($0\leq \alpha<1$, $\theta>-\alpha$). The PY random probability measure is almost surely discrete, with weights $\pi_{(1)}\geq \pi_{(2)}\geq ,\ldots$ following the so-called two-parameter Poisson-Dirichlet distribution $\PD(\alpha,\theta)$~\citep{Pitman1997}. For $\alpha>0$, the random weights satisfy
$$
\pi_{(k)}\sim k^{-1/\alpha}S~~~\text{almost surely as }k\rightarrow\infty
$$
where $S$ is a random variable. That is, small weights asymptotically follow a power-law distribution whose exponent is controlled by the parameter $\alpha$. The PY process also enjoys tractable alternative constructions via the two-parameter Chinese restaurant process or the stick-breaking construction which explains its great popularity amongst models with similar properties.
Other popular infinite-dimensional random measures that have been used for their similar power-law properties include the stable Indian buffet process~\citep{Teh2009} or the generalized gamma process~\cite{Hougaard1986,Brix1999}.

\paragraph{Double power-law in empirical data.} There is a growing empirical evidence that some datasets may exhibit a double power-law regime when the sample size is large enough. Examples include word frequencies in natural languages~\cite{FerreriCancho2001,Montemurro2001,Gerlach2013,Font-Clos2013}, Twitter rates and retweet distributions~\citep{Bild2015}, or degree distributions in social~\citep{Csanyi2004}, communication~\citep{Seshadri2008} or transportation networks~\cite{Paleari2010}. In the case of word frequencies for example, it is conjectured that high frequency words approximately follow a power-law with Zipfian exponent approximately equal to $1$, while the low frequency words follow a power-law with a higher exponent. An illustration is given in \cref{fig:ANC_empirical}, which shows the word frequencies of about 300,000 words from the American National Corpus\footnote{http://www.anc.org/data/anc-second-release/frequency-data/}.

\begin{figure}
\begin{center}
\includegraphics[width=.31\textwidth]{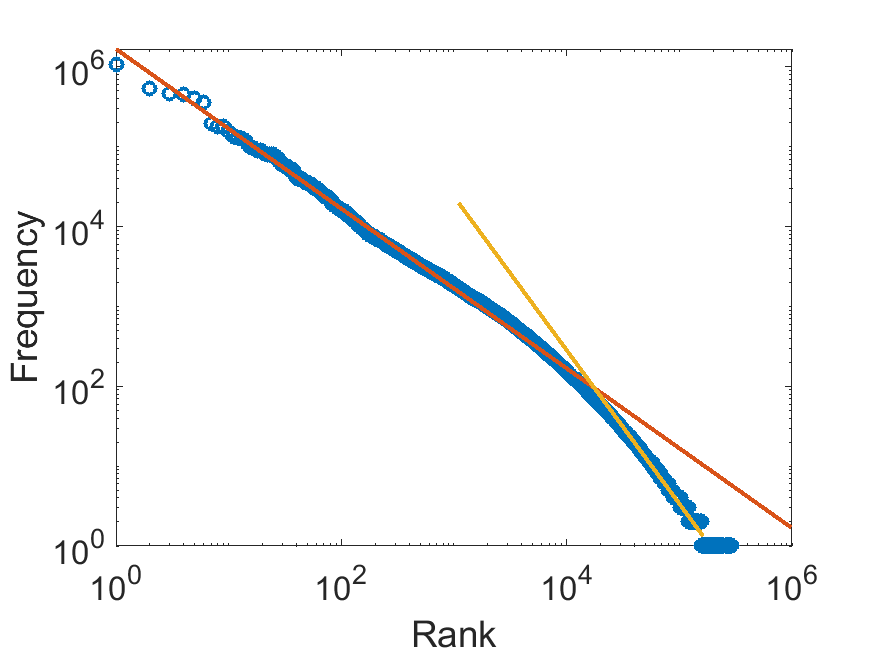}
\includegraphics[width=.31\textwidth]{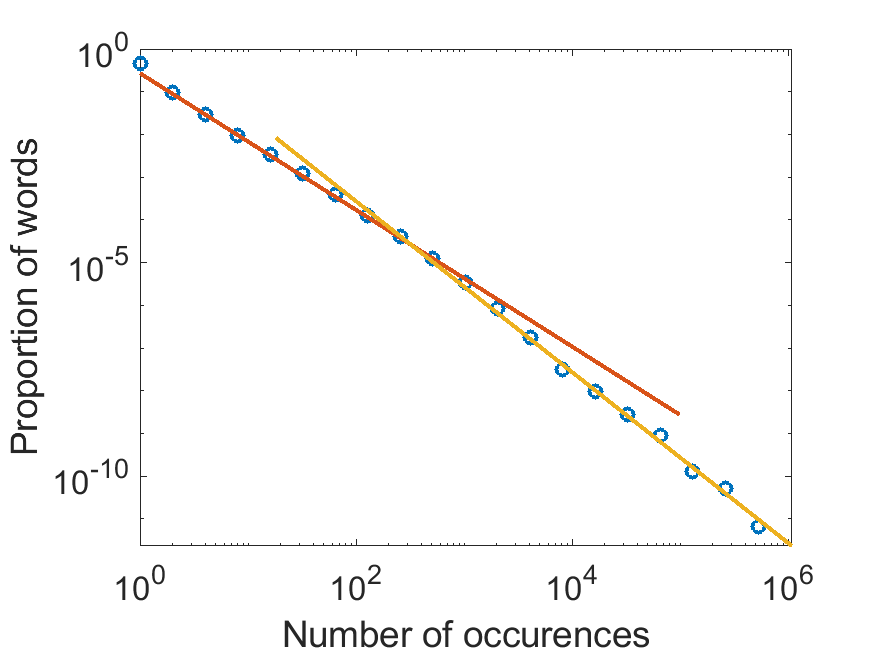}
\caption{(Top) Ranked word frequencies from the American National Corpus (circles) and power-law fit (straight lines). (Bottom) Proportion of words with a given number of occurences for the same dataset (circles) and power-law fit (straight lines).}
\label{fig:ANC_empirical}
\end{center}
\end{figure}

In this paper, we introduce a class of completely random measures (CRMs), named doubly regularly-varying CRMs. We show that, when a random measure in this class is normalized to obtain a random probability measure $P$, and one repeatedly samples from $P$, the resulting frequencies exhibit a double power-law behavior. Informally, the ranked frequencies satisfy
\begin{align}
  f_{(k)}\simeq
\left \{ \begin{array}{ll}
   C_1 k^{-1/\tau} & \text{for small rank } k\\
  C_2 k^{-1/\alpha} & \text{for large rank } k
\end{array}\right .\label{eq:doublePL}
\end{align}
where $\tau>0$, $\alpha\in(0,1)$ and $C_1,C_2>0$. The above statement is made mathematically accurate later in the article. We describe two general constructions to obtain doubly regularly varying CRMs, and consider two specific models within this class: the beta prime process of \citet{Broderick2015,Broderick2018} and a novel process named generalized BFRY process. We show how these two CRMs can be obtained from transformations of the generalized gamma and stable beta processes. We derive Markov chain Monte Carlo inference algorithms for these models, and show that such models provide a good fit compared to a Pitman-Yor process on text and network datasets.

\section{Background on (normalized) completely random measures}

CRMs, introduced by \citet{Kingman1967}, are important building blocks of Bayesian nonparametric models~\cite{Lijoi2010}. A homogeneous CRM on a Polish space $\Theta$, without deterministic component nor fixed atoms,  is almost surely (a.s.) discrete and takes the form
\begin{equation}
W=\sum_{k\geq 1} w_k\delta_{\theta_k}
\end{equation}
where $(w_k,\theta_k)_{k\geq 1}$ are the points of a Poisson point process with mean measure $\rho(dw)H(d\theta)$. $H$ is some probability distribution on $\Theta$, and $\rho$ is a L\'evy measure on $(0,\infty)$. We write $W\sim \CRM(\rho,H)$.
A popular CRM is the generalized gamma process (GGP)~\citep{Hougaard1986,Brix1999} with L\'evy measure
\begin{equation}
\rhoGGP(dw;\sigma,\zeta)=\frac{1}{\Gamma(1-\sigma)}w^{-1-\sigma}e^{-\zeta w}dw\label{eq:LevyGGP}
\end{equation}
where $\sigma\in(0,1)$ and $\zeta\geq 0$ or $\sigma\leq 0$ and $\zeta>0$. The GGP admits as special case the gamma process ($\sigma=0,\zeta>0$) and the stable process ($\sigma\in(0,1),\zeta=0$). If
\begin{equation}
\int_0^\infty \rho(dw)=\infty\label{eq:condinfinity}
\end{equation} then the CRM is said to be infinite-activity: it has an infinite number of atoms and the weights satisfy $0<W(\Theta)=\sum_{k=1}^\infty w_k <\infty$ a.s. We can therefore construct a random probability measure $P$ by normalizing the CRM~\citep{Regazzini2003,Lijoi2007}
\begin{equation}
P=\frac{W}{W(\Theta)}.
\end{equation}
We call $P$ a normalized CRM (NCRM) and write $P\sim \NCRM(\rho,H)$.
The Pitman-Yor process with parameters $\theta\geq 0$ and $\alpha\in [0,1)$ and distribution $H$, written $P\sim \PY (\alpha,\theta,H)$ admits a representation as a (mixture of) CRMs~\citep[Proposition 21]{Pitman1997}. If $\theta,\alpha>0$ it is a mixture of normalized generalized gamma processes
\begin{align}
\eta&\sim \Gam  \left ( \frac{\theta}{\alpha} , \frac{1}{\alpha} \right )\\
P\mid\eta &\sim \NCRM( \eta\rhoGGP(\cdot;\alpha,1) , H)
\end{align}
and for $\theta=0$, it is a normalized stable process
\begin{align}
P&\sim \NCRM(\rhoGGP(\cdot;\alpha,0),H).
\end{align}
Although this representation is more complicated than the usual stick-breaking or urn constructions of the PY, it will be useful later on when we will discuss its asymptotic properties. The above construction essentially tells us that the PY has the same asymptotic properties as the normalized GGP for $\theta>0$ and the stable process for $\theta=0$.

\section{Doubly regularly varying CRMs}
\label{sec:DRVCRM}

\subsection{General definition}


We first introduce a few definitions on regularly varying functions~\cite{Bingham1989}.

\begin{definition}[Slowly varying function]
A positive function $\ell$ on $(0,\infty)$ is slowly varying at infinity if for all $c>0$
$\ell(ct)/\ell(t)\rightarrow 1$ as $t \rightarrow +\infty$ Examples of slowly varying functions are constant functions, functions converging to a strictly positive constant, $(\log t)^a$ for any real $a$, etc.
\end{definition}

\begin{definition}[Regularly varying function]
A positive function $f$ on $(0,\infty)$ is said to be regularly varying at infinity with exponent $\xi\in\mathbb R$ if $f(x)=x^\xi\ell(x)$ where $\ell$ is a slowly varying function. Similarly, a function $f$ is said to be regularly varying at 0 if $f(1/x)$ is regularly varying at infinity, that is $f(x)=x^{-\xi}\ell(1/x)$ for some $\xi\in\mathbb R$ and some slowly varying function $\ell$.
\end{definition}
Informally, regularly varying functions with exponent $\xi\neq 0$ behave asymptotically similarly to a ``pure'' power-law function $g(x)=x^\xi$.

A homogeneous CRM $W$ on $\Theta$ with mean measure $\rho(dw)H(d\theta)$ is said to be doubly regularly varying if its tail L\'evy intensity
\begin{equation}
\overline \rho(x)=\int_x^\infty \rho(dw)
\label{eq:tailLevy}
\end{equation}
 is regularly varying at 0 and $\infty$, that is
\begin{align}\label{eq:doublyRV}
\overline \rho(x)\sim\left \{
\begin{array}{ll}
  x^{-\alpha}\ell_1(1/x) & \text{as }x\rightarrow 0 \\
  x^{-\tau}\ell_2(x) & \text{as }x\rightarrow \infty
\end{array}\right .
\end{align}
where $\alpha\in [0,1]$, $\tau \geq 0$ and $\ell_1$ and $\ell_2$ are slowly varying functions. The CRM is said to be doubly power-law if it is doubly regularly varying with exponents $\alpha>0$ and $\tau>0$. Note that in this case, the CRM necessarily satisfies condition \eqref{eq:condinfinity} and is therefore infinite activity.

\subsection{Properties}
In the following, let $w_{(1)}\geq w_{(2)}\geq \ldots$ denote the ordered weights of the CRM. The first proposition states that, if the CRM is regularly varying at 0 with exponent $\alpha>0$, the small weights asymptotically scale as a power-law (up to a slowly varying function). The proof is given in \cref{sec:proof_of_prop:PLCRMzero}.
\begin{proposition}\label{prop:PLCRMzero}
A CRM, regularly varying at 0 with exponent $\alpha>0$, satisfies
\begin{equation}
w_{(k)}\sim k^{-1/\alpha}\ell_1^*(k)~~~\text{as }k\rightarrow\infty
\end{equation}
where $\ell_1^*$ is a slowly varying function whose expression, which depends on $\ell_1$ and $\alpha$, is given in \cref{sec:proof_of_prop:PLCRMzero}.
\end{proposition}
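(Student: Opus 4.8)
The plan is to reduce the statement to the asymptotic inversion of a regularly varying function. First I would express the ordered weights through the underlying Poisson process. Since $\{w_k\}$ are the points of a Poisson process on $(0,\infty)$ with intensity $\rho$ and $\overline\rho$ is non-increasing and right-continuous, the mapping theorem shows that $\{\overline\rho(w_k)\}_k$ is a unit-rate Poisson process on $(0,\infty)$. Because $\overline\rho$ is non-increasing, the $k$-th largest weight $w_{(k)}$ corresponds to the $k$-th smallest point of this image process, so with $\overline\rho^{\leftarrow}(u)=\inf\{x>0:\overline\rho(x)\le u\}$ denoting the generalized inverse we obtain the representation
\begin{equation}
w_{(k)}=\overline\rho^{\leftarrow}(\Gamma_k),\qquad \Gamma_k=\sum_{i=1}^k E_i,
\end{equation}
where $E_1,E_2,\ldots$ are i.i.d.\ unit-rate exponentials. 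Because $\alpha>0$ forces $\overline\rho(x)\to\infty$ as $x\to 0$, the arguments $\Gamma_k\to\infty$ probe exactly the small-$x$ regime of $\overline\rho$ where the hypothesis applies.

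The second step is the strong law of large numbers, which gives $\Gamma_k/k\to 1$ almost surely. It then remains to understand $\overline\rho^{\leftarrow}$ near infinity. Setting $g(t)=\overline\rho(1/t)$, the hypothesis $\overline\rho(x)\sim x^{-\alpha}\ell_1(1/x)$ as $x\to 0$ reads $g(t)\sim t^{\alpha}\ell_1(t)$ as $t\to\infty$, so $g$ is regularly varying at infinity with index $\alpha>0$. By the theory of asymptotic inverses of regularly varying functions (\citealp[Prop.~1.5.15]{Bingham1989}), $g$ admits an asymptotic inverse $g^{\leftarrow}$ that is regularly varying at infinity with index $1/\alpha$, and $\overline\rho^{\leftarrow}(u)=1/g^{\leftarrow}(u)$, so $\overline\rho^{\leftarrow}$ is regularly varying at infinity with index $-1/\alpha$. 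A short computation identifies its slowly varying factor in terms of $\ell_1$, producing the explicit $\ell_1^*$ quoted in the statement (heuristically $\ell_1^*(k)=[\ell_1(k^{1/\alpha})]^{1/\alpha}$, up to the de Bruijn conjugate correction required by the exact inversion).

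Finally I would combine the two ingredients. Writing $\overline\rho^{\leftarrow}(u)=u^{-1/\alpha}\ell_1^*(u)$ with $\ell_1^*$ slowly varying, we get
\begin{equation}
w_{(k)}=\overline\rho^{\leftarrow}(\Gamma_k)=\Gamma_k^{-1/\alpha}\,\ell_1^*(\Gamma_k)\sim k^{-1/\alpha}\,\ell_1^*(k).
\end{equation}
The power factor is immediate since $\Gamma_k=k(1+o(1))$ gives $\Gamma_k^{-1/\alpha}\sim k^{-1/\alpha}$; to replace $\ell_1^*(\Gamma_k)$ by $\ell_1^*(k)$ one invokes the uniform convergence theorem for slowly varying functions (\citealp[Thm.~1.2.1]{Bingham1989}), which, given $\Gamma_k/k\to 1$, yields $\ell_1^*(\Gamma_k)/\ell_1^*(k)\to 1$ almost surely.

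The main obstacle is the rigorous asymptotic inversion coupled with the substitution $\Gamma_k\mapsto k$ inside the slowly varying part: because the inverse of a regularly varying function is only an \emph{asymptotic} inverse, the de Bruijn conjugate of $\ell_1$ enters the exact form of $\ell_1^*$, and one must verify that the error incurred by replacing $\Gamma_k$ by $k$ is negligible through the uniform (not merely pointwise) control supplied by the uniform convergence theorem. The Poisson representation and the law of large numbers are routine; essentially all the care is concentrated in handling the slowly varying functions.
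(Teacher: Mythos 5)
Your proof is correct, but it takes a genuinely different route from the paper's. You work on the weights directly through the inverse-L\'evy (Ferguson--Klass) representation $w_{(k)}=\overline\rho^{\leftarrow}(\Gamma_k)$, reduce the probabilistic content to the strong law $\Gamma_k/k\to 1$, and then carry out the analysis yourself: asymptotic inversion of the regularly varying tail $\overline\rho$ via the de Bruijn conjugate \citep[Proposition 1.5.15]{Bingham1989}, followed by the uniform convergence theorem to replace $\ell_1^*(\Gamma_k)$ by $\ell_1^*(k)$. The paper instead works on the counting function: it proves a Poisson concentration lemma (Chernoff bound plus Borel--Cantelli, \cref{lemma:poisson_asymp}) to get $\nu([1/x,+\infty))\sim \overline\rho(1/x)$ almost surely along integers, interpolates by monotonicity, then renormalizes the weights into a probability vector and invokes \citet[Proposition 23]{Gnedin2007} to transfer the regular variation of the counting function to the ranked masses, the de Bruijn conjugate entering implicitly through that cited result. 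Your route is the same one the paper itself uses for \cref{prop:PLCRMinfinity}, is more elementary on the probabilistic side (SLLN rather than concentration estimates), and makes the origin of $\ell_1^*$ explicit; the paper's route establishes counting-function asymptotics in exactly the form that is reused for \cref{cor:prop}, so it does double duty. Both arguments rest on the same inversion machinery and yield the same $\ell_1^*$.

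One small technical caveat: your mapping-theorem step, asserting that $\{\overline\rho(w_k)\}_k$ is a unit-rate Poisson process, needs $\overline\rho$ to be continuous (i.e.\ $\rho$ atomless), not merely right-continuous, and strict decrease to identify $w_{(k)}$ with $\overline\rho^{\leftarrow}(\Gamma_k)$ pointwise. In general one should invoke the Ferguson--Klass representation directly, which gives $(w_{(k)})_{k\geq 1}\overset{d}{=}(\overline\rho^{\leftarrow}(\Gamma_k))_{k\geq 1}$ with the generalized inverse; equality in distribution of the whole sequence suffices here, since the conclusion is an almost-sure asymptotic property determined by the law of that sequence.
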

 The next proposition states that, if the CRM is regularly varying at infinity with $\tau>0$ and the scaling factor of the L\'evy measure is large, the CRM has a power-law behavior for large weights.
\begin{proposition}{\textbf{[\citet[Theorem 1.2]{Kevei2014}]}}\label{prop:PLCRMinfinity}
Consider a CRM with mean measure $\eta\rho(dw)H(d\theta)$, regularly varying at $\infty$ with $\tau>0$. Then, for any $k_1,k_2\geq 1$
\begin{equation}
\frac{w_{(k_1+k_2)}^\tau}{w^\tau_{(k_1)}}\overset{d}{\rightarrow} \Bet(k_1,k_2)~~\text{as }\eta\rightarrow \infty.\label{eq:PLCRMinfinity}
\end{equation}
\end{proposition}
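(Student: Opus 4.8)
The plan is to exploit the Poisson structure of the weights together with a time-change that turns the large atoms of the CRM into the arrival times of a unit-rate Poisson process on $(0,\infty)$. Write $N=\sum_k\delta_{w_k}$ for the Poisson point process on $(0,\infty)$ with intensity $\eta\rho(dw)$ (the locations $\theta_k$ play no role here). Since $\overline\rho$ is strictly decreasing with $\overline\rho(w)\to 0$ as $w\to\infty$, the map $w\mapsto\eta\overline\rho(w)$ pushes $\eta\rho$ forward to Lebesgue measure on $(0,\infty)$: the $\eta\rho$-mass of $\{w:\eta\overline\rho(w)\leq u\}=\{w\geq w^*\}$ with $\overline\rho(w^*)=u/\eta$ equals $\eta\overline\rho(w^*)=u$. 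Hence the points $\eta\overline\rho(w_k)$ form a standard unit-rate Poisson process, and ordering the weights decreasingly as $w_{(1)}\geq w_{(2)}\geq\cdots$ corresponds to ordering these images increasingly. Writing $\Gamma_k=E_1+\cdots+E_k$ for the $k$-th arrival time (a sum of i.i.d.\ standard exponentials, so $\Gamma_k\sim\Gam(k,1)$), I obtain the exact representation
\begin{equation*}
\eta\,\overline\rho(w_{(k)})=\Gamma_k,\qquad\text{equivalently}\qquad w_{(k)}=\overline\rho^{\leftarrow}\!\left(\frac{\Gamma_k}{\eta}\right),
\end{equation*}
where $\overline\rho^{\leftarrow}$ denotes the generalized inverse of $\overline\rho$.

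Second, I would pass to the ratio and invoke regular variation. Because $\overline\rho$ is regularly varying at $\infty$ with index $-\tau$, its generalized inverse $\overline\rho^{\leftarrow}$ is regularly varying at $0$ with index $-1/\tau$ (inversion of regularly varying functions, \cite{Bingham1989}). Therefore
\begin{equation*}
\left(\frac{w_{(k_1+k_2)}}{w_{(k_1)}}\right)^{\tau}
=\left(\frac{\overline\rho^{\leftarrow}(\Gamma_{k_1+k_2}/\eta)}{\overline\rho^{\leftarrow}(\Gamma_{k_1}/\eta)}\right)^{\tau}.
\end{equation*}
As $\eta\to\infty$, both arguments tend to $0$ while their ratio $\Gamma_{k_1+k_2}/\Gamma_{k_1}$ stays fixed, and the uniform convergence theorem for regularly varying functions yields $\overline\rho^{\leftarrow}(\lambda a)/\overline\rho^{\leftarrow}(a)\to\lambda^{-1/\tau}$ as $a\to 0$; the slowly varying factor cancels, and the displayed quantity converges to $\Gamma_{k_1}/\Gamma_{k_1+k_2}$.

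Third, I identify the limit law. Decomposing $\Gamma_{k_1+k_2}=\Gamma_{k_1}+(\Gamma_{k_1+k_2}-\Gamma_{k_1})$ as a sum of independent $\Gam(k_1,1)$ and $\Gam(k_2,1)$ variables, the ratio $\Gamma_{k_1}/\Gamma_{k_1+k_2}$ is the classical beta representation $X/(X+Y)$ and hence follows a $\Bet(k_1,k_2)$ law. Chaining the three steps gives \eqref{eq:PLCRMinfinity}.

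The main obstacle is the interchange of the $\eta\to\infty$ limit with the random ratio in the second step: the uniform convergence theorem provides convergence of $\overline\rho^{\leftarrow}(\lambda a)/\overline\rho^{\leftarrow}(a)$ only uniformly for $\lambda$ in compact subsets of $(0,\infty)$, whereas here the argument $\lambda=\Gamma_{k_1+k_2}/\Gamma_{k_1}$ is itself random. I would resolve this by conditioning on $(\Gamma_{k_1},\Gamma_{k_1+k_2})$, which are a.s.\ finite with $\Gamma_{k_1}$ bounded away from $0$ and $\lambda\geq 1$, and truncating on the high-probability event $\{\lambda\leq M\}$, sending $M\to\infty$ after $\eta\to\infty$. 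On this event the convergence is uniform, giving an a.s.\ conditional limit, and a Slutsky/continuous-mapping argument then upgrades it to the asserted convergence in distribution.
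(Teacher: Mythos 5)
Your proof is correct and follows essentially the same route as the paper's: the inverse-L\'evy (time-change) representation $w_{(k)}\overset{d}{=}\overline\rho^{\leftarrow}(\Gamma_k/\eta)$, regular variation of the generalized inverse at $0$ with index $-1/\tau$, and the Gamma--Beta identity $\Gamma_{k_1}/\Gamma_{k_1+k_2}\sim\Bet(k_1,k_2)$. The only substantive difference is that your final conditioning/truncation step is unnecessary: in this coupling the variables $\Gamma_{k_1},\Gamma_{k_1+k_2}$ do not depend on $\eta$, so pointwise regular variation applied realization by realization already yields almost sure convergence of the ratio to $(\Gamma_{k_1}/\Gamma_{k_1+k_2})^{1/\tau}$, which is exactly how the paper concludes.
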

Note that Equation~\eqref{eq:PLCRMinfinity} indicates a power-law behavior with exponent $1/\tau$, as for large $\eta$ and $k\gg1$, $w_{(k)}\simeq w_{(1)} k^{-1/\tau}$.

\paragraph{GGP and stable process.} The GGP with parameter $\zeta>0$ is regularly varying at 0 with exponent $\alpha=\max(0,\sigma)$. Hence, it satisfies Proposition~\eqref{prop:PLCRMzero}. However, the exponential decay of the tails of the L\'evy measure implies that it is not regularly varying at $\infty$. Large weights therefore decay exponentially fast.
The stable process, which is a GGP with parameter $\zeta=0$ and $\sigma\in(0,1)$, is doubly regularly-varying with the same power-law exponent $\sigma$ at 0 and $\infty$. Hence, it satisfies \cref{prop:PLCRMzero}. Additionally, \citet[Proposition 8]{Pitman1997} showed that the result of \cref{prop:PLCRMinfinity} holds non-asymptotically for the stable process. In particular, for all $k\geq 1$, $w_{(k+1)}/w_{(k)}\sim \Bet(k\sigma, 1)$.

In \cref{sec:generalconstructiondoubly}, we describe two general constructions for obtaining doubly regularly varying CRMs. Then we describe two specific processes with doubly regularly varying tail L\'evy measure where one can flexibly tune both exponents. In the rest of the paper, we assume that the L\'evy measure $\rho$ is absolutely continuous with respect to the Lebesgue measure, and use the same notation for its density $\rho(dw)=\rho(w)dw$.


\subsection{Construction of doubly regularly varying CRMs}
\label{sec:generalconstructiondoubly}
\paragraph{Scaled-CRM.} A first way of constructing a doubly regularly varying CRMs is to consider a CRM, regularly varying at 0, and to divide its weights by independent and identically distributed (iid) random variables, whose cumulative density function (cdf) is also regularly varying at 0. More precisely, let \begin{align}
W=\sum_{k\geq 1}\frac{w_{0k}}{z_k}\delta_{\theta_k}\label{eq:scaledCRM}
\end{align}
where $(z_1,z_2,\ldots)$ are strictly positive, continuous and iid random variables with cumulative density function $F_Z(z)$ and locally bounded probability density function $f_z(z)$, and $$W_0=\sum_{k\geq 1}w_{0k}\delta_{\theta_k}\sim\CRM(\rho_0,H)$$ where $\overline\rho_0(x)$ and $F_Z(z)$ are both regularly varying functions at 0, that is, for some $\alpha\in(0,1)$ and $\tau>\alpha$,
\begin{align}
\overline\rho_0(x) &\sim   x^{-\alpha}\ell_1(1/x)\label{eq:conditionsscale1} \\
F_Z(z) &\sim   z^{\tau}\ell_2(1/z).\label{eq:conditionsscale2}
\end{align}
The random measure $W$ is a CRM $W\sim \CRM(\rho,H)$ where
$$\rho(w) = \int_0^\infty z f_Z(z) \rho_0(wz) dz.$$
The next proposition shows that $W$ is doubly regularly varying.
\begin{proposition}\label{prop:scaling}
Assume that $\overline\rho_0$ and $F_Z$ verify Equations~\eqref{eq:conditionsscale1} and~\eqref{eq:conditionsscale2}. Additionally, suppose $x\rho(x)$ and $f_z$ are ultimately bounded and that there exists $\beta > \tau$, such that $\mu_\beta = \int_0^\infty w^\beta \rho_0(w) dw < \infty$. Then the CRM $W$ defined by Equation~\eqref{eq:scaledCRM} is doubly regularly varying, with
\begin{align*}
\overline \rho(x)\sim\left \{
\begin{array}{ll}
  \mathbb{E}(Z^{-\alpha}) x^{-\alpha}\ell_1(1/x) & \text{as }x\rightarrow 0 \\
  \mu_\tau  x^{-\tau}\ell_2(x) & \text{as }x\rightarrow \infty
\end{array}\right . .
\end{align*}
where $Z$ is a random variable with cdf $F_Z$.
\end{proposition}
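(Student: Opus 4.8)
The plan is to reduce everything to a single tail identity. Starting from $\rho(w)=\int_0^\infty z f_Z(z)\rho_0(wz)\,dz$, the substitution $u=wz$ inside $\overline\rho(x)=\int_x^\infty\rho(w)\,dw$ and Fubini give
\[
\overline\rho(x)=\int_0^\infty \overline\rho_0(xz)\,f_Z(z)\,dz=\mathbb{E}\big[\overline\rho_0(xZ)\big],
\]
where $Z\sim F_Z$; integrating this by parts (the boundary term at $0$ is controlled by $\overline\rho_0(xz)F_Z(z)\sim x^{-\alpha}z^{\tau-\alpha}\ell_1\ell_2\to 0$ since $\tau>\alpha$, and the term at $\infty$ vanishes because $\overline\rho_0(\infty)=0$) yields the equivalent form
\[
\overline\rho(x)=\int_0^\infty F_Z(u/x)\,\rho_0(u)\,du.
\]
Both asymptotics then follow by dominated convergence after dividing by the conjectured equivalent, and the whole difficulty is the construction of integrable dominating functions, which is exactly where the hypotheses enter. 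Local boundedness of $f_Z$ and ultimate boundedness of $f_Z$ and $x\rho(x)$ are what make $\rho$ a genuine locally integrable L\'evy density and legitimize the Fubini and integration-by-parts steps.

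For the behavior at $0$ I would divide the first identity by $x^{-\alpha}\ell_1(1/x)$ and observe that, for fixed $z$, regular variation of $\overline\rho_0$ at $0$ together with slow variation of $\ell_1$ give $\overline\rho_0(xz)/(x^{-\alpha}\ell_1(1/x))\to z^{-\alpha}$, whose integral against $f_Z$ is $\mathbb{E}(Z^{-\alpha})$. To justify the interchange I would fix $\epsilon\in(0,\tau-\alpha)$, apply the uniform Potter bound to dominate the ratio on $\{z\le\delta/x\}$ by $C\max(z^{-\alpha-\epsilon},z^{-\alpha+\epsilon})$, and check $f_Z$-integrability: finiteness near $0$ amounts to $\mathbb{E}(Z^{-\alpha-\epsilon})<\infty$, which holds precisely because $F_Z(z)\sim z^\tau\ell_2(1/z)$ with $\tau>\alpha$ (integrate $z^{-\alpha-\epsilon}\,dF_Z$ by parts against $z^\tau$), while near $+\infty$ the bound is at most $1$ and hence integrable. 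The complementary region $\{z>\delta/x\}$ is negligible, since there $\overline\rho_0(xz)\le\overline\rho_0(\delta)$ and the prefactor $x^{\alpha}/\ell_1(1/x)\to 0$ as $x\to 0$ because $\alpha>0$.

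The behavior at $\infty$ is the more delicate half, and I would run the same scheme on the second identity, dividing by $x^{-\tau}\ell_2(x)\sim F_Z(1/x)$. For fixed $u$, regular variation of $F_Z$ at $0$ and slow variation of $\ell_2$ give $F_Z(u/x)/(x^{-\tau}\ell_2(x))\to u^\tau$, whose integral against $\rho_0$ is $\mu_\tau$ (finite since $\tau>\alpha$ controls the integrand near $0$ and $\tau<\beta$ with $\mu_\beta<\infty$ controls it near $\infty$). Here I would take $\epsilon<\min(\tau-\alpha,\beta-\tau)$ and use the Potter bound for $F_Z$ to dominate the integrand on $\{u\le\delta x\}$ by $C\max(u^{\tau-\epsilon},u^{\tau+\epsilon})\rho_0(u)$; integrability near $0$ uses $\tau-\epsilon>\alpha$ and $\overline\rho_0(u)\sim u^{-\alpha}\ell_1(1/u)$ (again via integration by parts, so no monotone-density theorem is needed), and integrability near $+\infty$ uses $\tau+\epsilon<\beta$ with $\mu_\beta<\infty$. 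The tail $\{u>\delta x\}$ contributes at most $x^{\tau}\overline\rho_0(\delta x)/\ell_2(x)\le C\,x^{\tau-\beta}/\ell_2(x)$, after bounding $\overline\rho_0(\delta x)\le(\delta x)^{-\beta}\mu_\beta$ by Markov's inequality, and this vanishes because $\beta>\tau$ makes the polynomial factor beat the slowly varying $\ell_2$.

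I expect the main obstacle to be not any single estimate but the uniform control required for dominated convergence: producing the two Potter-type dominating functions and matching their integrability at $0$ and at $\infty$ simultaneously to the standing conditions $\tau>\alpha$ and $\tau<\beta$. Choosing the split point $\delta x$ (respectively $\delta/x$) and the admissible range of $\epsilon$ so that both ends line up is the crux of the argument.
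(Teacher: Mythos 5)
Your proof is correct, but it takes a genuinely different route from the paper's. The paper works at the level of the L\'evy \emph{density}: it rewrites $\rho(w)=\int_0^\infty f_Y(y)\,\rho_0(w/y)\,y^{-1}dy$ with $f_Y(y)=y^{-2}f_Z(1/y)$, a Mellin-type convolution, and applies the kernel Abelian theorem of \citet[Theorem 4.1.6]{Bingham1989} twice (once with kernel $\rho_0$ for the behaviour at infinity, once with the roles of kernel and function exchanged for the behaviour at $0$), identifying the limiting constant through the Mellin transform $\check\rho_0(-1-\tau)=\mu_\tau$; the stated asymptotics for $\overline\rho$ then follow by Karamata integration. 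You instead work directly with the \emph{tail}, through the exact identities $\overline\rho(x)=\mathbb{E}\left[\overline\rho_0(xZ)\right]=\int_0^\infty F_Z(u/x)\,\rho_0(u)\,du$, then Potter bounds plus dominated convergence, with the far regions $\{z>\delta/x\}$ and $\{u>\delta x\}$ handled by monotonicity and Markov's inequality. Your route is more elementary and self-contained (it is the scaled-L\'evy-measure analogue of Breiman's lemma), and it never actually uses the hypotheses that $x\rho(x)$ and $f_z$ are ultimately bounded: you attribute them to ``legitimizing Fubini and integration by parts'', but Tonelli is free for non-negative integrands --- indeed your second identity follows from the first by writing $\overline\rho_0(xz)=\int_0^\infty\1{u\geq xz}\rho_0(u)\,du$ and swapping, with no integration by parts at all. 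In the paper, those hypotheses are what verify the boundedness and Mellin-convergence conditions of Theorem 4.1.6; so your argument proves the proposition under slightly weaker assumptions, while the paper's route buys the stronger density-level asymptotics for $\rho(w)$ itself, which a tail-level argument cannot recover without a monotone-density theorem. Two trivial repairs: in the $x\to0$ half, your claim that the Potter majorant is at most $1$ for $z\geq1$ requires $\epsilon\leq\alpha$, so you should take $\epsilon<\min(\alpha,\tau-\alpha)$; and you should note explicitly that $\mathbb{E}(Z^{-\alpha})<\infty$, which follows from the same moment computation since $\alpha<\tau$.
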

In \cref{sec:GBFRYmodel} and \cref{sec:betaprimemodel} we present two specific models constructed via a scaled GGP.

\paragraph{Discrete Mixture.} An alternative to the scaled-CRM construction is to consider that the CRM is the sum of two CRMs, one regularly varying at 0 (hence infinite activity), the second one regularly varying at infinity. More precisely, consider the L\'evy density
\begin{equation}
\rho(w)=\rho_0(w)+\beta f(w)
\end{equation}
where $\rho_0$ is a L\'evy measure, regularly varying at 0, and $f$ is the probability density function of a random variable with power-law tails. That is $\rho_0$ satisfies \eqref{eq:conditionsscale1} and
\begin{align*}
\int_x^\infty f(t)dt &\sim x^{-\tau} \ell_2(x)~~\text{ as }x\rightarrow \infty.
\end{align*}
If we additionally assume that $\overline\rho_0(x)$ has light tails at infinity (e.g. exponentially decaying tails), then the resulting CRM $\rho$ is then doubly regularly varying and satisfies Equation~\eqref{eq:doublyRV}. For example, one can take for $\rho_0$ the L\'evy density \eqref{eq:LevyGGP} of a GGP, and for $f$ the pdf of a Pareto, generalized Pareto or inverse gamma distribution.

\subsection{Generalized BFRY process}
\label{sec:GBFRYmodel}
Consider the L\'evy density
\begin{equation}
\rho(w)=\frac{1}{\Gamma(1-\sigma)}w^{-1-\tau}\gamma(\tau-\sigma,cw)\label{eq:LevygenBFRY}
\end{equation}
where $\gamma(\kappa,x)=\int_{0}^{x}u^{\kappa-1}e^{-u}du$ is the lower
incomplete gamma function and the parameters satisfy $\sigma\in(-\infty,1)$,
$\tau>\max(0,\sigma)$ and $c>0$. We have
\begin{equation}
\overline \rho(x)\sim \frac{\Gamma(\tau-\sigma)}{\tau\Gamma(1-\sigma)}x^{-\tau}
\label{eq:BFRYRV1}
\end{equation}
as $x$ tends to infinity and, for $\sigma>0$,
\begin{equation}
\overline \rho(x)\sim \frac{c^{\tau-\sigma}}{\sigma(\tau-\sigma)\Gamma(1-\sigma)}x^{-\sigma}
\label{eq:BFRYRV2}
\end{equation}
as $x$ tends to 0. When $\sigma\leq 0$,  $\overline \rho(x)$ is a slowly varying function, with $\lim_{x\rightarrow0}\overline \rho(x)= \infty$ if $\sigma=0$ and $\lim_{x\rightarrow0}\overline \rho(x)< \infty$ if $\sigma<0$. $\overline \rho(x)$ therefore satisfies Equation~\eqref{eq:doublyRV} with $\alpha=\max(\sigma,0)$. When $\sigma>0$, it is doubly power-law with exponent $\sigma\in(0,1)$ and $\tau>0$.

The L\'evy density \eqref{eq:LevygenBFRY} admits the following latent construction as a scaled-GGP. Note that
\begin{align*}
\rho(w)  &  =\frac{c^{\tau-\sigma}}{\tau} \int_{0}^{1}z\rhoGGP(wz;\sigma,c)f_{Z}(z)dz
\end{align*}
where $f_{Z}(z)=\tau z^{\tau-1}$ is the probability density function of a $\Bet(\tau,1)$ random variable. We therefore have the hierarchical construction. For $k\geq 1$,
\begin{align*}
w_{k}  &  =\frac{w_{0k}}{\beta_{k}}\text{, }\beta_{k}\sim \Bet(\tau,1).
\end{align*}
where $(w_{0k})_{k\geq1}$ are the points of a Poisson process with mean measure $c^{\tau-\sigma}/\tau \rhoGGP(w_0;\sigma,c)dw_0$.

The process is somewhat related to, and can be seen as a natural generalization of the BFRY distribution~\citep{Pitman1997,Winkel2005,Bertoin2006}. The name was coined by \citet{Devroye2014} after the work of Bertoin, Fujita, Roynette and Yor. This distribution has recently found various applications in machine learning~\cite{Lee2016,Lee2017}. Taking $c=1$, $\tau\in (0,1)$ and $\sigma=\tau -1<0$, we have
$$\rho(w)\propto w^{-\tau-1}(1-e^{-w})$$
which corresponds to the unnormalized pdf of a BFRY random variable. The BFRY random variable admits a representation as the ratio of a gamma and beta random variable, and the stochastic process introduced in this section, which admits a similar construction, can be seen as a natural generalization of the BFRY distribution, and we call this process a generalized BFRY (GBFRY) process. In \cref{sec:genBFRY}, we provide more details on the BFRY distribution and its generalization.

%

\subsection{Beta prime process}
\label{sec:betaprimemodel}
Consider the L\'evy density
\begin{equation}
\rho(w) = \frac{\Gamma(\tau-\sigma)}{\Gamma(1-\sigma)} w^{-1-\sigma}(c+w)^{\sigma - \tau}
\label{eq:Levybetaprime}
\end{equation}
where $\sigma\in(-\infty,1)$, $\tau> 0$ and $c>0$. This density is an extension of the beta prime (BP) process, with an additional tuning parameter. This process was introduced by \citet{Broderick2015} and generalized by \citet{Broderick2018}, as a conjugate prior for odds Bernoulli process.
We have
\begin{equation}
\overline \rho(x)\sim \frac{\Gamma(\tau-\sigma)}{\tau\Gamma(1-\sigma)}x^{-\tau}
\label{eq:betaprimeRV1}
\end{equation}
as $x$ tends to infinity and, for $\sigma>0$,
\begin{equation}
\overline \rho(x)\sim \frac{c^{\sigma-\tau}\Gamma(\tau-\sigma)}{\sigma\Gamma(1-\sigma)}x^{-\sigma}
\label{eq:betaprimeRV2}
\end{equation}
as $x$ tends to 0. When $\sigma\leq 0$,  $\overline \rho(x)$ is a slowly varying function, with $\lim_{x\rightarrow0}\overline \rho(x)= \infty$ if $\sigma=0$ and $\lim_{x\rightarrow0}\overline \rho(x)< \infty$ if $\sigma<0$. $\overline \rho(x)$ therefore satisfies Equation~\eqref{eq:doublyRV} with $\alpha=\max(\sigma,0)$. When $\sigma>0$, it is doubly power-law with exponent $\sigma\in(0,1)$ and $\tau>0$.

The BP process is related to the stable beta process~\citep{Teh2009} with L\'evy density
\[
\frac{\alpha\Gamma(\tau-\sigma)}{ c^\tau \Gamma(1-\sigma)} u^{-1-\sigma} (1-u)^{\tau-1} \1{u\in(0,1)},
\]
via the transformation $w = \frac{cu}{1-u}$. Similarly to the generalized BFRY model, the beta prime process can also be obtained via a scaled GGP. Note that
\begin{align*}
\rho(w)  &  = \Gamma(\tau) c^{-\tau} \int_{0}^{\infty}y\rhoGGP(wy;\sigma,c)f_{Y}(y)dy
\end{align*}
where $f_Y(y) = \frac{c^\tau y^{\tau-1} e^{-cy}}{\Gamma(\tau)}$ is the density of a $\Gam(\tau, c)$ random variable. We therefore have the following hierarchical construction, for $k\geq 1$
$$
w_k = \frac{w_{0k}}{\gamma_k},~~~\gamma_k \sim \Gam(\tau, c)$$
where $(w_{0k})_{k\geq1}$ are the points of a Poisson process with mean measure $c^{-\tau}\Gamma(\tau) \rhoGGP(w_0;\sigma,1)dw_0$.

\section{Normalized CRMs with double power-law}

For some probability distribution $H$, L\'evy measure $\rho$ satisfying Equation~\eqref{eq:condinfinity} and $\eta>0$, let $$P=\frac{W}{W(\Theta)}\text{ where } W\sim \CRM(\eta\rho,H)$$ and for $i=1,\ldots,n$,
$X_i\mid P\overset{i.i.d.}{\sim} P.$
As $P$ is a.s. discrete, there will be repeated values within the sequence $(X_i)_{i\geq 1}$. Let $K_n\leq n$ be the number of unique values in $(X_1,\ldots,X_n)$, and $m_{n,(1)}\geq m_{n,(2)}\geq \ldots\geq m_{n,(K_n)}$ their ranked multiplicities. For $k=1,\ldots,K_n$, denote $f_{n,(k)}=\frac{m_{n,(k)}}{n}$ the ranked frequencies.

\subsection{Double power-law properties}

The following theorem provides a precise formulation of Equation~\eqref{eq:doublePL} and shows that the ranked frequencies have a double power-law regime when the CRM is doubly regularly varying with stricly positive exponents.
\begin{theorem}\label{th:doublePLranks}
The ranked frequencies satisfy
\begin{equation}
\left (f_{n,(1)},f_{n,(2)},\ldots \right )\rightarrow \left (\frac{w_{(1)}}{W(\Theta)},\frac{w_{(2)}}{W(\Theta)},\ldots\right )\label{eq:empiricalfreq}
\end{equation}
almost surely as $n$ tends to infinity. If the CRM is regularly varying at 0 with exponent $\alpha>0$ we have
\begin{equation}
\frac{w_{(k)}}{W(\Theta)} \sim W(\Theta)^{-1} k^{-1/\alpha}\ell_1^*(k)\text{ as }k\rightarrow\infty.\label{eq:rankedinf}
\end{equation}
If the CRM is regularly varying at $\infty$ with exponent $\tau>0$ we have, for any $k_1,k_2\geq 1$ 
\begin{equation}
\frac{w^\tau_{(k_1+k_2)}}{w^\tau_{(k_1)}}\overset{d}{\rightarrow} \Bet(k_1,k_2)~~\text{as }\eta\rightarrow \infty.\label{eq:rankedzero}
\end{equation}
\end{theorem}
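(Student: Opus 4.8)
The plan is to prove the three displayed assertions in turn, observing that \eqref{eq:rankedinf} and \eqref{eq:rankedzero} are near-immediate consequences of results already established, so that essentially all of the work lies in the almost sure convergence \eqref{eq:empiricalfreq}.

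For \eqref{eq:empiricalfreq}, I would argue conditionally on $W$ (equivalently on $P$), so that $X_1,X_2,\ldots$ are i.i.d. draws from the fixed discrete law $P=\sum_{k\geq 1}\pi_k\delta_{\theta_k}$, where $\pi_k=w_k/W(\Theta)$ and $\sum_k\pi_k=1$. Since $\rho$ is absolutely continuous, the weights $(w_k)$, and hence the $(\pi_k)$, are a.s. distinct and strictly positive. The first ingredient is the strong law of large numbers applied atom by atom: for each fixed $k$ the empirical frequency $n^{-1}\sum_{i=1}^n\indicator{X_i=\theta_k}$ converges a.s. to $\pi_k$, and intersecting countably many almost sure events this holds simultaneously for all $k$. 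In particular each atom is sampled infinitely often, so the number of distinct values $K_n\to\infty$ a.s. and every coordinate $f_{n,(j)}$ is eventually well defined.

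The main step is to pass from convergence of the per-atom frequencies to convergence of the \emph{ranked} frequencies, for which the difficulty is that the empirical ranking need not match the weight ranking at finite $n$, and that the infinitely many low-weight atoms could in principle conspire to produce a spuriously large empirical frequency. I would control this by fixing $\epsilon>0$ and choosing $K$ with tail mass $\sum_{k>K}\pi_{(k)}<\epsilon$, which is possible as the series $\sum_k\pi_k$ converges. Applying the strong law to the event that a draw falls in an atom of weight-rank exceeding $K$, the \emph{total} empirical frequency carried by those atoms converges a.s. to $\sum_{k>K}\pi_{(k)}<\epsilon$; hence every such atom eventually has empirical frequency below $\epsilon$. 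Together with the a.s. convergence of the top $K$ empirical frequencies to the strictly ordered limits $\pi_{(1)}>\cdots>\pi_{(K)}$, the gaps between these limits force the empirical ranking to coincide with the weight ranking on the leading coordinates for all large $n$. Letting $\epsilon\to 0$ and $K\to\infty$ then yields the coordinatewise limit $f_{n,(j)}\to\pi_{(j)}=w_{(j)}/W(\Theta)$ for every $j$, which is \eqref{eq:empiricalfreq} in the product topology; since the argument holds for almost every realization of $W$, it holds a.s. unconditionally.

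Finally, \eqref{eq:rankedinf} follows directly from \cref{prop:PLCRMzero}, which gives $w_{(k)}\sim k^{-1/\alpha}\ell_1^*(k)$; dividing by the a.s. finite and strictly positive random constant $W(\Theta)$, which does not depend on $k$, preserves the asymptotic equivalence. Likewise \eqref{eq:rankedzero} is exactly the statement of \cref{prop:PLCRMinfinity}, and since the ratio $w_{(k_1+k_2)}^\tau/w_{(k_1)}^\tau$ is invariant under normalization by $W(\Theta)$ it equally describes the limiting frequencies of \eqref{eq:empiricalfreq}. The principal obstacle is thus the ranking argument in \eqref{eq:empiricalfreq}: making rigorous that the order statistics of the empirical frequencies converge to those of the weights, and in particular ruling out interference from the aggregate mass of the small atoms.
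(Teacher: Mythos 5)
Your proof is correct, and on the two power-law assertions it matches the paper exactly: \eqref{eq:rankedinf} is \cref{prop:PLCRMzero} divided by the $k$-independent, a.s. finite and strictly positive factor $W(\Theta)$, and \eqref{eq:rankedzero} is precisely \cref{prop:PLCRMinfinity}, the ratio being invariant under normalization. The genuine difference is in \eqref{eq:empiricalfreq}: the paper disposes of it in one line by citing \citet[Proposition 26]{Gnedin2007}, whereas you prove the almost sure convergence of the ranked empirical frequencies from first principles --- conditioning on $W$, applying the strong law atom by atom, truncating at a rank $K$ whose tail mass is below $\epsilon$ so that no atom of rank beyond $K$ can eventually carry empirical frequency above $\epsilon$, and using the strict gaps $\pi_{(1)}>\cdots>\pi_{(K)}$ to force the empirical and weight rankings to agree on the leading coordinates. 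This is sound, and it has the merit of exposing the one real subtlety that the citation hides (that the infinitely many small atoms cannot produce a spuriously large ranked frequency); note, though, that your gap step uses the a.s. distinctness of the weights, which you rightly derive from the paper's standing assumption that $\rho$ is absolutely continuous (the Poisson process is then simple), whereas the cited result of \citet{Gnedin2007} needs no such assumption since it tolerates ties. Two points of bookkeeping would make your argument fully rigorous: fix the coordinate $j$ first, choose the tolerance $\delta$ smaller than half the minimal gap among $\pi_{(1)},\dots,\pi_{(j+1)}$, and only then take $\epsilon<\pi_{(j)}-\delta$ before picking $K$ (the thresholds must be chosen in this order for the ``leading coordinates'' to include $j$); and when deconditioning, recall that condition \eqref{eq:condinfinity} guarantees $0<W(\Theta)<\infty$ a.s., so that $P$ and the $\pi_k$ are well defined.
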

Equation \eqref{eq:empiricalfreq} in \cref{th:doublePLranks} follows from \citep[Proposition 26]{Gnedin2007}. Equations \eqref{eq:rankedinf} and \eqref{eq:rankedzero} follow from \cref{prop:PLCRMzero} and \cref{prop:PLCRMinfinity}. Instead of expressing the power-law properties in terms of the ranked frequencies, we can alternatively look at the asymptotic behavior of the number $K_{n,j}$ of elements with multiplicity $j\geq 1$, defined by
\begin{equation}
K_{n,j}=\sum_{k=1}^{K_n} \1{m_{n,(k)}=j}.
\end{equation}
Let $p_{n,j}=\frac{K_{n,j}}{K_n}$. Note that $\sum_{j\geq 1}p_{n,j}=1$. The following is a corollary of Equation \eqref{eq:rankedinf}. It follows from Proposition 23 and Corollary 21 in \cite{Gnedin2007}.
\begin{corollary}\label{cor:prop}
If the CRM is regularly varying at 0 with exponent $\alpha$, we have
\begin{equation}
p_{n,j}\rightarrow p_j\text{ a.s. as }n\rightarrow\infty\label{eq:proportion}
\end{equation}
where
$$p_j=\frac{\sigma\Gamma(j-\alpha)}{j!\Gamma(1-\alpha)}\sim \frac{\alpha}{\Gamma(1-\alpha)}\frac{1}{j^{1+\alpha}}~~~\text{ for large }j. $$
\end{corollary}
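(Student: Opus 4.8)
The plan is to reduce the claim to the classical deterministic occupancy scheme by conditioning on the completely random measure $W$, and then invoke the power-law occupancy asymptotics of \citet{Gnedin2007}. Conditionally on $W$, the samples $X_1,\ldots,X_n$ are i.i.d.\ draws from the discrete law $P$, which places mass $\pi_k=w_k/W(\Theta)$ at the atom $\theta_k$. This is exactly an infinite occupancy problem with box frequencies $(\pi_k)_{k\ge 1}$ (random, but fixed once $W$ is given), in which $K_{n,j}$ is the number of boxes occupied by exactly $j$ balls and $K_n$ the total number of occupied boxes, so $p_{n,j}=K_{n,j}/K_n$ is the corresponding occupancy proportion.

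The key structural input is the regular variation at $0$ of the frequency counting function $\nu(x)=\#\{k:\pi_k>x\}$. First I would record that, by a law of large numbers for the Poisson counting measure of $W$, the counting function $N(x)=\#\{k:w_{(k)}>x\}$ satisfies $N(x)\sim\overline\rho(x)$ almost surely as $x\to0$ (using infinite activity, so that $\overline\rho(x)\to\infty$); equivalently, by \cref{prop:PLCRMzero}, $w_{(k)}\sim k^{-1/\alpha}\ell_1^*(k)$ a.s. Since $\nu(x)=N\!\left(W(\Theta)\,x\right)$ and $W(\Theta)$ is a.s.\ finite and positive, slow variation of $\ell_1$ yields $\nu(x)\sim W(\Theta)^{-\alpha}x^{-\alpha}\ell_1(1/x)$ a.s. In particular $\nu$ is regularly varying at $0$ with exponent $\alpha\in(0,1)$ for almost every realization of $W$.

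With the regular variation of $\nu$ established for a.e.\ $W$, I would apply Proposition~23 and Corollary~21 of \citet{Gnedin2007}, which assert that in a deterministic occupancy scheme whose frequency counting function is regularly varying at $0$ with index $\alpha\in(0,1)$, one has $K_{n,j}/K_n\to \frac{\alpha\Gamma(j-\alpha)}{j!\,\Gamma(1-\alpha)}$ almost surely. Applying this conditionally on $W$ and integrating over the full-measure set of realizations for which $\nu$ is regularly varying upgrades the conditional statement to the unconditional almost sure convergence $p_{n,j}\to p_j$. The stated tail behavior then follows from the elementary Gamma-ratio asymptotic $\Gamma(j-\alpha)/\Gamma(j+1)\sim j^{-1-\alpha}$ as $j\to\infty$, which gives $p_j\sim\frac{\alpha}{\Gamma(1-\alpha)}\,j^{-1-\alpha}$.

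I expect the main obstacle to be the careful handling of the two nested sources of randomness: the occupancy results of \citet{Gnedin2007} are stated for fixed frequencies, so one must verify that the regular variation of $\nu$ holds \emph{almost surely} (not merely in expectation via $\overline\rho$) and then justify, through a conditioning/Fubini argument, that conditional a.s.\ convergence for a.e.\ $W$ implies unconditional a.s.\ convergence. The transfer of regular variation from the deterministic $\overline\rho$ to the random counting function $N$, together with the role of the random normalizer $W(\Theta)$, is where the probabilistic content lies; the remaining steps are either direct citations or routine asymptotics.
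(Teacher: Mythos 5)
Your proposal is correct and takes essentially the same route as the paper: the paper likewise starts from the a.s.\ ranked-frequency asymptotics (Equation~\eqref{eq:rankedinf}, which rests on \cref{prop:PLCRMzero}), uses \citet[Proposition 23]{Gnedin2007} to pass to the regular-variation form of the frequency counting function, and then concludes with Corollary~21 of the same paper, applied to the (a.s.\ fixed) realization of the normalized weights. Your extra care with the conditioning-on-$W$/Fubini step and the transfer of regular variation to $\nu$ merely makes explicit what the paper's two-line proof leaves implicit.
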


\cref{fig:simusasympt} shows some illustration of these empirical results for the GBFRY model.

\begin{remark}
The GGP with parameter $\sigma>0$ is regularly varying at 0, but not at infinity. Hence, the normalized GGP with $\zeta>0$ and the related Pitman-Yor process with $\theta>0$ satisfy Equation \eqref{eq:rankedinf} and \eqref{eq:proportion} but not \eqref{eq:rankedzero}, due to the exponentially decaying tails of the L\'evy measure of the GGP. The normalized GGP with $\zeta=0$, which is the same as the Pitman-Yor with $\theta=0$, satisfies both equations, but with the same exponent $\sigma\in(0,1)$, lacking the flexibility of the three models presented in \cref{sec:DRVCRM}.
\end{remark}

\begin{figure}[t]
\centering
\includegraphics[width=.24\textwidth]{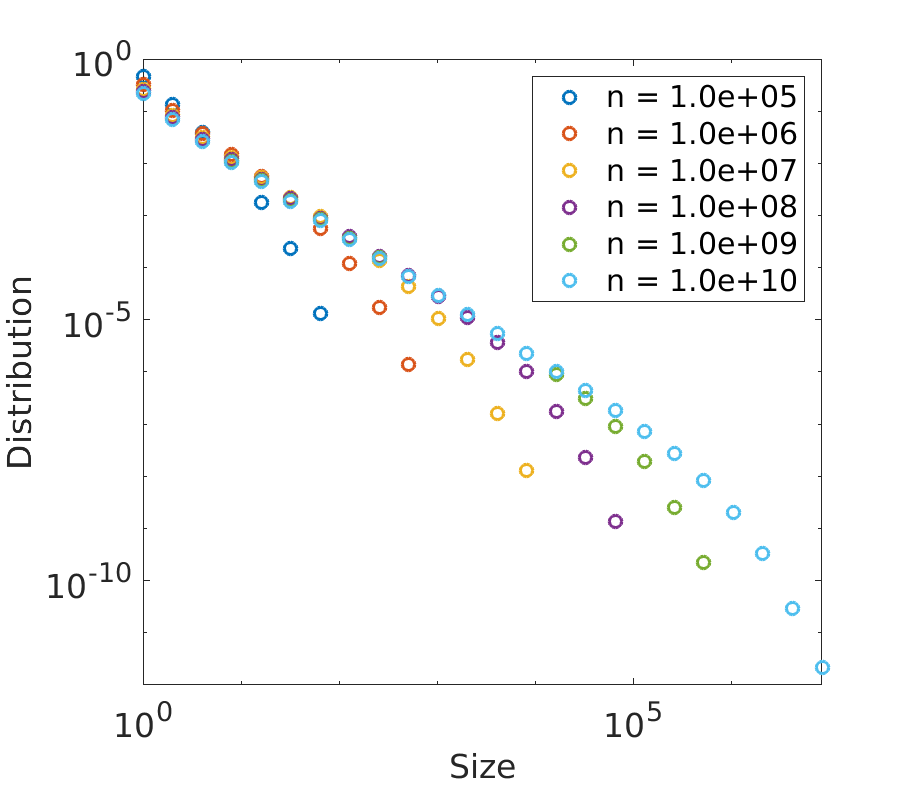}
\includegraphics[width=.24\textwidth]{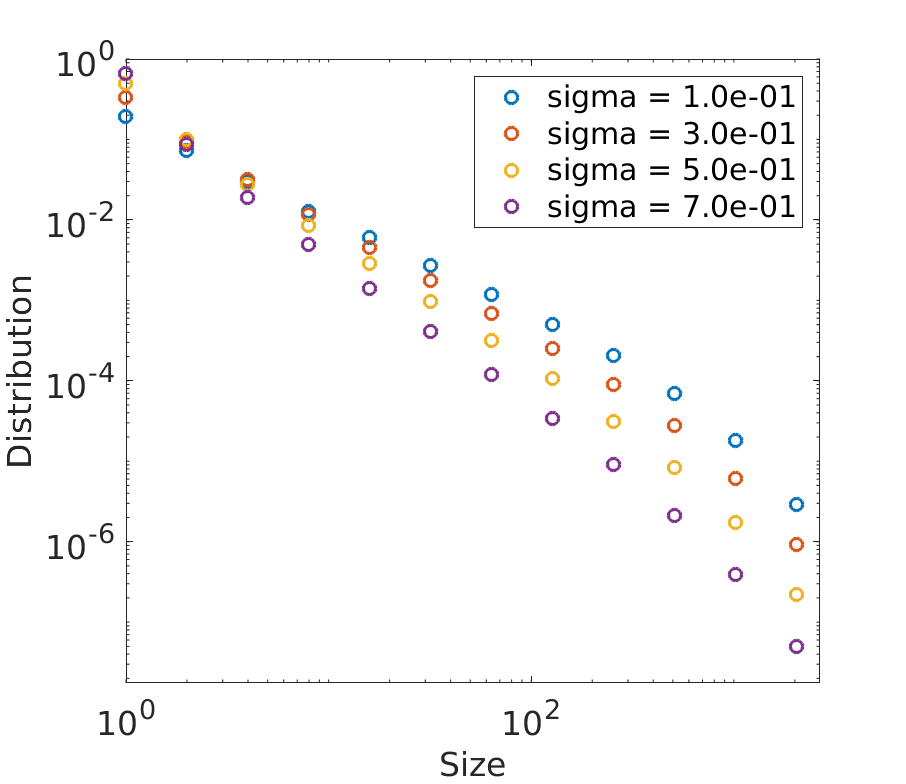}
\includegraphics[width=.24\textwidth]{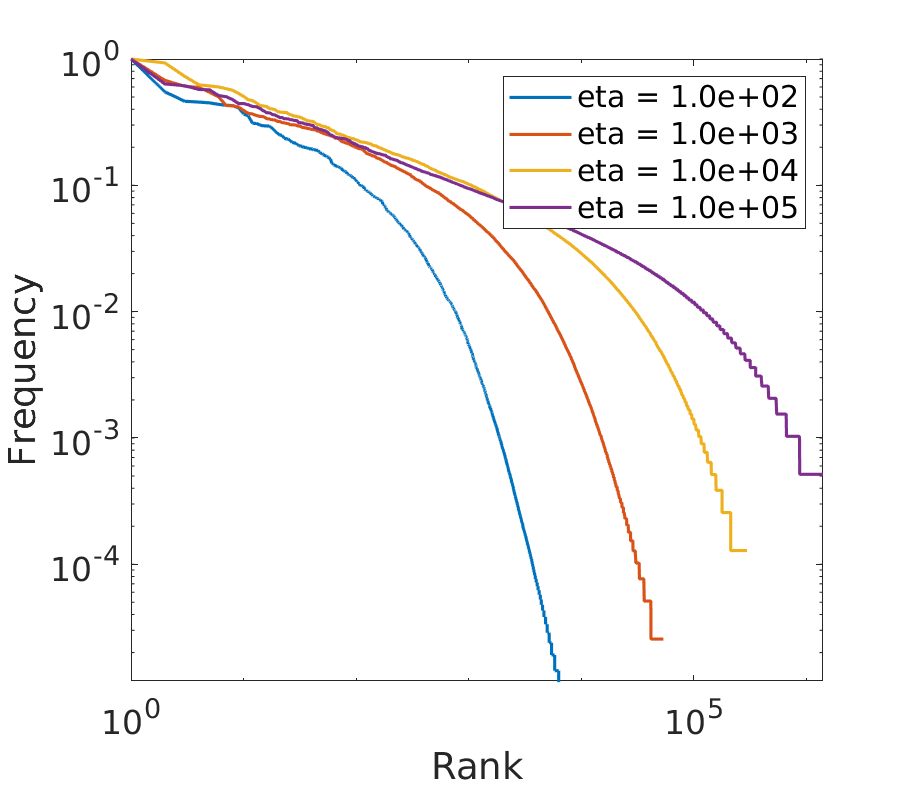}
\includegraphics[width=.24\textwidth]{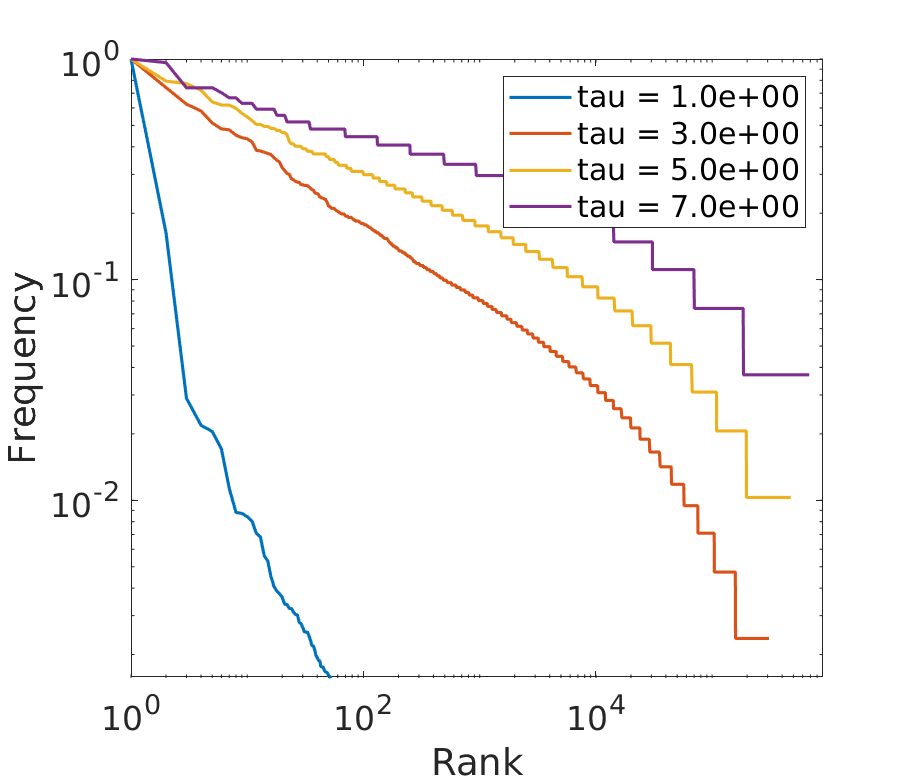}
\caption{Simulated data from the normalized GBFRY model. Proportion of clusters of a given size for (First) $\eta=4000,\tau=3,\sigma=0.2$ with varying $n$ and (Second) $\eta=4000,\tau=3,n=10^7$ with varying of $\sigma$. Ordered frequencies, normalized by the largest one, for (Third) $n=10^6,\sigma=0.2,\tau=3$ with varying $\eta$ and (Fourth) $n=10^6,\sigma=0.2,\eta=50000$ with varying $\tau$ .}
\label{fig:simusasympt}
\end{figure}

\subsection{Posterior Inference}
\label{sec:inference}

In this subsection, we briefly discuss the inference procedure for estimating the parameters of the normalized CRMs we introduced in \cref{sec:DRVCRM}.
Additional details are provided in \cref{sec:suppinference}. Assume that the L\'evy measure $\rho$ is parameterised by some parameters $\phi$ we want to estimate, in particular the two power-law exponents. We write $\rho(w;\phi)$ to emphasize this, and let $p(\phi)$ be the prior density. The objective is to approximate the posterior density of the parameters given the ranked counts $p(\phi\mid (m_{n,(k)})_{k=1,\ldots,K_n})$.

\paragraph{Parametrisation.} Since we are working with normalized CRMs, multiplying $W$ by any positive constant $\xi>0$ gives the same random probability measure $P$. In particular, the normalized CRMs with L\'evy densities $\rho(w)$ and $\widetilde\rho(w)=\xi\rho(\xi w)$ have the same distribution.
 To avoid overparameterisation we set the parameter $c=1$ in the GBFRY and BP processes, and estimate the parameter $\phi=(\sigma,\tau,\eta)$.

We introduce a latent variable $U\mid W \sim \mathrm{Gamma}(n, W(\Theta))$. Using \citet[Proposition 3]{James2009} (see also~\citet{Pitman2003})  and \citet[Equation (2.2)]{Pitman2006}), the joint density is written as
\begin{align}
&p\left ((m_{n,(k)})_{k=1,\ldots,K_n}, u, \phi\right )\nonumber \\
&\propto p(\phi) u^{n-1} e^{-\psi(u;\phi)} \prod_{k=1}^{K_n} \kappa(m_{n,(k)}, u;\phi)
\end{align}
where the normalizing constant only depends on $n$ and the ranked counts, and
\begin{align}
\psi(t;\phi) &= \eta \int_0^\infty (1-e^{-tw})\rho(w;\phi) dw, \\
\kappa(m, t;\phi) &=  \eta \int_0^\infty w^m e^{-tw} \rho(w;\phi) dw.
\end{align}
If $\psi$ and $\kappa$ have analytic forms, one can derive a MCMC sampler to approximate the posterior by successively updating $U$ and $\phi$ . Unfortunately, this is not the case for our models. For instance, in the generalized BFRY process case, we have
\begin{align}
\psi(t;\phi) &= \frac{\eta}{\sigma} \int_0^c ((y+t)^\sigma - y^\sigma) y^{\tau-\sigma-1} dy \\
\kappa(m, t;\phi) &= \frac{\eta\Gamma(m-\sigma)}{\Gamma(1-\sigma)} \int_0^c \frac{y^{\tau-\sigma-1}}{(y+t)^{m-\sigma}}dy.
\end{align}
We may resort to a numerical integration algorithm to approximate $\psi$ as only one evaluation of this function is needed at each iteration. We could do the same for $\kappa$. However, this would require $K_n$ numerical integrations at each step of the MCMC sampler, which is computationally prohibitive for large $K_n$. Instead, building on the construction of the generalized BFRY as a scaled generalized gamma process described in \cref{sec:genBFRY}, we introduce a set of latent variables $Y = (Y_k)_{j=1,\ldots,K_n}$ whose conditional density is written as
\begin{align*}
p(y_k|u,(m_{n,(k)})_{k=1,\ldots,K_n}) \propto \frac{y_k^{\tau-\sigma-1}}{(y_k+u)^{m_{n,(k)}-\sigma}} \1{0< y_k < c},
\end{align*}
and this gives the joint density
\begin{align*}
&p\left ((m_{n,(k)})_{k=1,\ldots,K_n}, u,y, \phi\right )\nonumber \\
&\propto p(\phi) u^{n-1}e^{-\psi(u;\phi)}\prod_{k=1}^{K_n} \frac{\eta\Gamma(m_{n,(k)}-\sigma)y_k^{\tau-\sigma-1}}{\Gamma(1-\sigma)(y_k+u)^{m_{n,(k)}-\sigma}}.
\end{align*}
where the normalizing constant only depends on $n$ and the ranked counts. Then we can alternate between updating $\phi$ and $U$ via Metropolis-Hastings and updating $Y$ via Hamiltonian Monte-Carlo (HMC)~\citep{Duane1987, Neal2011} to estimate the posterior. See \cref{sec:suppinference} for more details. A similar strategy can be used for the beta prime process.

\section{Experiments}

We run the algorithms described in \cref{sec:inference} for the GBFRY and BP models. We fix $c=1$ to avoid overparameterisation, as explained in \cref{sec:inference}. We use standard normal prior on $\log \eta$, $\log \tau$ and $\logit \sigma$. The proposed models are compared to the normalized GGP with the same priors on $\eta$ and $\sigma$ and fixed $\zeta=1$, and the PY process with standard normal prior on $\log \theta$ and $\logit \alpha$.
We also considered the discrete mixture construction described in \cref{sec:generalconstructiondoubly} with $\rho_0$ taken to be a GGP, and $f$ a Pareto or generalized Pareto distribution. While we were able to recover the parameters on simulated data, this model was under-performing on real data, and results are not reported.
The codes to replicate our experiments can be found in {\small\url{https://github.com/OxCSML-BayesNP/doublepowerlaw}}.

 We stress that the objective is to show that the proposed models provide a better fit than alternative models, not to test the double power-law assumption.

\subsection{Synthetic data}
We sample simulated datasets from the normalized GBFRY and the BP with parameters $\sigma = 0.1$, $\tau = 2$, $c=1$ and $\eta=4000$. We run the MCMC algorithm described in \cref{sec:inference} with $100\,000$ iterations.  The 95\% credible intervals are $\sigma \in (0.09,0.12)$, $\tau \in (1.6,2.2)$ for the BFRY and $\sigma \in (0.08, 0.11)$, $\tau \in (1.8, 2.3)$ for the BP, indicating that the MCMC recovers true parameters. Trace plots are reported in the \cref{sec:synthetic_expr}.

\vspace{-0.05in}

\subsection{Real data}
We then consider five real datasets, four of which are word frequencies in natural languages, and the last is the out-degree distribution of a Twitter network. We first provide a description of the different datasets.

\vspace{-0.1in}
\paragraph{Word frequencies.} Each dataset is composed of $n$ words $X_1,..,X_n$, with $K_n\leq n$ unique words. The counts $m_{n,(k)}$ represent the number of occurences of the $k$th most frequent word in the dataset. The first dataset is the written dataset of the American National Corpus\footnote{http://www.anc.org/data/anc-second-release/frequency-data/} (ANC), composed of about  $18$ million word occurences and $300\,000$ unique words. The second and third datasets are the words of a collection of most popular English books and French books, downloaded from the Project Gutenberg\footnote{http://www.gutenberg.org/}. The English books dataset is composed of about $3$ million words and $71\,000$ unique words, the French books of about $7$ million words and around $135\,000$ unique words. The fourth dataset represents the words of a thousand papers from the NIPS conference. It contains about $2$ million word occurences and $68\,000$ unique words.

\vspace{-0.1in}
\paragraph{Twitter network.} We consider a rank-1 edge-exchangeable model for directed multigraphs~\citep{Crane2018,Cai2016}. In this case, the atoms of $W$ represent the nodes of the graph, and  each directed edge $(X_i,Y_i)$ from node $X_i$ to node $Y_i$ is sampled independently from $P\times P$. Note that when $P$ is a Pitman-Yor process, the associated model corresponds to the urn-based Hollywood model of~\citet{Crane2018}. Here, we only consider the out-degree distribution. Therefore, $n$ represents the number of directed edges and $X_1,..,X_n$ the source nodes of the directed edges sampled from the normalized CRM $P$. $m_{n,(k)}$ corresponds to the $k$th largest out-degree in the network. We consider a subset of 25 millions tweets of August 2009 from Twitter~\citep{Yang2011}. We construct a directed multigraph by adding an edge $(X_i,Y_i)$ whenever user $X_i$ mentions user $Y_i$ (with @) in tweet $i$. The resulting graph contains about 4 millions edges and $300\,000$ source nodes.

\begin{table}[]
\caption{Average Kolmogorov-Smirnov divergence between the data and the posterior predictive. Lower is better.}
\small
\center
\begin{tabular}{@{} ccccc @{}}
  \toprule
  Dataset & GBFRY & Beta Prime & GGP & PY \\
  \midrule
  Englishbooks	& 0.072 & \textbf{0.041}	& 0.12 & 0.12 \\
  Frenchbooks	& 0.064	& \textbf{0.032}	& 0.11 & 0.11 \\
  NIPS1000	    & \textbf{0.041}	& 0.081 & 0.08 & 0.059 \\
  ANC		    & \textbf{0.033} & 0.034	& 0.082 & 0.081 \\
  Twitter	    & 0.10  & \textbf{0.047} & 0.25 & 0.26 \\
  \bottomrule
\end{tabular}
\label{table:KS}
\end{table}

\begin{table*}
\caption{$95\%$ posterior credible intervals of the power-law exponents.}
\small
\centering
\begin{tabular}{@{} c cc cc cc @{}}
  \toprule
        & \multicolumn{2}{c}{GBFRY} & \multicolumn{2}{c}{Beta Prime} & GGP & PY \\
  \midrule
     Dataset    & $\sigma$ & $\tau$ & $\sigma$ & $\tau$ & $\sigma$ & $\sigma$ \\
  \midrule
  Englishbooks	& (0.351, 0.362) & (0.912, 0.980) &	(0.345, 0.358) & (0.974, 1.078) &	(0.416, 0.423) & (0.416, 0.423) \\
  Frenchbooks	& (0.368, 0.375) & (0.967, 1.039) &	(0.363, 0.371) & (1.04, 1.175) &	(0.407, 0.412) & (0.407, 0.412) \\
  NIPS1000	    & (0.538, 0.545) & (1.338, 1.906) &	(0.538, 0.545) & (1.541, 2.286)&  	(0.542, 0.548) & (0.542, 0.549) \\
  ANC		    & (0.433, 0.438) & (0.998, 1.055) & (0.431, 0.436) & (1.09, 1.17) &	(0.461, 0.465) & (0.461, 0.465) \\
  Twitter	    & (0.282, 0.287) & (1.590, 1.600) &	(0.099, 0.116) & (1.336, 1.411) &	(0.272, 0.277) & (0.272, 0.277) \\
  \bottomrule
\end{tabular}
\label{table:param}
\end{table*}

\begin{figure*}[t]
\centering
\subfigure[GBFRY]{\includegraphics[width=.25\linewidth]{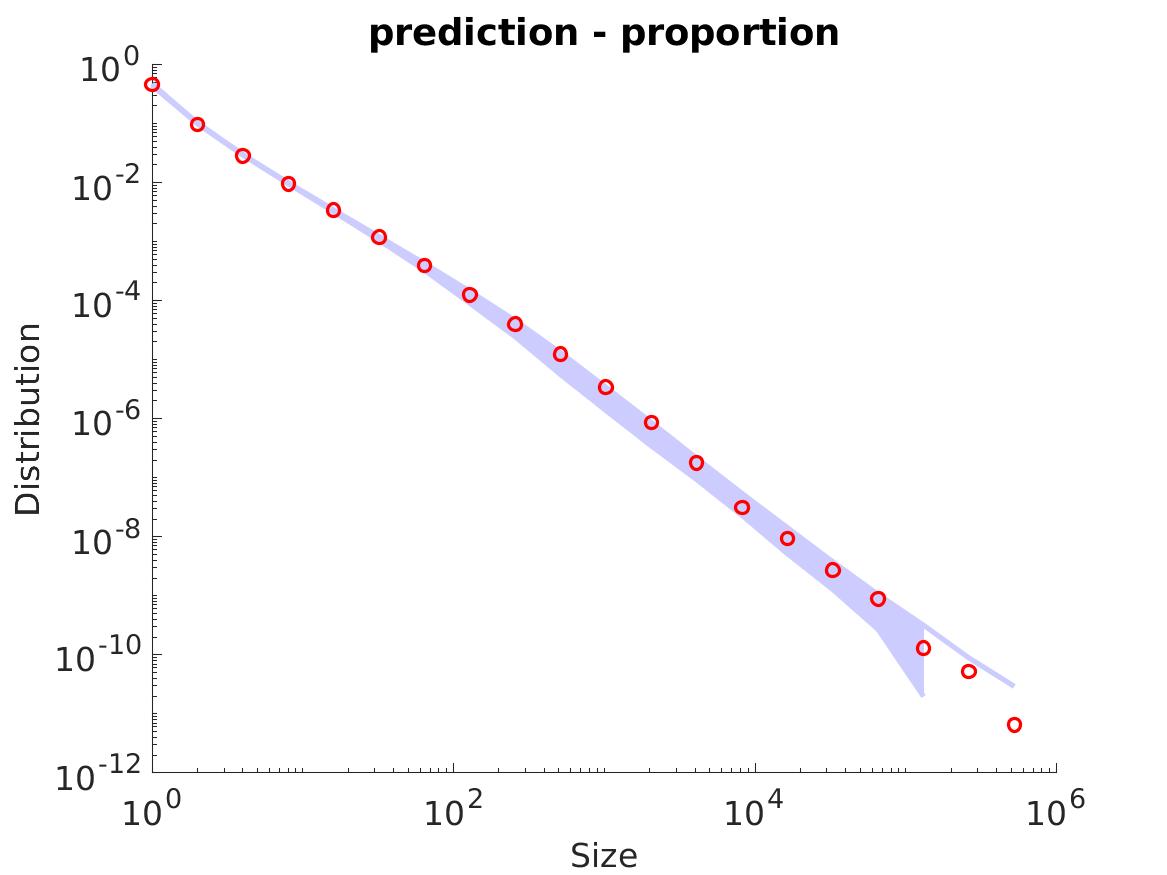}}
\subfigure[BP]{\includegraphics[width=.25\linewidth]{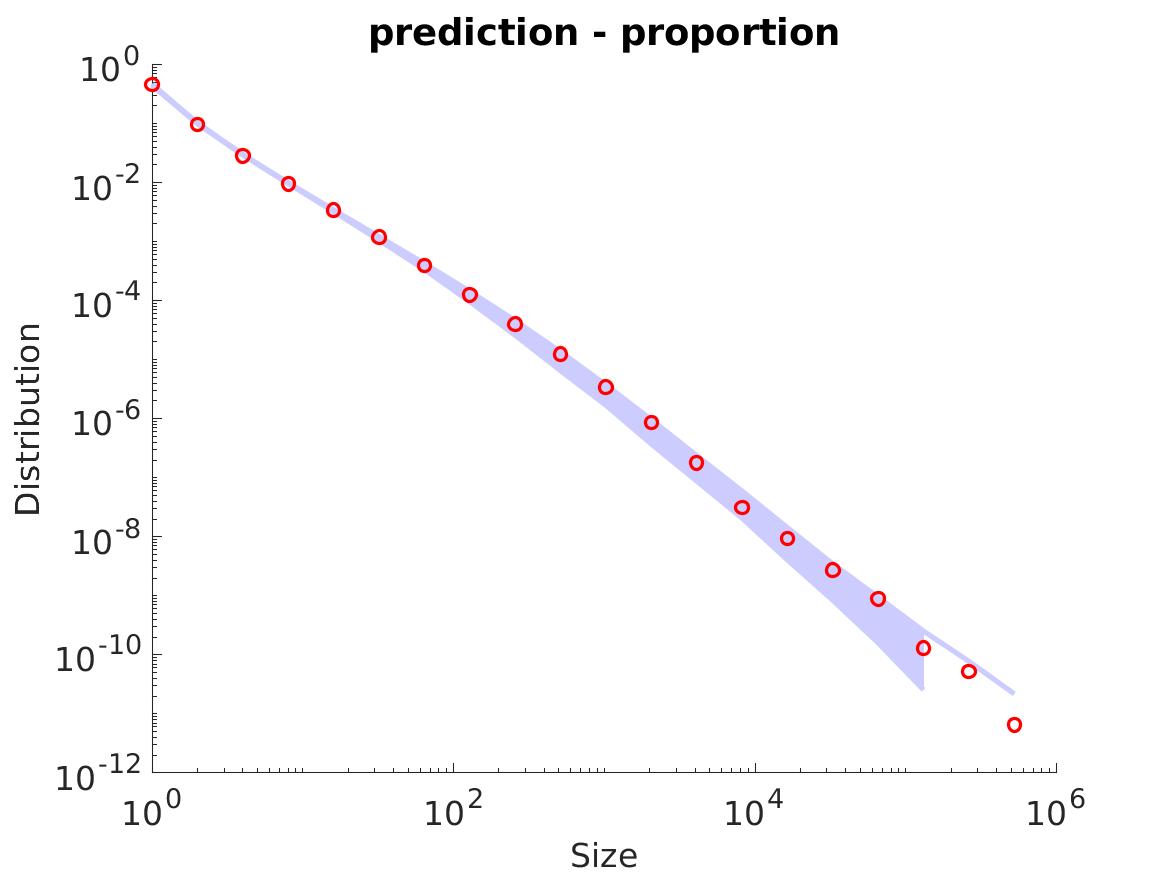}}
\subfigure[PY]{\includegraphics[width=.25\linewidth]{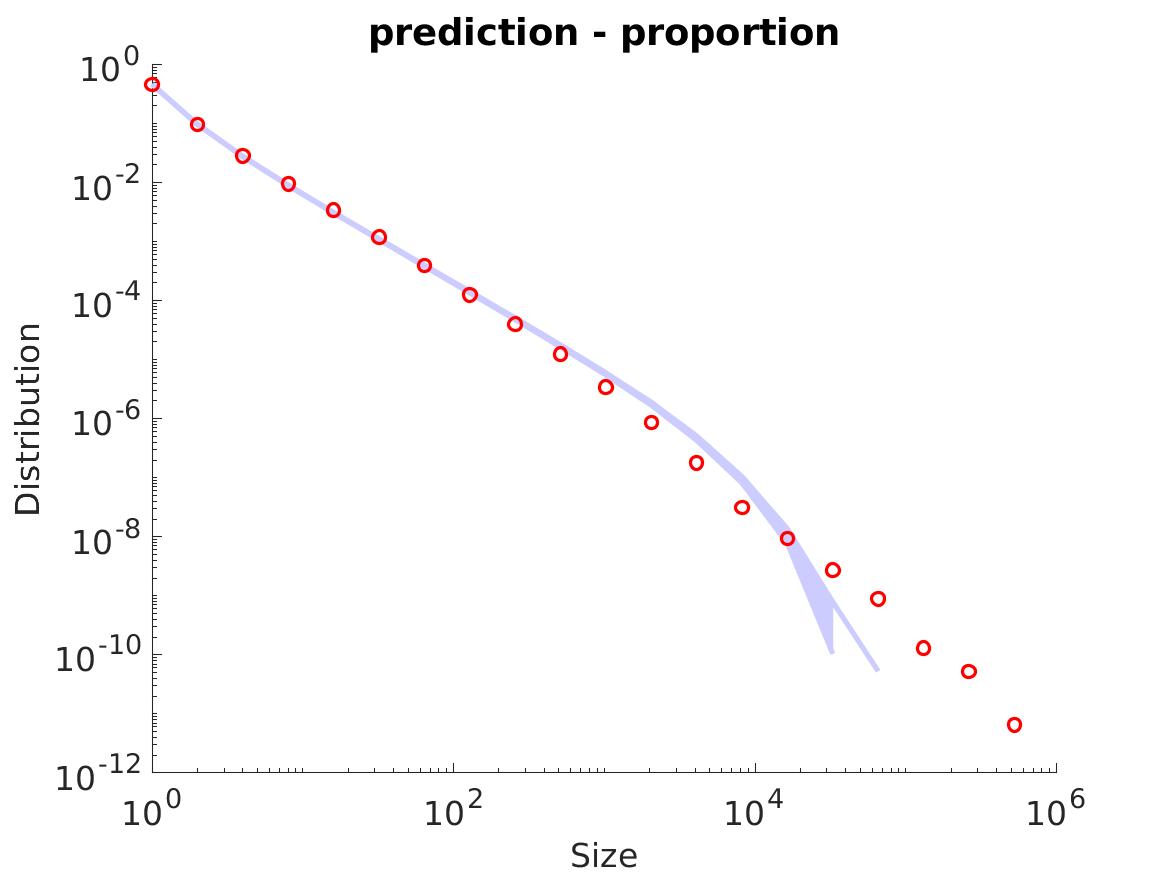}}\\
\subfigure[GBFRY]{\includegraphics[width=.25\linewidth]{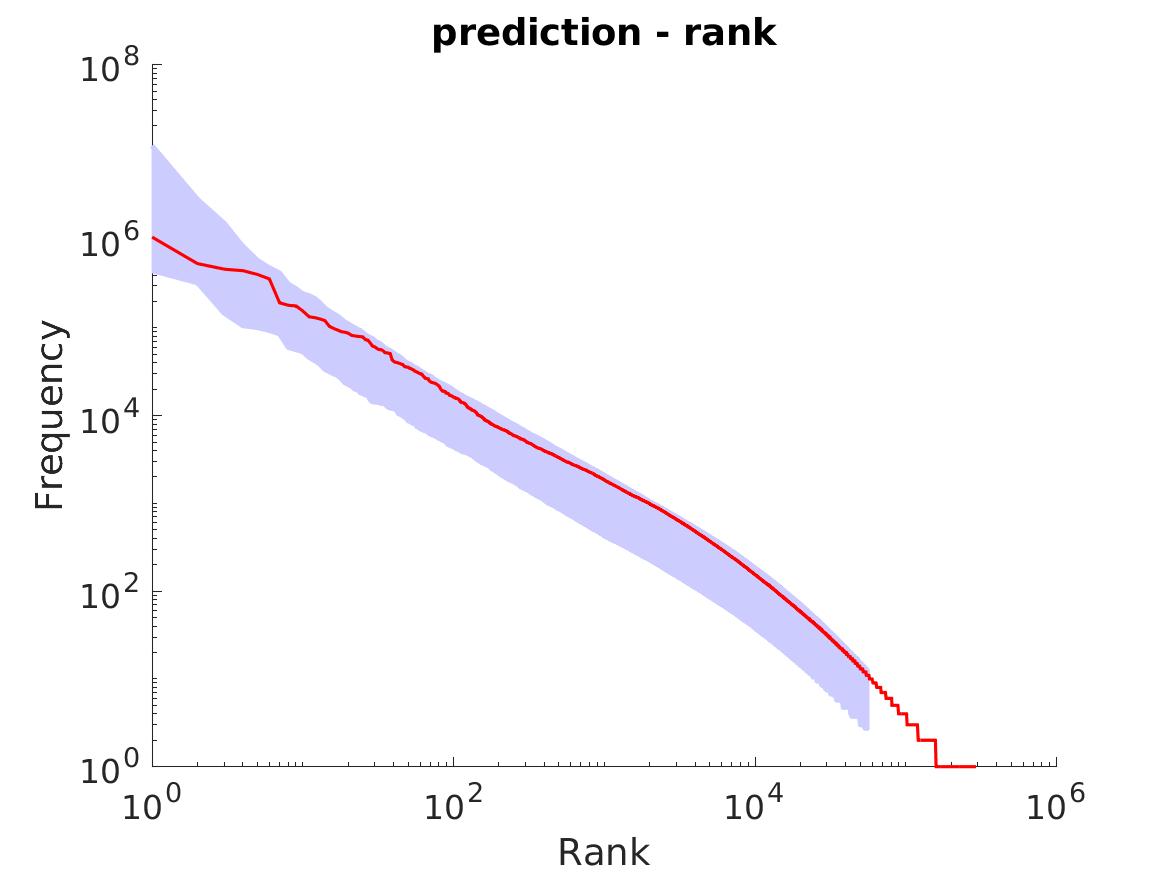}}
\subfigure[BP]{\includegraphics[width=.25\linewidth]{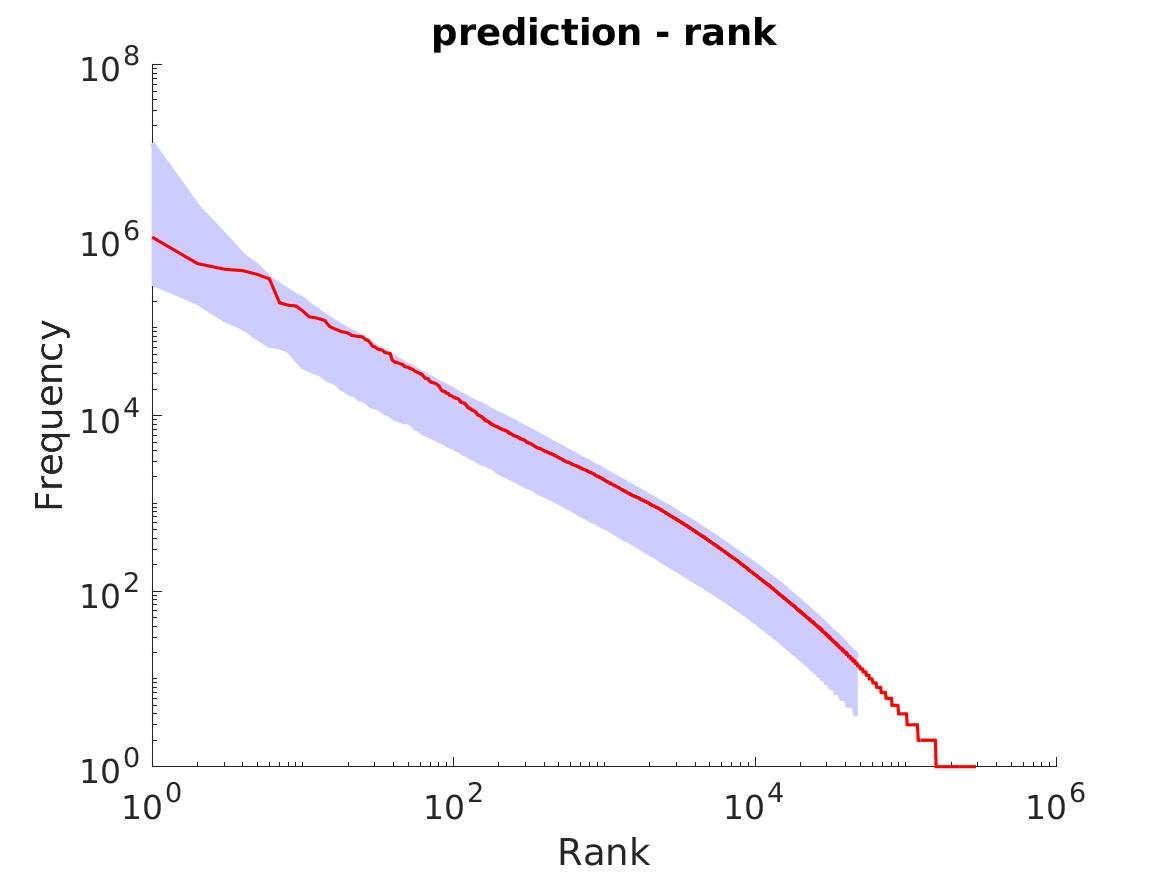}}
\subfigure[PY]{\includegraphics[width=.25\linewidth]{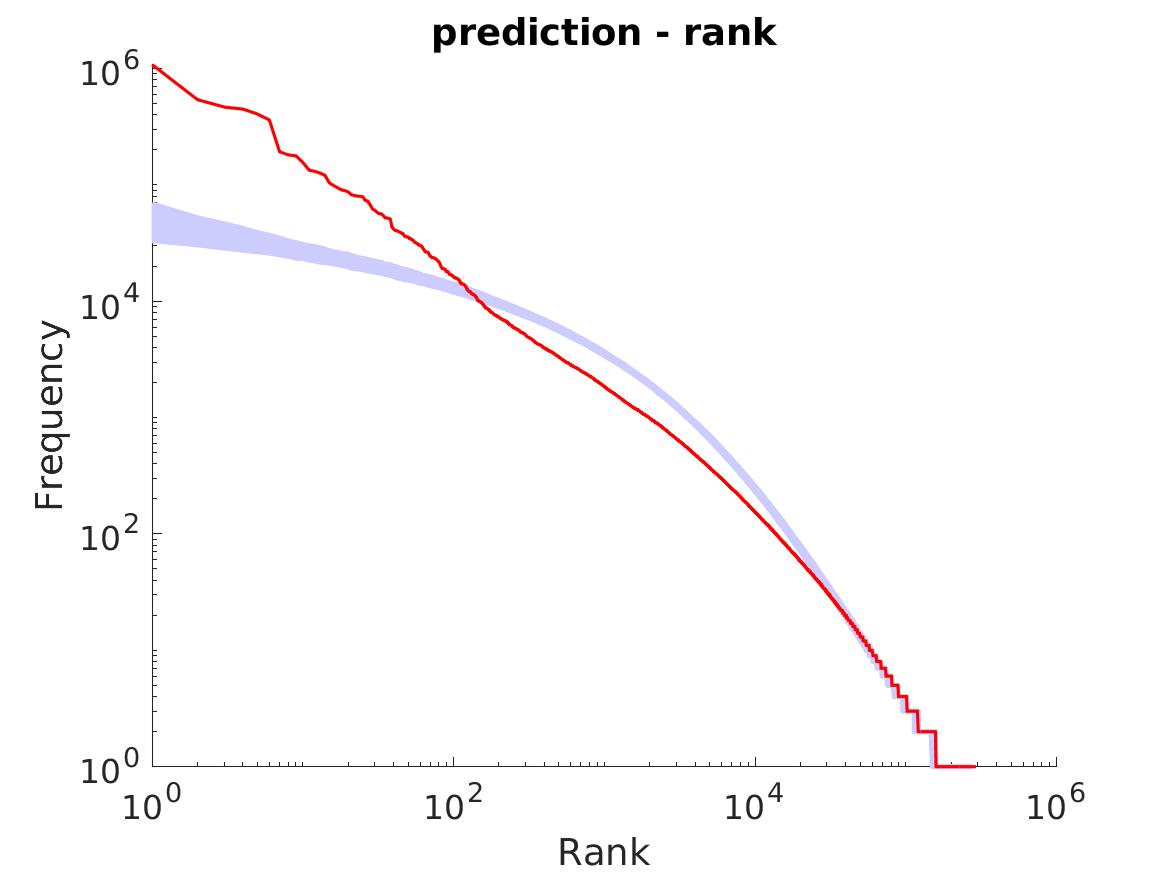}}
\caption{Results on the ANC dataset: $95\%$ credible interval of the posterior predictive in blue, data in red. (Top) Proportion of occurences of a given size. (Bottom) Ranked occurences.}
\label{fig:ANC}
\end{figure*}

\subsection{Results}
For each of the four models and each dataset, we approximate the posterior distribution of the parameters $\phi$ of the L\'evy measure, and sample new datasets from the posterior predictive. The 95\% credible intervals of the posterior predictive for the proportion of occurences and ranked frequencies are reported in \cref{fig:ANC} for the ANC dataset (plots for the other datasets are given in \cref{sec:real_expr}). As the results for the normalized GGP and PY are almost identical, we only show the plot for the PY model. As can clearly be seen from the posterior predictive plots, all models provide a good fit for low frequencies. However, the PY model (and similar the normalized GGP) fail to capture the power-law behavior for large frequencies. This behavior is better captured by the GBFRY and BP models.
To illustrate quantitatively the comparison, we compute the average reweighted Kolmogorov-Smirnov divergence~\citep{Clauset2009} between the true data and the posterior predictive for each model, and report the results in \cref{table:KS}. Finally, we report in \cref{table:param} the $95\%$ credible intervals of the parameters for each model and dataset. We can remark that to the exception of the NIPS dataset, we recover the Zipfian exponent $\tau=1$ for large frequencies in text datasets.

\section{Conclusion}

In this paper we presented a novel class of random measures with double power-law behavior. We focused on the case of iid sampling from a normalized completely random measure. More generally, one could build on this class of models for other CRM-based constructions. In particular, it would be interesting to explore the asymptotic degree distribution when such models are used for random graph models based on exchangeable point processes~\citep{Caron2017}. Building hierarchical versions of such models as for the hierarchical Pitman-Yor process~\citep{Teh2006} would also be of interest. Finally, it would be useful to explore the connections between the models presented here and the two-stage urn process suggested by~\citet{Gerlach2013} and investigate if other urn schemes could be derived that provably exhibit a double power-law behavior.

\paragraph{Acknowledgments}
The authors thank Valerio Perrone for providing the NIPS dataset. JL and FC's research leading to these results has received funding from European Research Council under the European Unions Seventh Framework Programme (FP7/2007-2013) ERC grant agreement no. 617071 and from EPSRC under grant EP/P026753/1. FC acknowledges support from the Alan Turing Institute under EPSRC grant EP/N510129/1. JL acknowledges support from IITP grant funded by the Korea government(MSIT) (No.2017-0-01779, XAI) and Samsung Research Funding \& Incubation Center under project number SRFC-IT1702-15. 

\newpage
\bibliography{doublepowerlaw}
\bibliographystyle{plainnat}

\appendix
\clearpage
\begin{appendices}

\section{Background on regular variation}

The material in this section is from the book of \citet{Bingham1989}. In the following, $U$ denotes a regularly varying function and $\ell$ denotes a slowly varying function, locally bounded on $(0,\infty)$.

\begin{theorem}
[Karamata's theorem] \citep[Propositions 1.5.8 and 1.5.10]{Bingham1989}. Suppose $\rho> -1$ and $U(t)\sim t^\rho \ell(t)$ as $t$ tends to infinity. Then
$$\int_{0}^{x} U(t)dt\sim \frac{1}{\rho +1}x^{\rho +1}\ell(x)$$
as $x$ tends to infinity. 
\newline Suppose $\rho<-1$. Then $U(t)\sim t^\rho\ell(t)$ as $t$ tends to infinity implies $$\int_{x}^{\infty}U(t)dt\sim -\frac{1}{\rho +1} x^{\rho+1}\ell(x)$$ as $x$ tends to infinity.
\label{th:Karamata}
\end{theorem}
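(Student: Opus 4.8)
The plan is to prove both parts by the same device: a change of variables that converts the integral over a growing (respectively shrinking) domain into the integral of a normalized integrand over a \emph{fixed} domain, followed by an interchange of limit and integral justified by a dominating bound coming from the theory of slowly varying functions. For the first part ($\rho>-1$), substituting $t=xs$ gives
\[
\frac{\int_0^x U(t)\,dt}{x^{\rho+1}\ell(x)} = \int_0^1 \frac{U(xs)}{x^\rho \ell(x)}\,ds .
\]
Rewriting the integrand as $\frac{U(xs)}{(xs)^\rho \ell(xs)}\cdot s^\rho \cdot \frac{\ell(xs)}{\ell(x)}$ and using $U(t)\sim t^\rho \ell(t)$ together with the slow variation of $\ell$, each of the outer factors tends to $1$, so for every fixed $s\in(0,1)$ the integrand converges to $s^\rho$. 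Since $\int_0^1 s^\rho\,ds = \frac{1}{\rho+1}$ for $\rho>-1$, it only remains to pass the limit inside the integral.

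To justify the interchange I would invoke Potter's bounds (a standard consequence of the representation / uniform convergence theorem for slowly varying functions in \citet{Bingham1989}): for any $\delta>0$ there is a constant $C$ and a threshold $X_0$ such that $\ell(xs)/\ell(x)\le C\,s^{-\delta}$ whenever $xs\ge X_0$ and $s\le 1$. Choosing $\delta$ small enough that $\rho-\delta>-1$ produces an integrable majorant of the form $C' s^{\rho-\delta}$ on $(0,1)$, and dominated convergence then yields the claim. The second part ($\rho<-1$) is entirely parallel: the substitution $t=xs$ maps the tail to
\[
\frac{\int_x^\infty U(t)\,dt}{x^{\rho+1}\ell(x)} = \int_1^\infty \frac{U(xs)}{x^\rho \ell(x)}\,ds ,
\]
the integrand again converges pointwise to $s^\rho$, and $\int_1^\infty s^\rho\,ds = -\frac{1}{\rho+1}$ precisely because $\rho+1<0$. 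Here the majorant is $\ell(xs)/\ell(x)\le C s^{\delta}$ for $s\ge 1$ with $\delta$ chosen so that $\rho+\delta<-1$, giving the integrable bound $C' s^{\rho+\delta}$ on $[1,\infty)$; finiteness of $\int_x^\infty U$ is guaranteed by the same bound.

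The main obstacle is the endpoint where the argument $xs$ fails to be large, so that neither $U(t)\sim t^\rho\ell(t)$ nor Potter's bounds apply verbatim — in the first part this is the region $s\to 0$ (the second part is cleaner, since there the whole range $t\ge x$ is large and only the tail $s\to\infty$ needs control, which Potter's bound supplies). I would handle the first-part difficulty by splitting $\int_0^x U = \int_0^{X_0}U + \int_{X_0}^x U$. The first piece is a fixed finite constant by the local integrability of $U$, hence asymptotically negligible against $x^{\rho+1}\ell(x)\to\infty$; on the second piece one has $t\ge X_0$ throughout, so after the change of variables $xs\ge X_0$ everywhere, Potter's bound legitimately applies, the lower limit $X_0/x\to 0$, and dominated convergence over $(0,1)$ delivers the limit $\int_0^1 s^\rho\,ds$. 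This endpoint bookkeeping, rather than the core convergence computation, is the delicate step.
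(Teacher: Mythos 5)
Your proof is correct, but note that the paper itself does not prove this statement: it is imported as background with a citation to \citet[Propositions 1.5.8 and 1.5.10]{Bingham1989}, so there is no internal proof to compare against. What you have written is essentially the standard textbook argument for the direct half of Karamata's theorem, the same one used in \citet{Bingham1989}: normalize by $x^{\rho+1}\ell(x)$, change variables $t=xs$ to put the integral over a fixed domain, observe that the integrand converges pointwise to $s^\rho$ by slow variation, and dominate it via Potter's bounds with $\delta$ chosen so that the exponent $\rho-\delta$ (respectively $\rho+\delta$) stays on the integrable side of $-1$. Your handling of the two delicate points is also right: (i) for $\rho>-1$, the region $xs<X_0$, where neither the asymptotic equivalence $U(t)\sim t^\rho\ell(t)$ nor Potter's bound applies, is split off as $\int_0^{X_0}U$, a fixed constant which is negligible because $x^{\rho+1}\ell(x)\to\infty$; (ii) for $\rho<-1$, the single Potter majorant $C's^{\rho+\delta}$ with $\rho+\delta<-1$ simultaneously yields finiteness of the tail integral and justifies dominated convergence. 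One caveat worth stating explicitly: local integrability of $U$ near $0$, which you invoke for the split-off constant in part (i), is not a consequence of regular variation at infinity; it is an implicit hypothesis of the statement (without it $\int_0^x U$ is not even defined), and the cleanest phrasing is to assume $U$ locally integrable on $[0,\infty)$, as \citet{Bingham1989} do in their formulation.
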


\begin{corollary}
Suppose $\rho< -1$ and $U(y)\sim y^\rho \ell(1/y)$ as $y$ tends to 0. Then
$$\int_{x}^{\infty} U(y)dy\sim \frac{-1}{\rho +1}x^{\rho +1}\ell(1/x)$$
as $x$ tends to 0. 
\label{th:Karamata2}
\end{corollary}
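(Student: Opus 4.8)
The plan is to reduce the statement to the first part of Karamata's theorem (\cref{th:Karamata}) by means of the substitution $y = 1/t$, which trades the behavior of $U$ near $0$ for the behavior of a related function near $\infty$. First I would record the exact identity produced by this change of variables: with $y = 1/t$ and $dy = -t^{-2}\,dt$, the limits $y=x$ and $y=\infty$ become $t=1/x$ and $t=0$, so that
\begin{equation*}
\int_x^\infty U(y)\,dy = \int_0^{1/x} \frac{U(1/t)}{t^2}\,dt.
\end{equation*}
This is an exact equality, so no asymptotic information is lost at this stage, and the problem is now entirely about an integral near $\infty$.

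Next I would identify the new integrand $W(t) = U(1/t)/t^2$ as regularly varying at infinity. By the definition of regular variation at $0$ used in the paper, the hypothesis $U(y)\sim y^\rho \ell(1/y)$ as $y\to 0$ is precisely the statement that $U(1/t)\sim t^{-\rho}\ell(t)$ as $t\to\infty$; dividing by $t^2$ gives $W(t)\sim t^{-\rho-2}\ell(t)$, so $W$ is regularly varying at infinity with exponent $-\rho-2$ and the same slowly varying function $\ell$. The crucial point is to verify that this exponent lies in the admissible range of the first part of \cref{th:Karamata}: since $\rho<-1$ we have $-\rho-2>-1$, so Karamata applies. This is exactly where the hypothesis $\rho<-1$ is consumed.

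I would then apply \cref{th:Karamata} with exponent $-\rho-2$, letting the upper limit $1/x\to\infty$ (that is, $x\to 0$), to obtain
\begin{equation*}
\int_0^{1/x} W(t)\,dt \sim \frac{1}{(-\rho-2)+1}\,(1/x)^{(-\rho-2)+1}\,\ell(1/x) = \frac{1}{-\rho-1}\,x^{\rho+1}\,\ell(1/x),
\end{equation*}
using $(1/x)^{-\rho-1}=x^{\rho+1}$. Since $1/(-\rho-1)=-1/(\rho+1)$, combining this with the exact identity from the first step yields the claimed equivalence $\int_x^\infty U(y)\,dy \sim \frac{-1}{\rho+1}x^{\rho+1}\ell(1/x)$ as $x\to 0$.

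The argument is essentially bookkeeping, and I do not expect a genuine obstacle: the substitution reduces everything cleanly to an already-established theorem. The only step requiring real care is tracking the slowly varying factor and the exponent correctly through the inversion $y\mapsto 1/t$ and the extra $t^{-2}$ Jacobian, and confirming the range condition $-\rho-2>-1$ so that Karamata's integration-to-$0$ branch (rather than its tail branch) is the one that applies.
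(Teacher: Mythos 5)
Your proof is correct and follows essentially the same route as the paper's: the substitution $y=1/t$ converts the tail integral into $\int_0^{1/x} t^{-\rho-2}\,(\text{slowly varying})\,dt$, and the first branch of Karamata's theorem (valid since $-\rho-2>-1$ precisely when $\rho<-1$) gives the stated asymptotics. The only cosmetic difference is that the paper writes $U(t)=t^\rho\ell_1(1/t)$ with $\ell_1\sim\ell$ to make the substituted integrand exact, whereas you carry the asymptotic equivalence directly; both are fine since Karamata's hypothesis is itself an asymptotic equivalence.
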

\begin{proof}
$U(t)=t^\rho \ell_1(1/t)$ where $\ell_1(t)\sim\ell(t)$ as $t\rightarrow\infty$.
\begin{align*}
\int_x^\infty U(y)dy&= \int_0^{1/x} t^{-2-\rho} \ell_1(t)dt\\
&\sim \frac{-1}{1+\rho} x^{1+\rho}\ell(1/x)
\end{align*}
as $x$ tends to 0 by Theorem~\ref{th:Karamata}.
\end{proof}

\section{Proofs}

\subsection{Proof of Equations \eqref{eq:BFRYRV1} and \eqref{eq:BFRYRV2}} 

For any $s>0$, the function $x\rightarrow \gamma(s,x)$ is both regularly varying at 0 and infinity with
\[
\gamma(s,x)\sim \left \{
\begin{array}{ll}
  \frac{x^{s}}{s} & \text{as } x\rightarrow 0 \\
  \Gamma(s) & \text{as } x\rightarrow \infty
\end{array}\right .
\]
we have therefore for the generalized BFRY process
\[
\rho(w)\sim \left \{
\begin{array}{ll}
  \frac{c^{\tau-\sigma}}{\Gamma(1-\sigma)(\tau-\sigma)}w^{-1-\sigma} & \text{as } x\rightarrow 0 \\
  \frac{\Gamma(\tau-\sigma)}{\Gamma(1-\sigma)}w^{-1-\tau} & \text{as } x\rightarrow \infty
\end{array}\right .
\]
and Equations~\eqref{eq:BFRYRV1} and \eqref{eq:BFRYRV2} follow from \cref{th:Karamata} and \cref{th:Karamata2}.

\subsection{Proof of Equations \eqref{eq:betaprimeRV1} and \eqref{eq:betaprimeRV2}}
We have for the beta prime process
\[
\rho(w)\sim \left \{
\begin{array}{ll}
  \frac{c^{\sigma-\tau}\Gamma(\tau-\sigma)}{\Gamma(1-\sigma)} w^{-1-\sigma} & \text{as } x\rightarrow 0 \\
  \frac{\Gamma(\tau-\sigma)}{\Gamma(1-\sigma)}w^{-1-\tau} & \text{as } x\rightarrow \infty
\end{array}\right .
\]
Equations~\eqref{eq:betaprimeRV1} and \eqref{eq:betaprimeRV2} then follow from \cref{th:Karamata} and \cref{th:Karamata2}.

\subsection{Proof of \cref{prop:PLCRMzero}}
\label{sec:proof_of_prop:PLCRMzero}

\begin{lemma}\label{lemma:poisson_asymp}
Let $(X_k)_{k\geq 1}$ be a sequence of Poisson random variables such that $$\frac{\log k}{\mathbb{E} X_k} \rightarrow 0.$$ Then
\[
\frac{X_k}{\mathbb{E} X_k} \rightarrow 1 \text{  a.s. }
\]
\end{lemma}
\begin{proof}
Let $X$ be a Poisson random variable with parameter $\lambda $. Using the Chernoff bound, it comes that for any $t>0$
\begin{equation*}
\mathbb{P}(|X-\lambda| \geq \lambda t) \leq 2 e^{-\frac{\lambda t^2}{2(1+t)}}.
\end{equation*}

Let $ 0< \epsilon < 1/2$. We deduce from previous inequality that
\begin{align*}
\mathbb{P} \left ( \left |\frac{X_k}{\mathbb{E} X_k} -1 \right | \geq \epsilon \right ) & \leq 2 e^{-\frac{\epsilon^2 \mathbb{E} X_k}{4}}\nonumber\\
&=2 k^{-\frac{\epsilon^2 \mathbb{E} X_k}{4 \log k}}
\end{align*}

Using the assumption, we have that $-\frac{\epsilon^2 \mathbb{E} X_k}{4 \log k}\rightarrow -\infty$. Therefore, the RHS is summable. The almost sure result follows from Borel-Cantelli lemma.
\end{proof}

Now we can prove \cref{prop:PLCRMzero}. Let $\nu = \sum_k \delta_{w_k}$. Then, for all $x>0$, $\ \nu([x,+\infty) )$ is a Poisson random variable with mean $\overline{\rho}(x) $. Let us show that,
$$\nu([1/x,+\infty)) \stackrel{x\rightarrow +\infty}{\sim} x^{\alpha} \ell(x)\ \ \ a.s.$$
Using \cref{lemma:poisson_asymp} on the sequence $(\nu([1/k,+\infty)))_{k \geq 1}$, we find that
$$\nu([1/k,+\infty)) \stackrel{k\rightarrow +\infty}{\sim} k^{\alpha} \ell_1(k)\ \ \ a.s.$$
Now, since $x \mapsto \nu([1/x,+\infty))$ is almost surely non decreasing, it comes that
$$\nu([1/\lfloor x \rfloor,+\infty)) \leq \nu([1/x,+\infty)) \leq \nu([1/\lfloor x+1 \rfloor,+\infty))$$
We get the desired result by noticing that
$$(\lfloor x+1 \rfloor)^\alpha \ell_1 (\lfloor x+1 \rfloor) \sim (\lfloor x \rfloor)^\alpha \ell_1 (\lfloor x \rfloor) \sim  x^\alpha \ell_1(x). $$
Now, pick $K_0$ such that $\sum_{k\geq K_0} w_{(k)} < 1$, and define $p_k = w_{(k)}$ if $k\geq K_0$ and $p_k = \frac{1-\sum_{j\geq K_0} w_{(j)}}{K_0-1}$ otherwise. Notice that $\sum p_k = 1$ and for $x \leq w_{(K_0)}$,
$$ \# \{ p_k | p_k \leq x \} = \nu([x,+\infty)).$$
We can therefore apply \citet[Proposition 23]{Gnedin2007}, leading to
$$ p_{k} \sim k^{-1/\alpha} \ell_1^*(k) $$
with $k \rightarrow +\infty$, where $\ell_1^*$ is a slowly varying function defined by
\[
\ell_1^*(x) = \frac{1}{ \{ \ell_1^{1/\alpha} (x^{1/\alpha}) \}^\# },
\]
where $\ell^\#$ denotes a de Bruijn conjugate \citep[Definition 1.5.13]{Bingham1989} of the slowly varying function $\ell$. Therefore, since $w_{(k)} = p_k$ for $k$ large enough, it comes that
$$ w_{(k)} \sim k^{-1/\alpha} \ell_1^*(k)$$
almost surely as $k\rightarrow\infty$.

\subsection{Proof of \cref{prop:PLCRMinfinity}}
\label{sec:proof_of_prop:PLCRMinfinity}
The proof of this proposition follows the line of the proof of \citet[Theorem 1.2]{Kevei2014}. Let
$$ \overline\rho^{-1} (y) = \sup \{x \mid  \overline\rho (x) > y\}$$ denote the inverse tail L\'evy intensity. Let $(w_{(k)})_{k \geq 1}$ be the ordered jumps of a CRM with L\'evy measure $\eta\rho(dw)$. From the inverse L\'evy measure representation of a real valued Poisson point process, we know that $$(w_{(k)})_{k \geq 1} \overset{d}{=} (\overline\rho^{-1}(\Gamma_k/\eta))_{k \geq 1},$$ where $(\Gamma_k)_{k\geq 1}$ are the points of a unit-rate Poisson point process on $(0, \infty)$, sorted in increasing order. In particular, we have that
$$ (w_{(k_1)}, w_{(k_1+k_2)}) \overset{d}{=} \left (\overline\rho^{-1}\left (\frac{X_1}{\eta}\right ), \overline\rho^{-1}\left (\frac{X_1 + X_2}{\eta}\right )\right ),$$
where $X_1$ and $X_2$ are independent Gamma random variables, with respective parameters $(k_1, 1)$ and $(k_2,1)$. Therefore,
$$ \frac{w_{(k_1+k_2)}}{w_{(k_1)}} \overset{d}{=} \frac{\overline\rho^{-1}(X_1/\eta)}{\overline\rho^{-1}( (X_1+X_2)/\eta )}.$$
Since $\overline\rho^{-1}$ is the generalized inverse of $\overline{\rho}_1$, which is regularly varying at $\infty$ with parameter $\tau$, it follows from \citet[Lemma 22]{Gnedin2007} that $\overline\rho^{-1}$ is regularly varying at 0 with parameter $1/\tau$. Therefore, the right-hand side expression of the last equation converges almost surely to
$\frac{X_1^{1/\tau}}{(X_1+X_2)^{1/\tau}}$ as $\eta \rightarrow \infty$. From which we conclude that
$$ \frac{w_{(k_1+k_2)}^\tau}{w^\tau_{(k_1)}}\overset{d}{\rightarrow} \frac{X_1}{X_1+X_2}\overset{d}{=} \Bet(k_1,k_2),$$
as $\eta \rightarrow +\infty$

\subsection{Proof of \cref{prop:scaling}}
\label{sec:proof_of_prop:scaling}
In order to prove this proposition, we need to introduce some notations and results on generalized-kernel based Abelian theorems. Interested reader can refer to  \citet[Chapter 4]{Bingham1989} for more details.
Given a measurable kernel $k:(0,\infty)\rightarrow\infty$, let%
\[
\check{k}(z)=\int_{0}^{\infty}t^{-z-1}k(t)dt=\int u^{z-1}k(1/u)du
\]
be its Mellin transform, for $z\in\mathbb{C}$ such that the integral converges. We will use Theorem 4.1.6 page 201 in \cite{Bingham1989} (that we recall here after) to derive the behaviour at $+\infty$.

\begin{theorem}[Theorem 4.1.6 page 201 in\ Bingham et al.]
Suppose that \thinspace$k$ converge at least in the strip
$\sigma\leq\operatorname{Re}(z)\leq\Sigma$, where $-\infty<\sigma<\Sigma<\infty$.
Let $\xi\in(\sigma,\Sigma)$, $\ell$ a slowly varying function, $c\in
\mathbb{R}.$ If $f$ is measurable, $f(x)/x^{\sigma}$ is bounded on every
interval $(0,a]$ and%
\[
f(x)\sim cx^{\xi}\ell(x)\text{ as }x\rightarrow\infty
\]
then
\[
\int_{0}^{\infty}k(x/t)f(t)t^{-1}dt\sim c\check{k}(\xi)x^{\rho}\ell(x)\text{
as }x\rightarrow\infty
\]

\end{theorem}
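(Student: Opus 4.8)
The plan is to prove this as a standard Abelian (dominated-convergence) statement for Mellin convolutions, after first putting the integral in a convenient form. Substituting $t=x/u$ turns the convolution into $I(x):=\int_0^\infty k(u)\,f(x/u)\,u^{-1}\,du$, and since the natural normalizer is $F(x):=x^\xi\ell(x)$ (note the exponent in the conclusion should read $x^\xi$, matching the hypothesis $f(x)\sim cx^\xi\ell(x)$, rather than $x^\rho$), I would study $I(x)/F(x)=\int_0^\infty k(u)u^{-1}\,[f(x/u)/F(x)]\,du$. The candidate limit is $\int_0^\infty c\,k(u)u^{-\xi-1}\,du=c\,\check k(\xi)$, recalling $\check k(\xi)=\int_0^\infty u^{-\xi-1}k(u)\,du$; so the whole statement reduces to justifying the passage to the limit under the integral sign.

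First I would establish the pointwise limit of the integrand. For each fixed $u>0$, write
\[
\frac{f(x/u)}{F(x)}=\frac{f(x/u)}{F(x/u)}\cdot u^{-\xi}\cdot\frac{\ell(x/u)}{\ell(x)}.
\]
As $x\to\infty$ the first factor tends to $c$ (since $f\sim cF$) and the last tends to $1$ by slow variation of $\ell$, so the integrand converges to $c\,k(u)u^{-\xi-1}$ for every $u>0$, with total mass $c\check k(\xi)$ as desired.

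The crux is controlling the tails of the $u$-integral uniformly in $x$, and this is where the strip hypothesis $\sigma<\xi<\Sigma$ does the work. I would fix $0<A<B<\infty$ and treat $\int_A^B$ by the uniform convergence theorem for regular variation (so $f(x/u)/F(x)\to cu^{-\xi}$ uniformly on the compact $[A,B]$) together with dominated convergence, yielding $c\int_A^B k(u)u^{-\xi-1}du$. It then remains to show the tails $\int_0^A$ and $\int_B^\infty$ are uniformly small for large $x$. Choose $\delta>0$ with $\xi\pm\delta\in(\sigma,\Sigma)$. Near $u=0$, where $x/u$ is large, Potter's bounds give $f(x/u)/F(x)\le Cu^{-\xi-\delta}$ for $u\le A$ and $x$ large, so $\int_0^A$ is bounded by $C\int_0^A|k(u)|u^{-\xi-\delta-1}du$, small for small $A$ because $\check k$ converges at $\xi+\delta\le\Sigma$. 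For $u\to\infty$ I would split once more: where $x/u$ is still large Potter gives a bound by $C\int_B^\infty|k(u)|u^{-\xi+\delta-1}du$, small for large $B$ since $\xi-\delta\ge\sigma$ makes this integral convergent; on the part $u\ge x/a$, where $x/u\le a$, the hypothesis $|f(y)|\le My^\sigma$ on $(0,a]$ bounds the contribution by $Mx^{\sigma-\xi}\ell(x)^{-1}\int_{x/a}^\infty|k(u)|u^{-\sigma-1}du=o(1)$.

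Letting $A\to0$ and $B\to\infty$ after $x\to\infty$ then gives $I(x)/F(x)\to c\int_0^\infty k(u)u^{-\xi-1}du=c\check k(\xi)$, which is the assertion. I expect the tail control to be the only real difficulty: Potter's bounds hold only once the argument exceeds a $\delta$-dependent threshold, so they must be reconciled with the genuinely different near-zero regime governed by the crude bound $f(y)=O(y^\sigma)$, and it is precisely the two-sided room $\sigma<\xi<\Sigma$ that simultaneously tames the singularity of the majorant at $u=0$ and its tail at $u=\infty$, the integrability at both ends being exactly the absolute convergence of $\check k$ throughout the strip.
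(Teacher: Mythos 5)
The paper gives no proof of this statement: it is Theorem 4.1.6 of Bingham, Goldie and Teugels (1989), quoted verbatim as a known background result (including the typo $x^{\rho}$ for $x^{\xi}$ in the conclusion, which you rightly correct). Your argument is correct and is essentially the textbook's own proof of this classical Abelian theorem --- substitute $t=x/u$, split the integral at $[A,B]$, use the uniform convergence theorem for regular variation on the compact part, Potter's bounds where $x/u$ is large, and the crude bound $|f(y)|\leq My^{\sigma}$ where $x/u$ is small (valid since the hypothesis holds on $(0,a]$ for \emph{every} $a$, so $a$ can be taken equal to the Potter threshold, closing the gap you flag) --- with every tail estimate resting, as you note, on reading ``converges'' as absolute convergence of $\check{k}$ throughout the strip $\sigma\leq\operatorname{Re}(z)\leq\Sigma$, which is indeed the assumption in Bingham et al.
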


To get the behaviour at $0$, we will use the following corollary.

\begin{corollary}
Let the Mellin transform $\check{k}$ of \thinspace$k$ converge at least in the
strip $\tau_{1}\leq\operatorname{Re}(z)\leq\tau_{2}$, where $-\infty<\tau
_{1}<\tau_{2}<\infty$. Let $\xi\in(\tau_{1},\tau_{2})$, $\ell$ a slowly
varying function, $c\in\mathbb{R}.$ If $f$ is measurable, $f(x)/x^{\tau_{2}}$
is bounded on every interval $[a,\infty)$ and%
\[
f(x)\sim cx^{\xi}\ell(1/x)\text{ as }x\rightarrow0
\]
then
\[
\int_{0}^{\infty}k(x/t)f(t)t^{-1}dt\sim c\check{k}(\xi)x^{\xi}%
\ell(1/x)\text{ as }x\rightarrow0
\]

\end{corollary}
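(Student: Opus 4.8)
The plan is to deduce the behavior at $0$ from the behavior at $\infty$ (Theorem 4.1.6) via the multiplicative inversion $x\mapsto 1/x$, which is the natural symmetry of Mellin convolution. Write $I(x)=\int_0^\infty k(x/t)f(t)t^{-1}dt$ for the quantity of interest. First I would introduce the reflected kernel $\tilde k(u)=k(1/u)$ and the reflected function $g(s)=f(1/s)$, and rewrite $I(1/x)$ in terms of these. Performing the change of variable $t=1/s$, under which the Haar measure $t^{-1}dt$ is invariant, gives
$$I(1/x)=\int_0^\infty k\!\left(\frac{1}{xt}\right)f(t)\,t^{-1}\,dt=\int_0^\infty \tilde k(x/s)\,g(s)\,s^{-1}\,ds,$$
which is exactly a Mellin convolution of $\tilde k$ against $g$ in the form required by Theorem 4.1.6.

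Next I would verify that $\tilde k$ and $g$ satisfy the hypotheses of Theorem 4.1.6, tracking how each quantity transforms under inversion. From the definition $\check k(z)=\int_0^\infty t^{-z-1}k(t)\,dt$ and the change of variable $t=1/u$ one obtains the identity $\check{\tilde k}(z)=\check k(-z)$; consequently $\check{\tilde k}$ converges on the reflected strip $-\tau_2\le\operatorname{Re}(z)\le-\tau_1$. The hypothesis $f(x)\sim cx^{\xi}\ell(1/x)$ as $x\to 0$ translates into $g(s)\sim cs^{-\xi}\ell(s)$ as $s\to\infty$, so the relevant exponent for Theorem 4.1.6 is $\xi'=-\xi$, which lies in $(-\tau_2,-\tau_1)$ precisely because $\xi\in(\tau_1,\tau_2)$. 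Finally, boundedness of $f(x)/x^{\tau_2}$ on every interval $[a,\infty)$ becomes boundedness of $g(s)/s^{-\tau_2}$ on every interval $(0,1/a]$, which is exactly the local-boundedness hypothesis of Theorem 4.1.6 with lower strip boundary $\sigma=-\tau_2$.

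With all hypotheses in place, Theorem 4.1.6 yields
$$I(1/x)\sim c\,\check{\tilde k}(-\xi)\,x^{-\xi}\ell(x)=c\,\check k(\xi)\,x^{-\xi}\ell(x)\quad\text{as }x\to\infty,$$
where I used $\check{\tilde k}(-\xi)=\check k(\xi)$. Substituting $y=1/x$ and letting $y\to 0$ gives $I(y)\sim c\,\check k(\xi)\,y^{\xi}\ell(1/y)$, which is the claim. The one step requiring care is the bookkeeping around the inversion: ensuring the convergence strip, the exponent, and the local-boundedness interval all reflect consistently, and that the Mellin identity $\check{\tilde k}(z)=\check k(-z)$ is applied at the correct point $z=-\xi$. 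Everything else is a direct appeal to Theorem 4.1.6, since inversion is an isometry of the multiplicative convolution structure and introduces no new analytic difficulty.
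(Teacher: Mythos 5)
Your proof is correct and follows essentially the same route as the paper's own: both reduce the claim to Theorem 4.1.6 via the inversion $t\mapsto 1/t$, replacing $k$ and $f$ by $\widetilde{k}(u)=k(1/u)$ and $\widetilde{f}(s)=f(1/s)$, and using the reflection identity $\check{\widetilde{k}}(z)=\check{k}(-z)$ so that the convergence strip becomes $-\tau_2\leq\operatorname{Re}(z)\leq-\tau_1$, the exponent becomes $-\xi$, and the boundedness hypothesis transfers to $(0,1/a]$. If anything, your bookkeeping of these transformed hypotheses is spelled out more explicitly than in the paper's rather terse version of the same argument.
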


\bigskip

\begin{proof}

\begin{align*}
\int_{0}^{\infty}k(x/t)f(t)t^{-1}dx  &  =\int_{0}^{\infty}k(xu)f(1/u)u^{-1}%
du\\
&  =\int_{0}^{\infty}\widetilde{k}(1/(xu))\widetilde{f}(u)u^{-1}du
\end{align*}
where $\widetilde{f}(x)=f(1/x)$, $\widetilde{f}(x)/x^{-\tau_{2}}$ bounded on
every interval $(0,1/a]$ with%
\[
\widetilde{f}(x)=f(1/x)\sim cx^{-\rho}\ell(x)\text{ as }x\rightarrow\infty
\]
and $\widetilde{k}(x)=k(1/x)$ is such that its Mellin transform converges in
the strip $-\tau_{2}\leq\operatorname{Re}(z)\leq-\tau_{1}$. Theorem 4.1.6
above therefore gives the result.

\end{proof}

We can now proceed with the proof of \cref{prop:scaling}.

\begin{proof}
Let $\rho_0$ and $f_Z$, both regularly varying at 0 such that
\begin{eqnarray}
\overline\rho_0(x) &\sim &  x^{-\alpha}\ell_1(1/x) \label{eq:reg_rho_0}\\
f_Z(z) &\sim &  \tau z^{\tau-1}\ell_2(1/z) \label{eq:reg_f_Z}
\end{eqnarray}
with $\alpha < \tau$. Since $\rho_0$ is cadlag and $f_z$ is locally bounded, $\rho_0$ and $f_Z$ are bounded on any set of the form $[a,b]$ for $0 < a < b$. Suppose that there exists $\beta > \tau$ such that $\mu_\beta = \int_0^\infty w^\beta \rho_0(w) dw < + \infty$. Let
$$\rho(w) = \int_0^\infty z f_Z(z) \rho_0(wz) dz.$$
Using the change of variables $Y = 1/Z$, we can equivalently write
$$ \rho(w) = \int_0^\infty  f_Y(y) \rho_0(w/y) y^{-1} dz,$$
with $f_Y(y) = y^{-2}f_Z(1/y)$. From Equation \eqref{eq:reg_f_Z}, $f_Y(y) \sim \tau y^{-1-\tau} \ell_2(y)$ when $y\rightarrow +\infty$. Let $\xi = -1-\tau$, $\sigma = -1-\beta$ and $\Sigma \in (-1-\tau, -1-\alpha)$ (since $\alpha < \tau$). We notice that for any $\delta \in [\sigma, \Sigma],$
\begin{eqnarray*}
 t^{-\delta-1} \rho_0(t) &=& O(t^{-\Sigma-1} \rho_0(t)) \text{ as } t \rightarrow 0 \\
 t^{-\delta-1} \rho_0(t) &=& O(t^\beta \rho_0(t)) \text{ as } t \rightarrow +\infty
\end{eqnarray*}
Since $\rho_0$ is bounded on any set of the form $[a,b]$ and $-\Sigma-1-1-\alpha > -1$, it comes that $\check{\rho}_0(\delta) < +\infty$. Besides, $f_Y(y)y^{-\sigma} \rightarrow 0$ as $y\rightarrow 0$. Therefore we can apply the previous theorem from which we deduce that
$$ \rho(w) \sim \check{\rho}_0(-1-\tau) w^{-1-\tau} \ell_2(w),$$
which give the required asymptotic behaviour noticing that $\check{\rho}_0(-1-\tau) = \mu_\tau$. For the behaviour at $0$, we write
$$ \rho(w) = \int \rho_0(w_0) f_Y(w/w_0) w_0^{-1} dw_0, $$
and take $\tau_1 \in (-1-\tau, -1-\alpha)$, $\tau_2 = -1$ and $\xi = -1-\alpha$. Similarly as before, we can show that the conditions of the corollary are satisfied, which gives the expected result.
\end{proof}

\subsection{Proof of \cref{cor:prop} }
Denote $f_{(k)} = \frac{w_{(k)}}{W(\Theta)}$. From Equation \eqref{eq:rankedinf} of \cref{th:doublePLranks}, and \citet[Proposition 23]{Gnedin2007}, we have that almost surely the discrete probability measure $(f_{(k)})_{k\geq 1}$ satisfies \citet[Equation (17)]{Gnedin2007} (which is simply an equivalent way of writing the regularly varying property). We conclude by noticing that Corollary 21 of the same paper gives Equation \eqref{eq:proportion}.

\section{Useful properties}

\[
\gamma(1,x)=1-e^{-x}%
\]
\begin{align*}
\gamma(s,x)&=\int_{0}^{x}u^{s-1}e^{-u}du\\
&=x^s\int_0^1 v^{s-1}e^{-vx}dv
\end{align*}
\[
\gamma(s,x)\sim\frac{x^{s}}{s}%
\]
as $x\rightarrow0$. We have%
\[
\int_{w_{0}}^{\infty}w^{m-1-\tau}e^{-wt}dw=t^{\tau-m}\Gamma(m-\tau,tw_{0})
\]



\section{Generalized BFRY distribution}
\label{sec:genBFRY}

The BFRY random variable~\citep{Bertoin2006, Devroye2014} is a positive random variable $W$ with density
\[
f_W(w) = \frac{\alpha}{\Gamma(1-\alpha)}w^{-1-\alpha}(1-e^{-w}), \quad \alpha \in (0, 1).
\]
$W$ is a heavy tailed random variable with infinite mean, and is known to have a close connection to the stable and generalized gamma processes~\citep{Lee2016}. $W$ can be simulated as $W = X/Y$ where $X \sim \mathrm{Gamma}(1-\alpha, 1)$ and $Y \sim \mathrm{Beta}(\alpha, 1)$.

Now let $W = X/Y$, $X \sim \mathrm{Gamma}(\kappa, 1)$ and $Y\sim\mathrm{Beta}(\alpha, 1)$, with parameters $\kappa,\alpha > 0$. Then the density of $W$ is computed as
\begin{align}
f_W(w) &= \int_0^1 y f_X(wy) f_Y(y) dy \nonumber\\
&= \frac{\alpha}{\Gamma(\kappa)}\int_0^1 y (wy)^{\kappa-1} e^{-wy} y^{\alpha-1} dy \nonumber\\
&= \frac{\alpha}{\Gamma(\kappa)} w^{\kappa-1}
 \int_0^1 y^{\kappa+\alpha-1} e^{-wy} dy \nonumber\\
 &= \frac{\alpha}{\Gamma(\kappa)} w^{-\alpha-1} \gamma(\kappa+\alpha, w).
\end{align}
The resulting distribution, which we call as the \emph{generalized BFRY distribution}, contains the BFRY as its special case when $\alpha\in(0,1)$ and $\kappa=1-\alpha$, and has potentially heavier tail than the BFRY distribution. Like the BFRY distribution has a close connection with the stable and generalized gamma process, the generalized BFRY distribution has a close connection with the generalized BFRY process we described in the main text. Indeed, the generalized BFRY process can be thought as a process version of the generalized BFRY random variable, and the name generalized BFRY process was coined after this connection.

For $m < \alpha$, the moments are given by
\begin{align}
\mathbb E(W^m) = \frac{\alpha\Gamma(m+\kappa)}{(\alpha - m)\Gamma(\kappa)},
\end{align}
and $\mathbb E(W^m) = \infty$ for $m \geq \alpha$.

%

\section{Additional details on the inference}
\label{sec:suppinference}
Here we describe detailed inference procedures for Generalized BFRY process and Beta-prime process.
\subsection{Generalized BFRY process}
The L\'evy density of generalized BFRY process is written as
\begin{align}
\rho(w) = \frac{1}{\Gamma(1-\sigma)} w^{-1-\tau} \gamma(\tau-\sigma, w),
\end{align}
where we fixed $c=1$. The quantities required for the evaluation of the joint likelihood are
\begin{align}
\psi(t) &= \frac{\eta}{\sigma}\int_0^1 ((y+t)^\sigma - y^\sigma) y^{\tau-\sigma-1} dy \\
\kappa(m, t) &= \frac{\eta\Gamma(m-\sigma)}{\Gamma(1-\sigma)}
\int_0^1 \frac{y^{\tau-\sigma-1}}{(y + t)^{m-\sigma}}dy.
\end{align}
As explained in the main text, we introduce a set of latent variables $(Y_j)_{j=1}^{K_n}$ with
\begin{align}
p(y_j\mid \text{rest}) \propto \frac{y_j^{\tau-\sigma-1}}{(y_j+t)^{m_j-\sigma}} \1{0<y_j<1}.
\end{align}
The joint log-likelihood is then written as
\begin{align}
\lefteqn{p((m_{j})_{j=1,\ldots,K_n}, y, u | \eta, \sigma, \tau) \propto u^{n-1}e^{-\psi(u)} }\nonumber\\
& \times \prod_{j=1}^{K_n} \frac{\eta\Gamma(m_j-\sigma)}{\Gamma(1-\sigma)} \frac{y_j^{\tau-\sigma-1}}{(y_j+u)^{m_j-\sigma}}.
\end{align}
Since $y_j \in (0, 1)$, we take a transformation
\begin{align}
y_j = \frac{1}{1 + e^{-\tilde y_j}},
\end{align}
which yields
\begin{align}
\lefteqn{p((m_{j})_{j}, \tilde y, u | \eta, \sigma, \tau) \propto u^{n-1}e^{-\psi(u)}}\nonumber\\
& \times \prod_{j=1}^{K_n} \frac{\eta\Gamma(m_j-\sigma)}{\Gamma(1-\sigma)} \frac{y_j^{\tau-\sigma}(1-y_j)}
{(y_j+u)^{m_j-\sigma}}.
\end{align}

\paragraph{Sampling $\tilde y$}
We update $\tilde y$ via HMC~\citep{Duane1987, Neal2011}. The gradient of $\log p((m_{j})_{j}, \tilde y, u | \eta, \sigma, \tau)$ w.r.t.  $\tilde y_j$ is given as
\begin{align}
\bigg(\frac{\tau-\sigma}{y_j} - \frac{1}{1-y_j} - \frac{m_j-\sigma}{y_j+u}\bigg) \cdot y_j(1-y_j).
\end{align}
For all experiments, we used step size $\epsilon=0.05$ and number of leapfrog steps $L=30$.

\paragraph{Sampling $u$}
We take a transform $u = e^{\tilde u}$ and update $\tilde u$ via Metropolis-Hastings with proposal distribution $q(\tilde u' | \tilde u) = \mathrm{Normal}( \tilde u, 0.05)$.

\paragraph{Sampling $\eta$}
We place a prior $\eta \sim \mathrm{Lognormal}(0, 1)$, and updated $\eta$ via Metropolis-Hastings with proposal distribution $q(\hat\eta|\eta) = \mathrm{Lognormal}(\log \eta, 0.05)$.

\paragraph{Sampling $\sigma$}
We place a prior $\sigma \sim \mathrm{Logitnormal}(0, 1)$, and updated $\sigma$ via Metropolis-Hastings with proposal distribution $q(\hat\sigma|\sigma) = \mathrm{Logitnormal}(\mathrm{logit}(\sigma), 0.05)$.

\paragraph{Sampling $\tau$}
Since $\tau > \sigma$, instead of directly sampling $\tau$, we sampled $\delta = \tau - \sigma > 0$. Then we place a prior $\delta \sim \mathrm{Lognormal}(0, 1)$ and update $\delta$ via Metropolis-Hastings with proposal distribution $q(\hat\delta|\delta) = \mathrm{Lognormal}(\log \delta, 0.05)$.

\subsection{Beta prime process}
The L\'evy density of Beta prime process is
\begin{align}
\rho(w) = \frac{\Gamma(\tau-\sigma)}{\Gamma(1-\sigma)} w^{-1-\sigma}(1 + w)^{\sigma-\tau},
\end{align}
where we fixed $c=1$. Then we have
\begin{align}
\psi(t) &= \frac{\eta}{\sigma}\int_0^\infty ((y+t)^\sigma - y^\sigma) y^{\tau-\sigma-1} e^{-y} dy,\\
\kappa(m, t) &= \frac{\eta\Gamma(m-\sigma)}{\Gamma(1-\sigma)}\int_0^\infty
\frac{y^{\tau-\sigma-1}e^{-y}}{(y+t)^{m-\sigma}}dy.
\end{align}
As for the generalized BFRY process, we augment the joint likelihood with a set of latent variables $(Y_j)_{j=1}^{K_n}$ with density
\begin{align}
p(y_j\mid \text{rest}) \propto \frac{y_j^{\tau-\sigma-1}}{(y_j+u)^{m_j-\sigma}} \1{y_j < 0},
\end{align}
which yields
\begin{align}
\lefteqn{p((m_{j})_{j}, y, u | \eta, \sigma, \tau) \propto u^{n-1}e^{-\psi(u)}}\nonumber\\
&\times \prod_{j=1}^{K_n} \frac{\eta\Gamma(m_j-\sigma)}{\Gamma(1-\sigma)}
\frac{y_j^{\tau-\sigma-1}}{(y_j+u)^{m_j-\sigma}}.
\end{align}
Since $y_j > 0$, we take a transformation $y_j = e^{\tilde y_j}$ to have
\begin{align}
\lefteqn{p((m_{j})_{j}, y, u | \eta, \sigma, \tau) = \frac{u^{n-1}e^{-\psi(u)}}{\Gamma(n)}}\nonumber\\
&\times \prod_{j=1}^{K_n} \frac{\eta\Gamma(m_j-\sigma)}{\Gamma(1-\sigma)}
\frac{y_j^{\tau-\sigma}}{(y_j+u)^{m_j-\sigma}}.
\end{align}

\paragraph{Sampling $\tilde y$} We update $\tilde y$ via HMC. The gradient required for $\tilde y$ is computed as
\[
\tau - \sigma - y_j - (m_j-\sigma) \frac{y_j}{y_j + u}.
\]

\paragraph{Sampling $u, \eta, \sigma, \tau$} Same as for the generalized BFRY process.

\section{Results of experiments}

\subsection{Synthetic data}
\label{sec:synthetic_expr}
As explained in the main text, we sample simulated datasets from the GBFRY and the BP models with parameters $\sigma = 0.1$, $\tau = 2$, $c = 1 $ and $ \eta = 4000 $. We run the MCMC algorithm described in Section 4.2 with $100\,000$ iterations. The $95\%$ credible intervals are $\sigma \in (0.09, 0.12)$, $\tau \in (1.6, 2.2)$ for the BFRY and $\sigma \in (0.08, 0.11)$, $\tau \in (1.8, 2.3)$ for the BP model. The MCMC algorithm is therefore able to recover the true parameters. Trace plots are reported in \cref{fig:post_BFRY} and \cref{fig:post_BP}.

\begin{figure}
\includegraphics[width=0.33\textwidth]{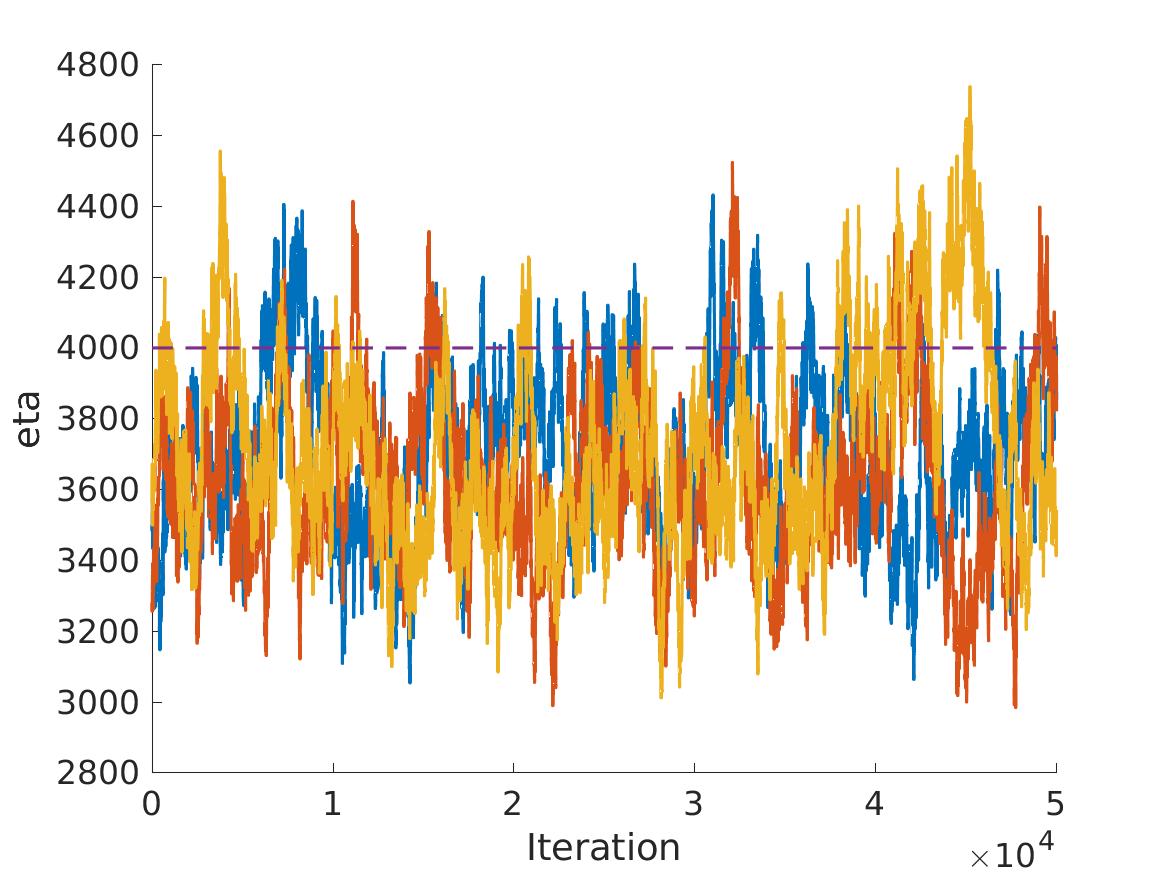}
\includegraphics[width=0.33\textwidth]{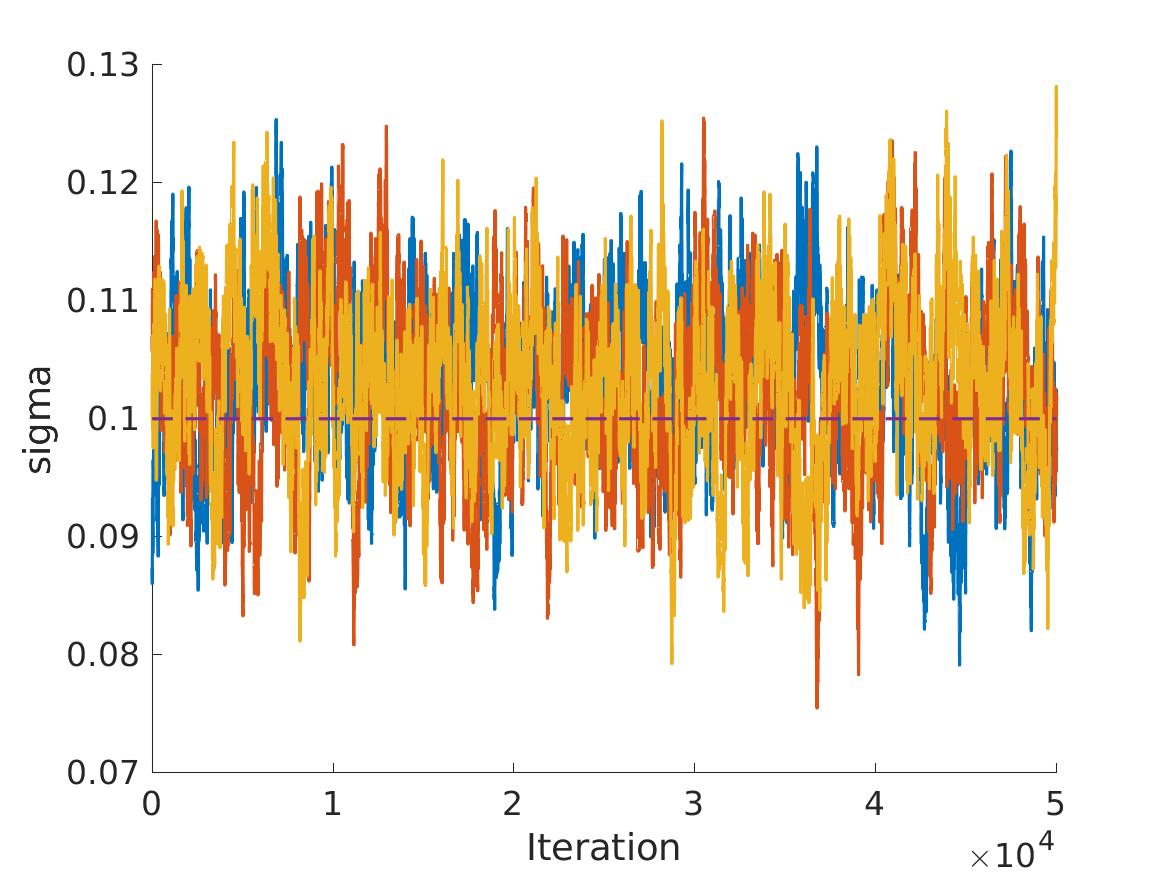}
\includegraphics[width=0.33\textwidth]{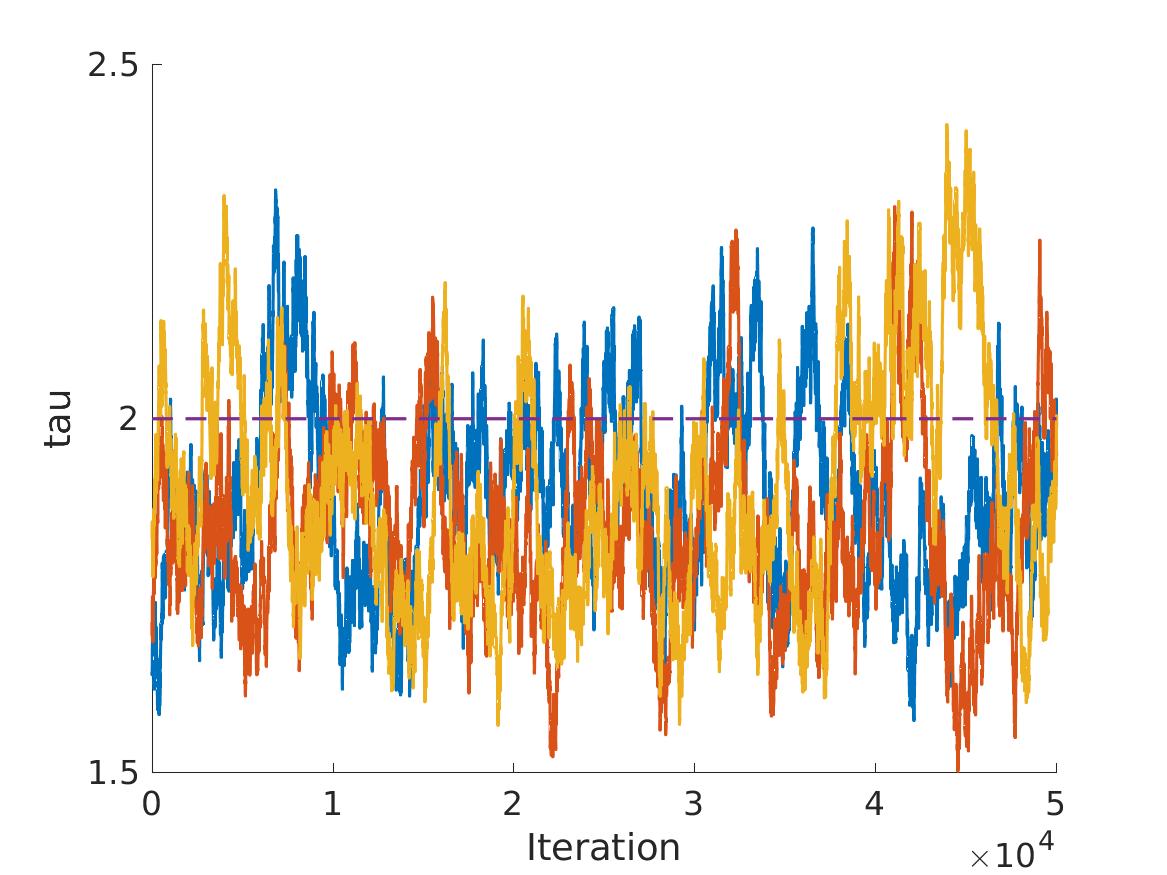}
\caption{Trace plots of the parameter samples for the Generalized BFRY model. Dashed line represents true value of the parameter.}
\label{fig:post_BFRY}
\end{figure}

\begin{figure}
\includegraphics[width=0.33\textwidth]{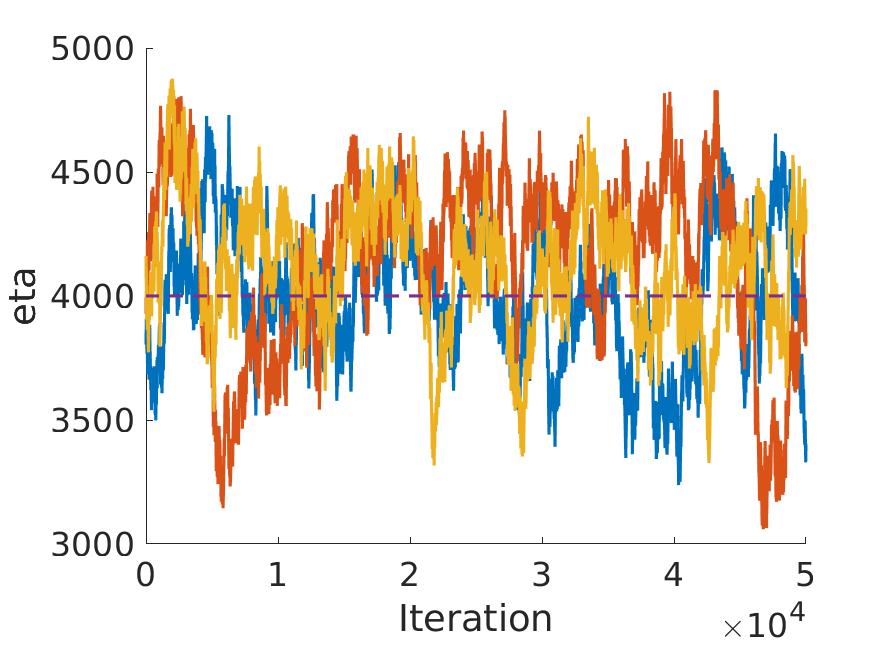}
\includegraphics[width=0.33\textwidth]{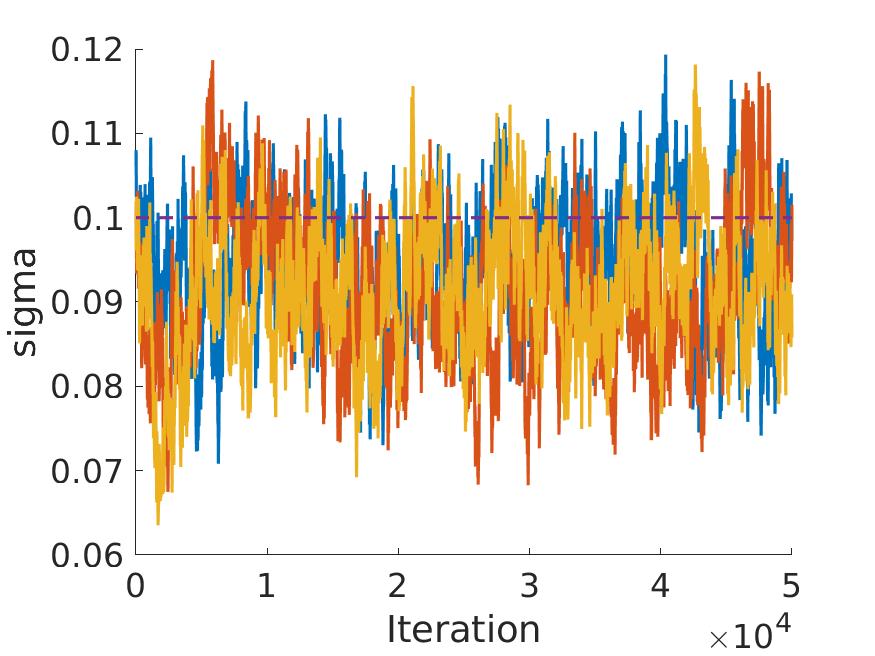}
\includegraphics[width=0.33\textwidth]{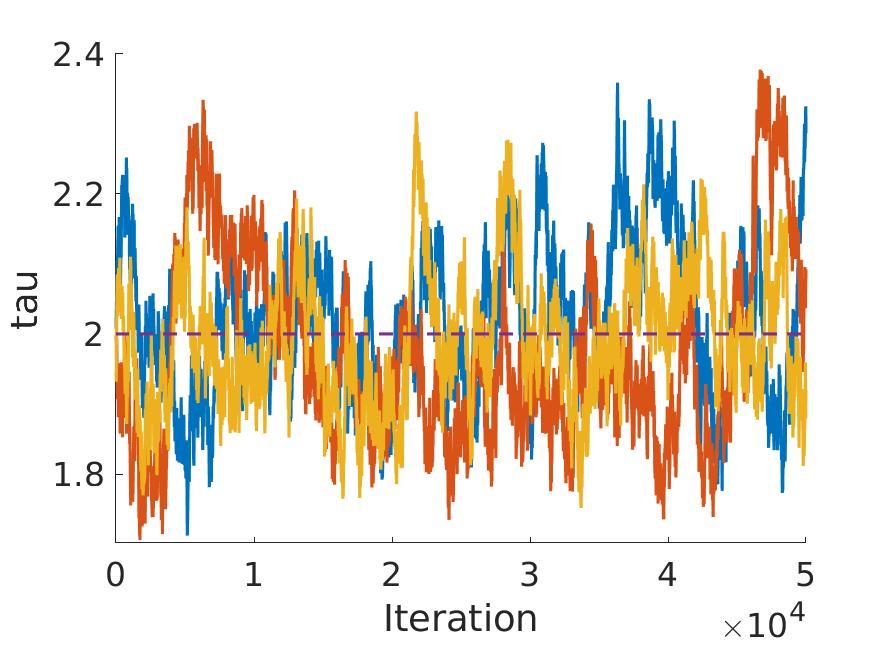}
\caption{Trace plots of the parameter samples for the Beta prime process model. Dashed line represents true value of the parameter.}
\label{fig:post_BP}
\end{figure}

\subsection{Real data}
\label{sec:real_expr}

Here we report the results for the 5 datasets described in the main text. We report the $95\%$ credible intervals of the posterior predictive for the proportion of occurrences and ranked frequencies of the Generalized BFRY, BP, normalized GGP and PY models for each dataset in \cref{fig:ANC_prop_and_rank} to \cref{fig:twitter_prop_and_rank}. We can see that as predicted the GGP and PY do not manage to capture the behavior of the large clusters (which are on the right of the figures displaying the proportion of clusters of a given size, and on the left on the figures displaying the ordered sizes of the clusters).

\begin{figure}
\centering
\subfigure[Generalized BFRY]{\includegraphics[width=.24\linewidth]{figures/ANC/model1/pred_proportion.jpg}}
\subfigure[Beta prime]{\includegraphics[width=.24\linewidth]{figures/ANC/model2/pred_proportion.jpg}}
\subfigure[GGP]{\includegraphics[width=.24\linewidth]{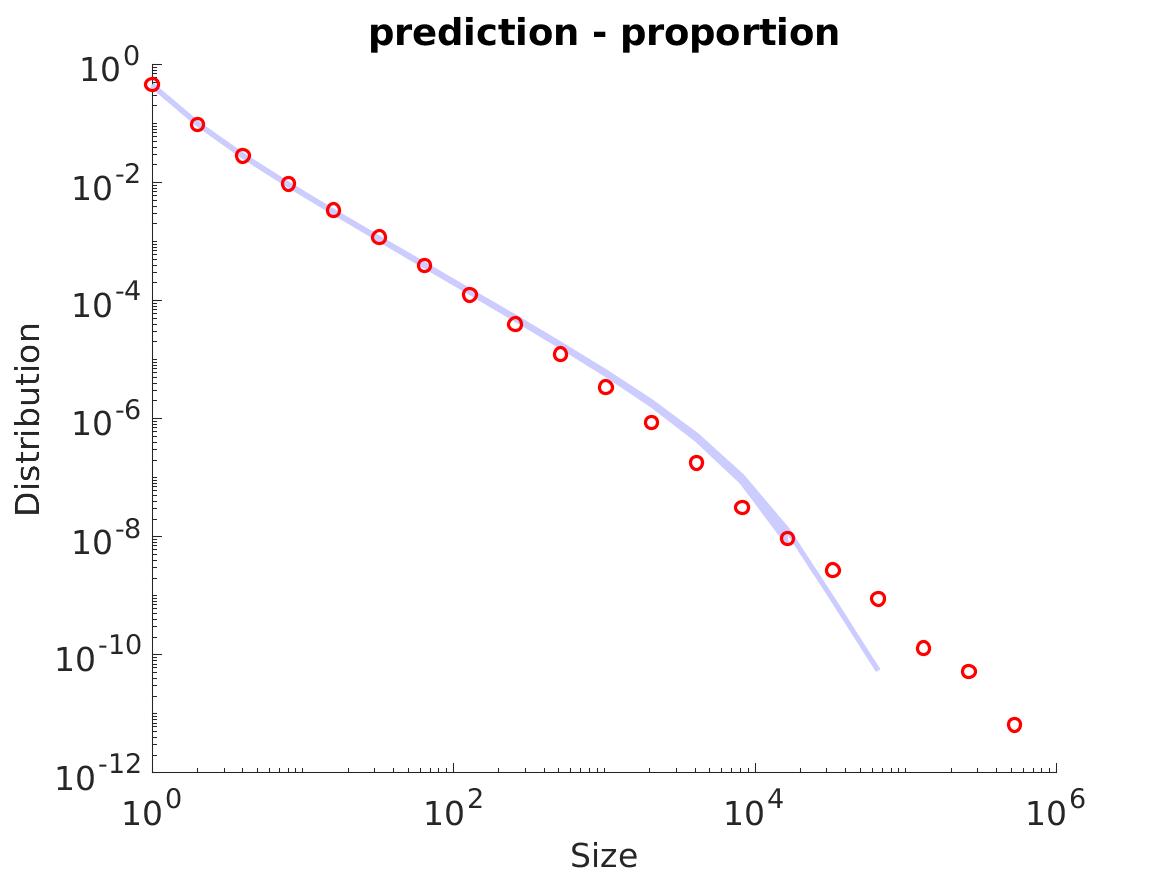}}
\subfigure[PY]{\includegraphics[width=.24\linewidth]{figures/ANC/PY/pred_proportion.jpg}}
\subfigure[Generalized BFRY]{\includegraphics[width=.24\linewidth]{figures/ANC/model1/pred_rank.jpg}}
\subfigure[Beta prime]{\includegraphics[width=.24\linewidth]{figures/ANC/model2/pred_rank.jpg}}
\subfigure[GGP]{\includegraphics[width=.24\linewidth]{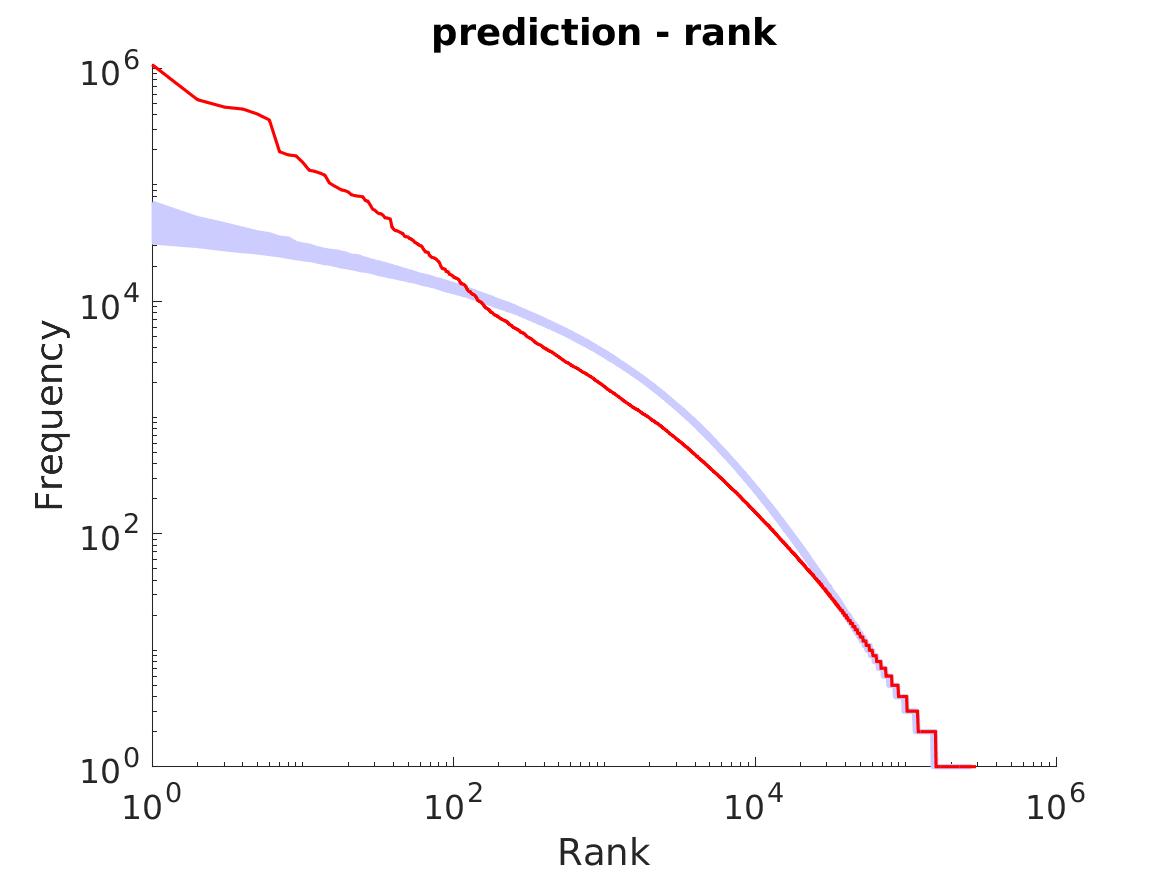}}
\subfigure[PY]{\includegraphics[width=.24\linewidth]{figures/ANC/PY/pred_rank.jpg}}
\caption{(Top) proportion of clusters of a given size in the ANC dataset: $95\%$ credible interval of the posterior predictive in blue, real values in red. (Bottom) ordered size of the clusters in the ANC dataset: $95\%$ credible interval of the posterior predictive in blue, real values in red.}
\label{fig:ANC_prop_and_rank}
\end{figure}

\begin{figure}
\centering
\subfigure[Generalized BFRY]{\includegraphics[width=.24\linewidth]{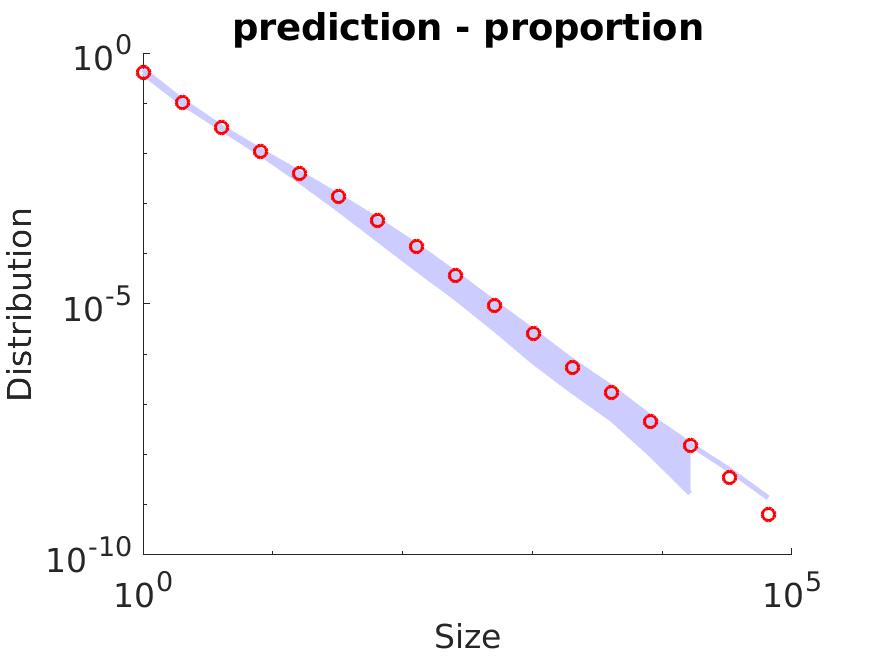}}
\subfigure[Beta prime]{\includegraphics[width=.24\linewidth]{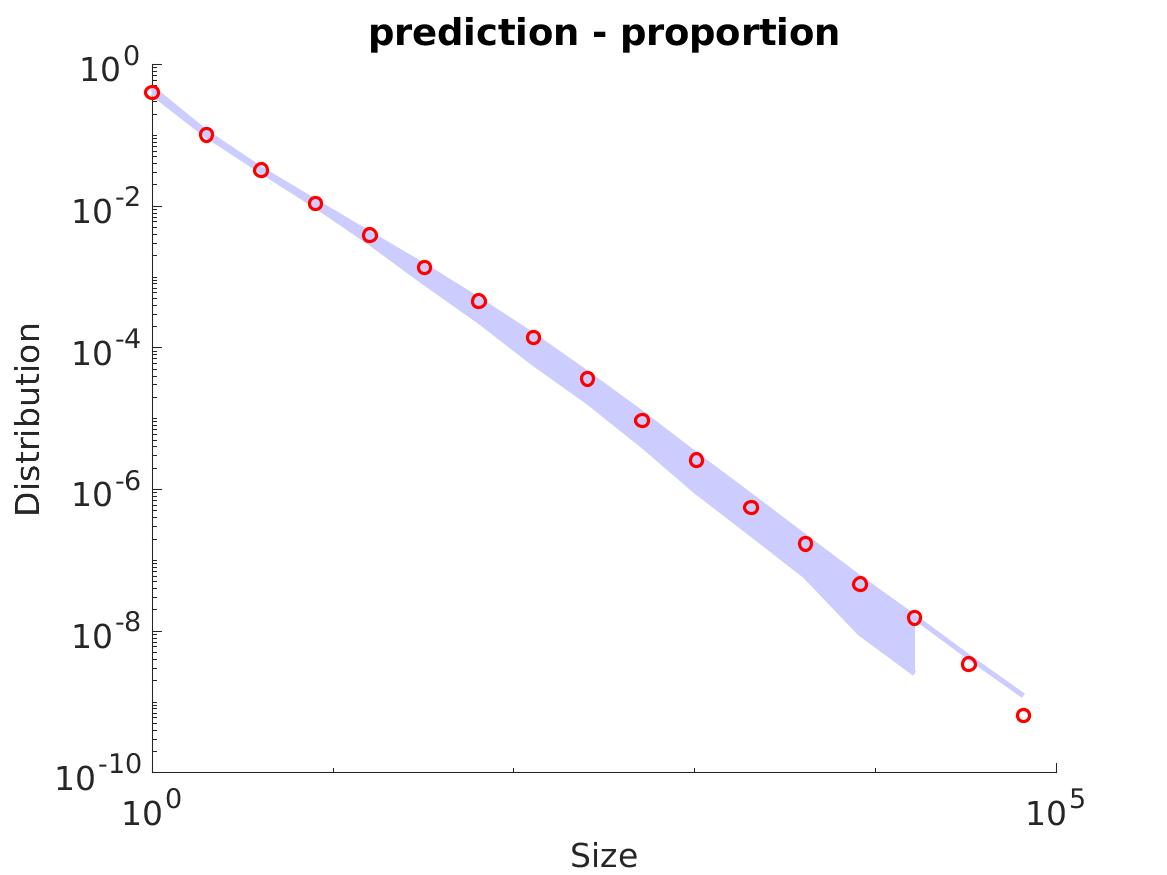}}
\subfigure[GGP]{\includegraphics[width=.24\linewidth]{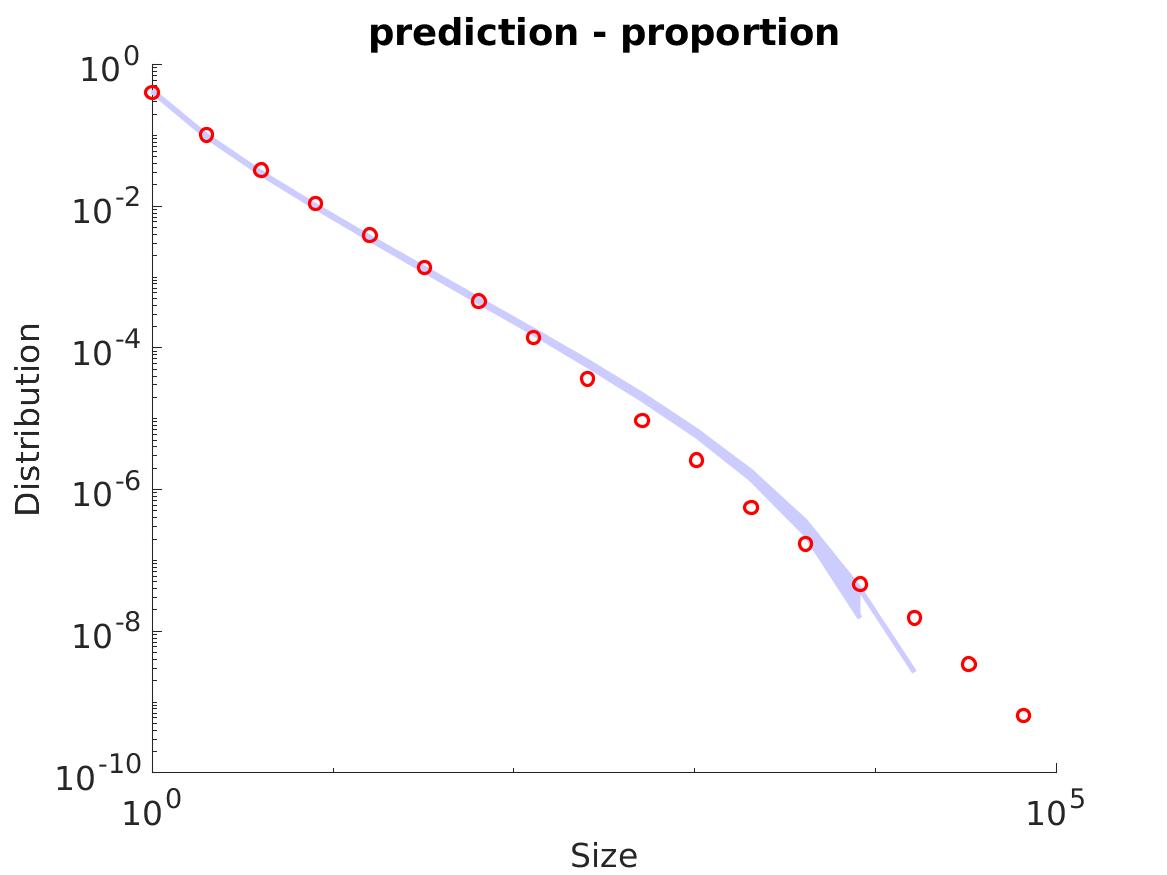}}
\subfigure[PY]{\includegraphics[width=.24\linewidth]{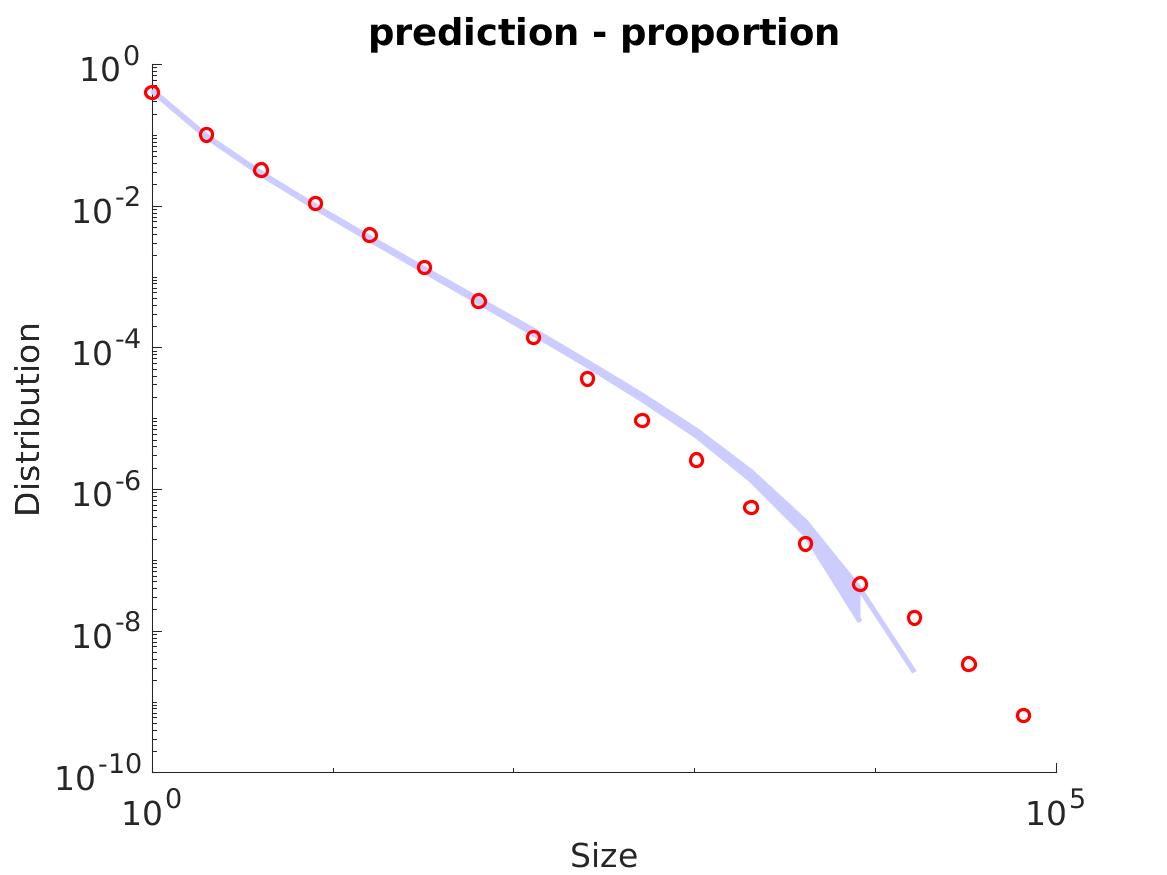}}
\subfigure[Generalized BFRY]{\includegraphics[width=.24\linewidth]{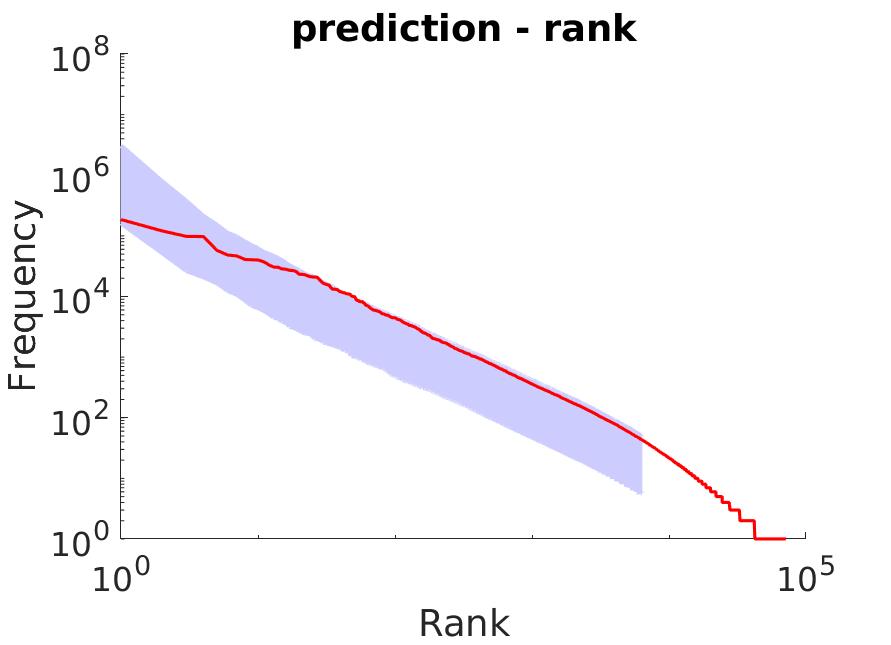}}
\subfigure[Beta prime]{\includegraphics[width=.24\linewidth]{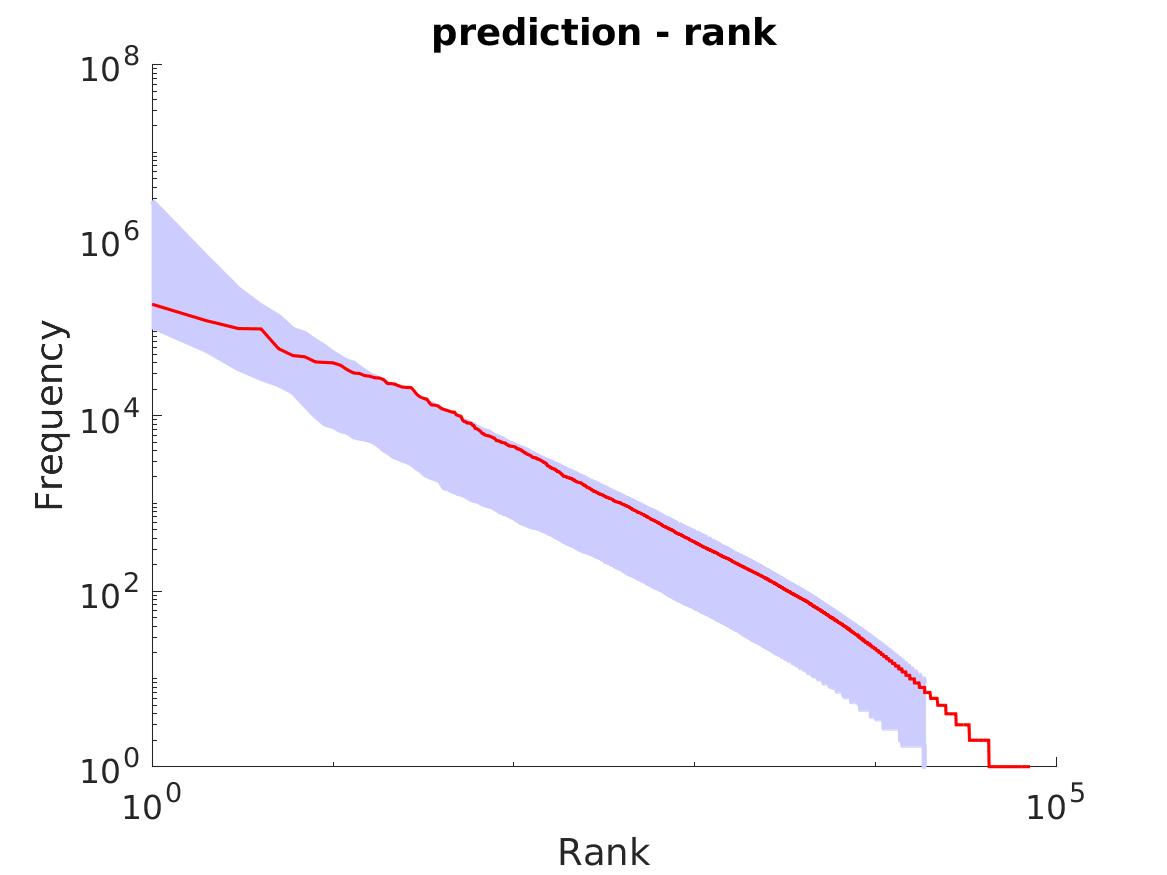}}
\subfigure[GGP]{\includegraphics[width=.24\linewidth]{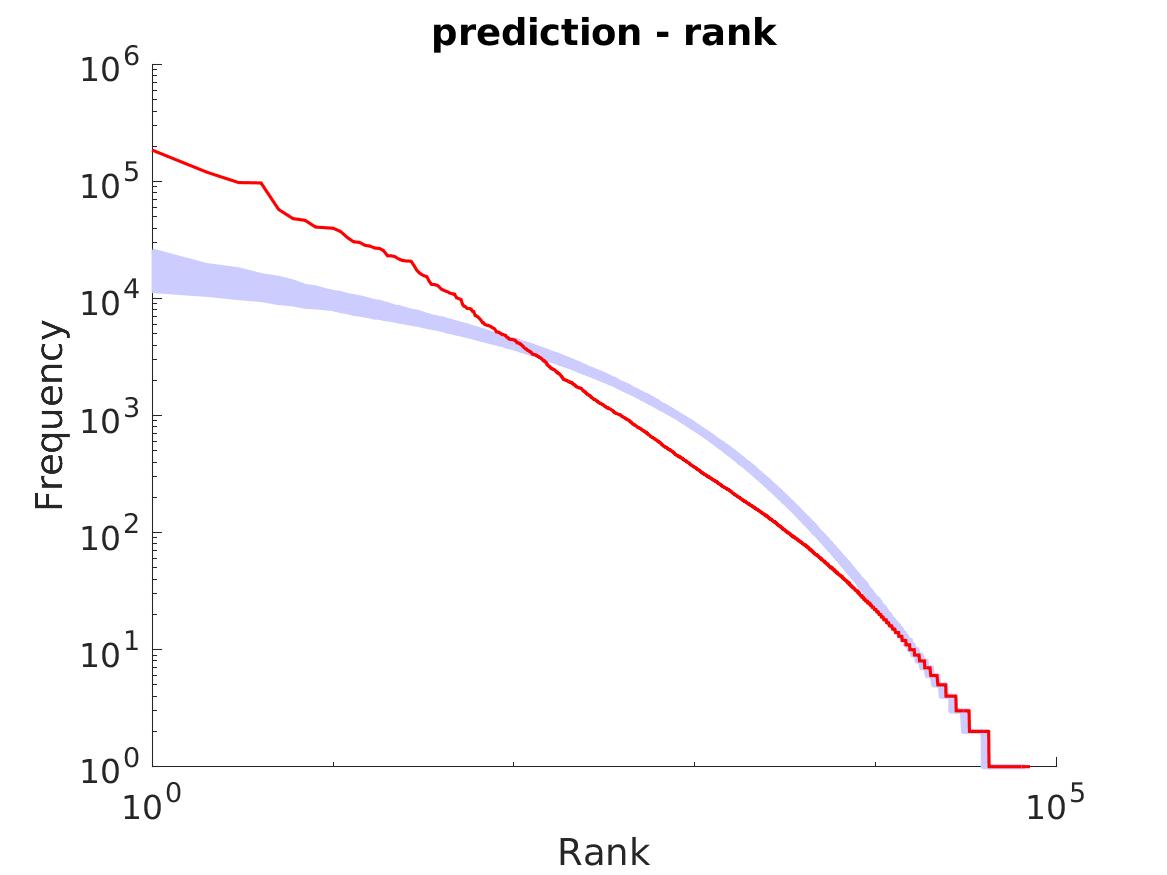}}
\subfigure[PY]{\includegraphics[width=.24\linewidth]{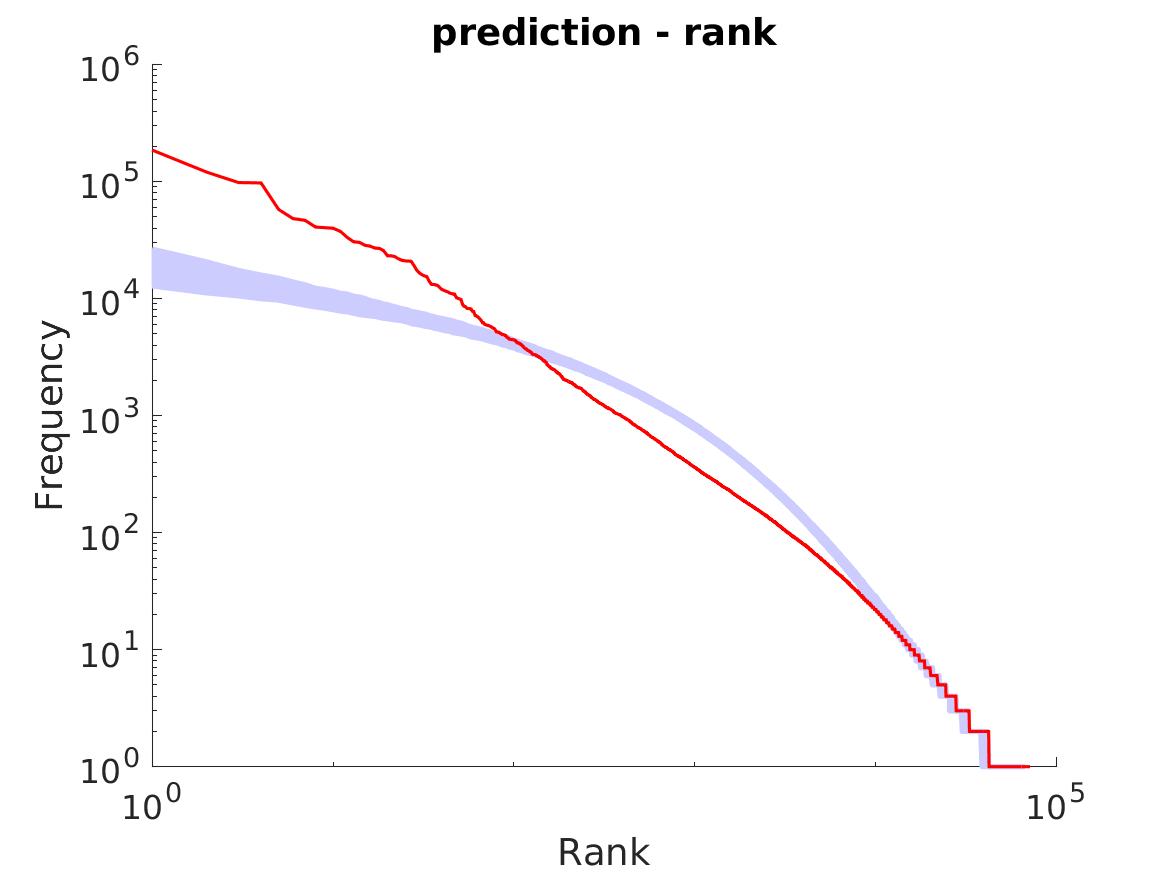}}
\caption{(Top) proportion of clusters of a given size in the English books dataset: $95\%$ credible interval of the posterior predictive in blue, real values in red.
(Bottom) ordered size of the clusters in the English books dataset: $95\%$ credible interval of the posterior predictive in blue, real values in red.}
\label{fig:englishbooks_prop_and_rank}
\end{figure}

\begin{figure}
\centering
\subfigure[Generalized BFRY]{\includegraphics[width=.24\linewidth]{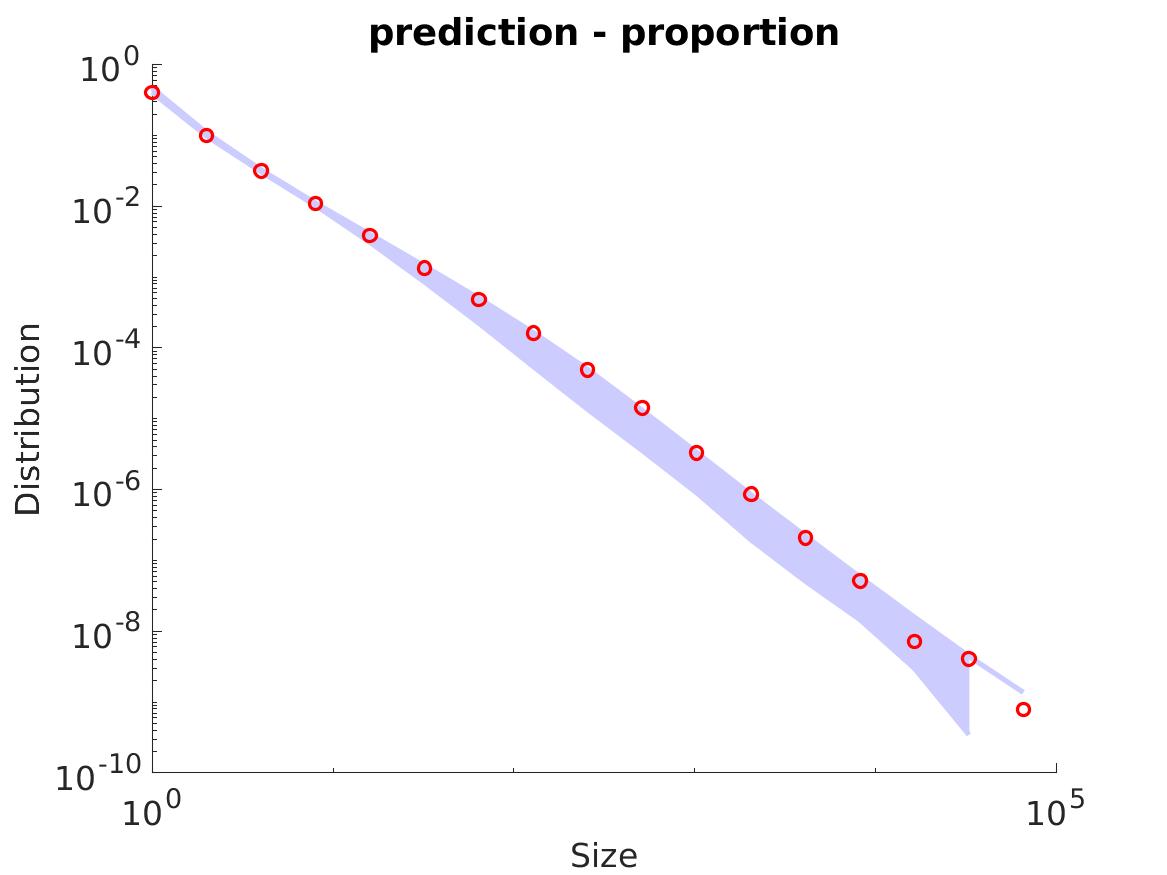}}
\subfigure[Beta prime]{\includegraphics[width=.24\linewidth]{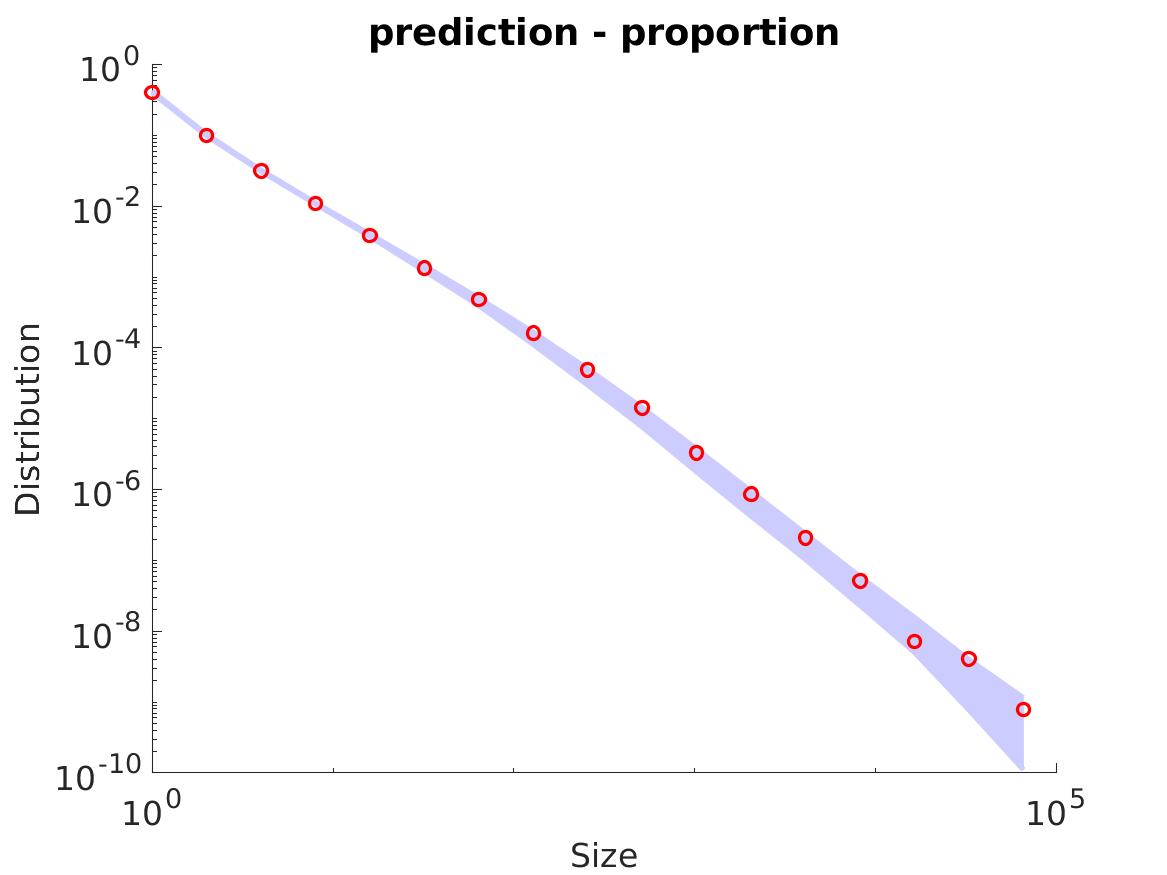}}
\subfigure[GGP]{\includegraphics[width=.24\linewidth]{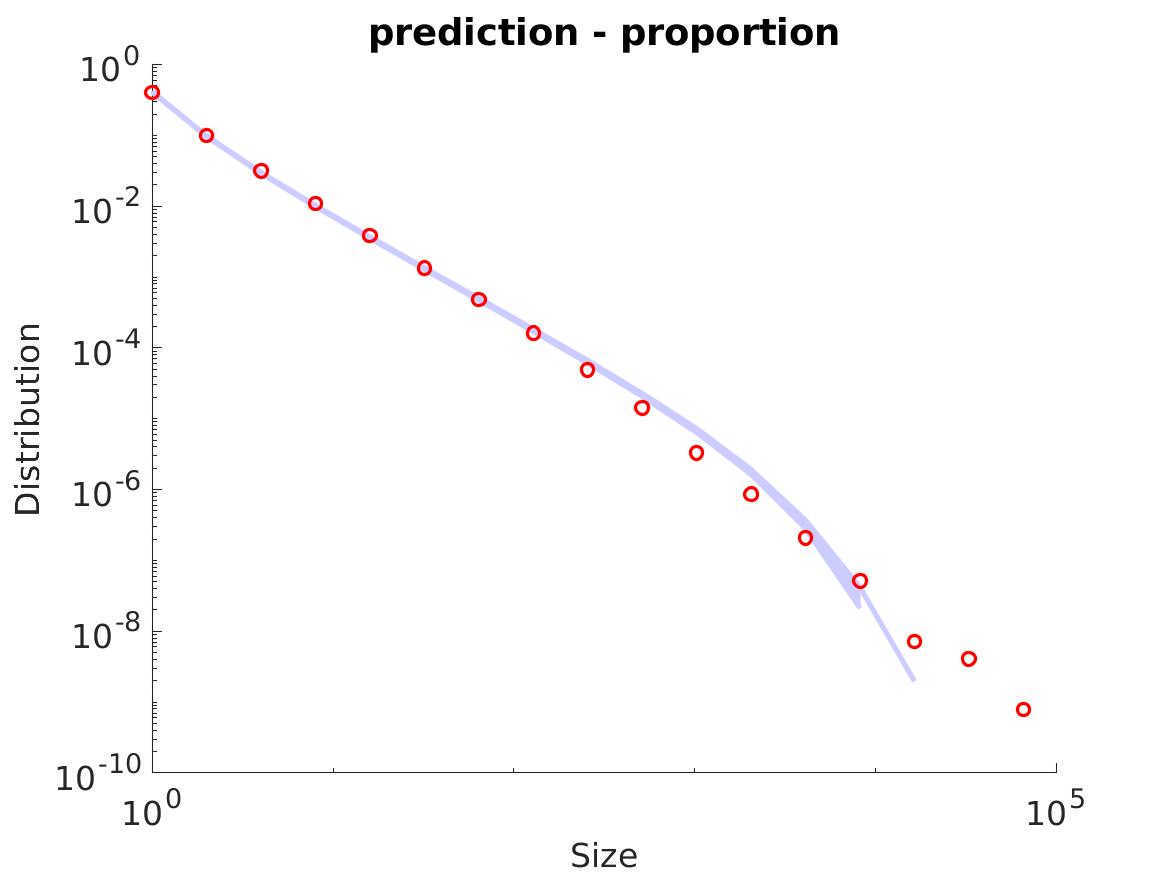}}
\subfigure[PY]{\includegraphics[width=.24\linewidth]{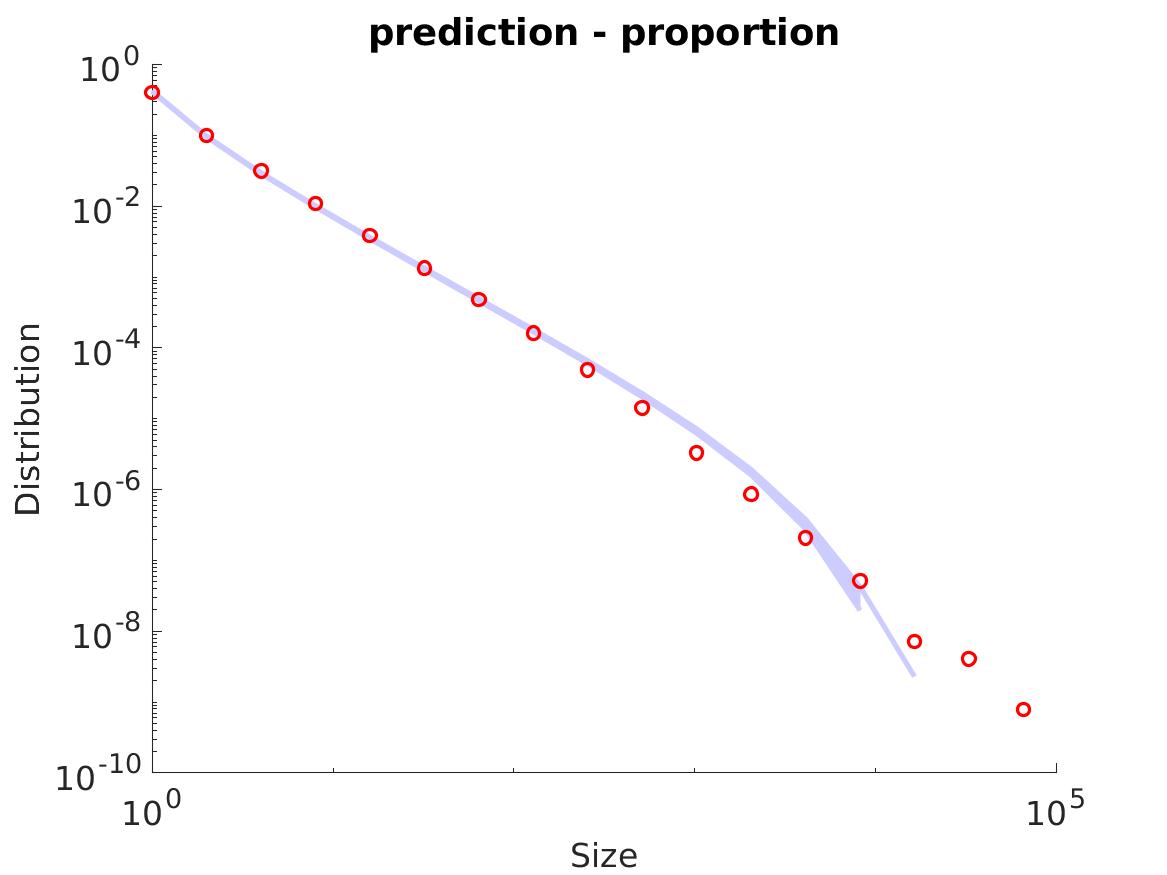}}
\subfigure[Generalized BFRY]{\includegraphics[width=.24\linewidth]{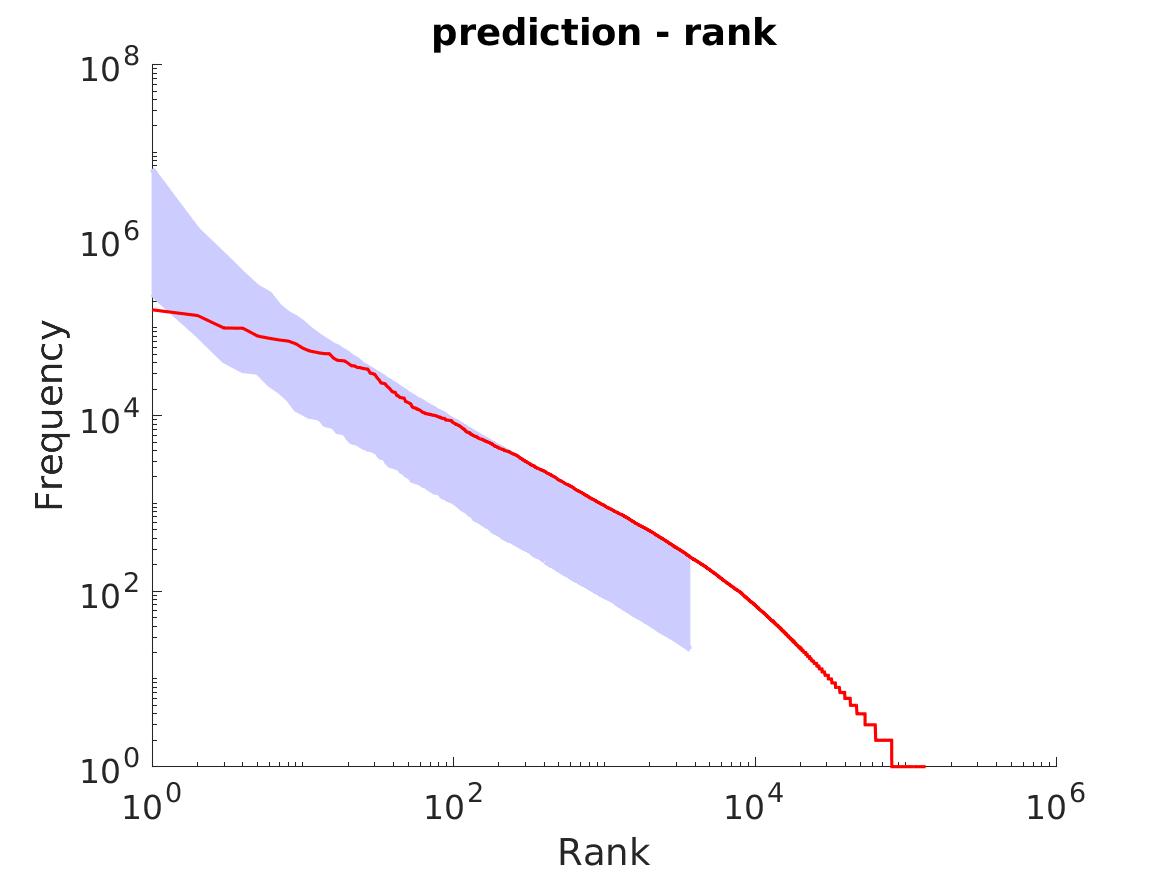}}
\subfigure[Beta prime]{\includegraphics[width=.24\linewidth]{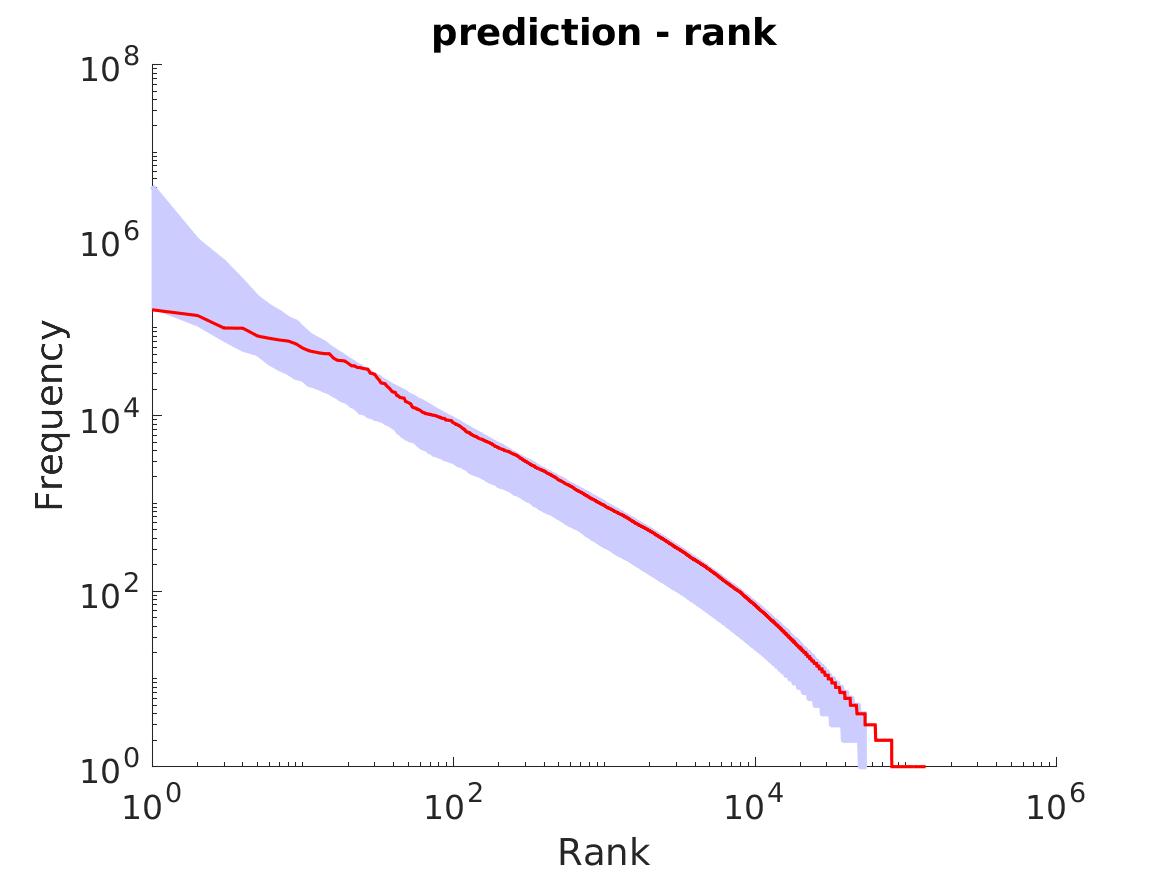}}
\subfigure[GGP]{\includegraphics[width=.24\linewidth]{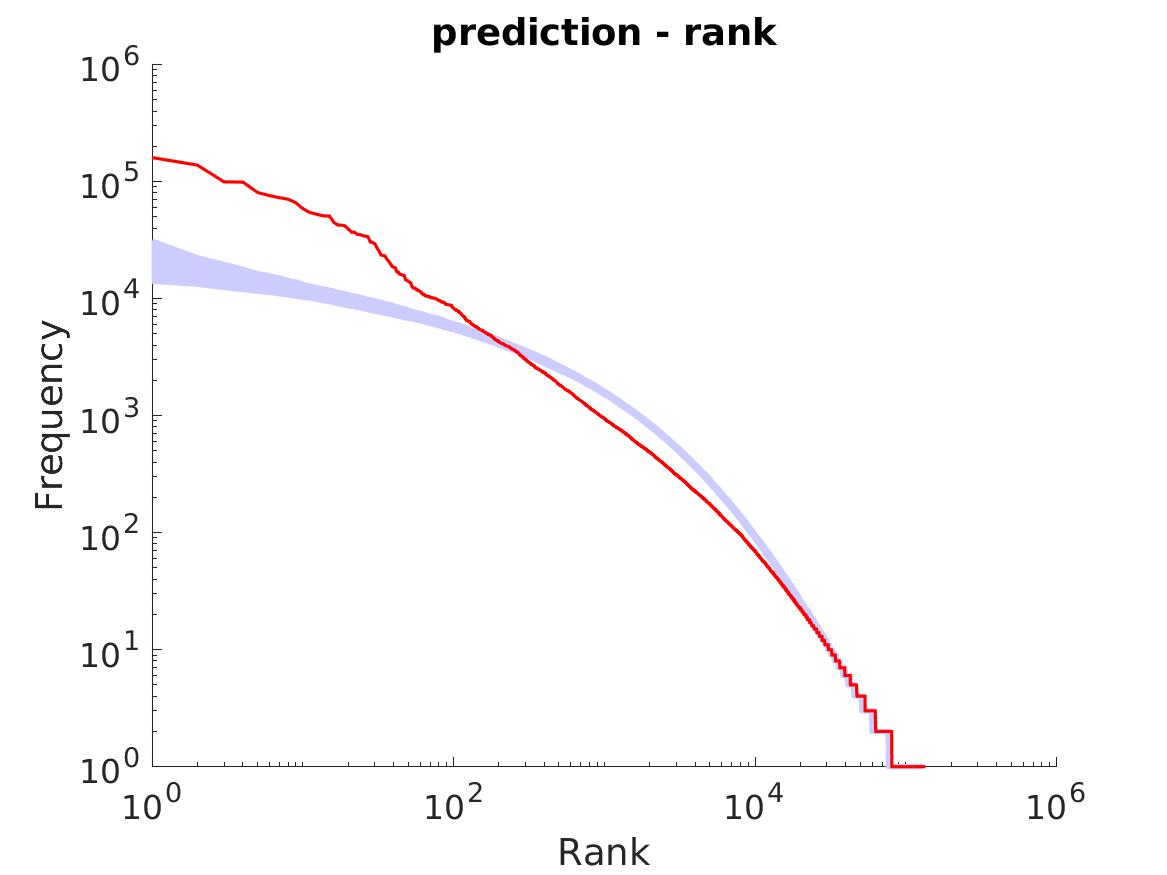}}
\subfigure[PY]{\includegraphics[width=.24\linewidth]{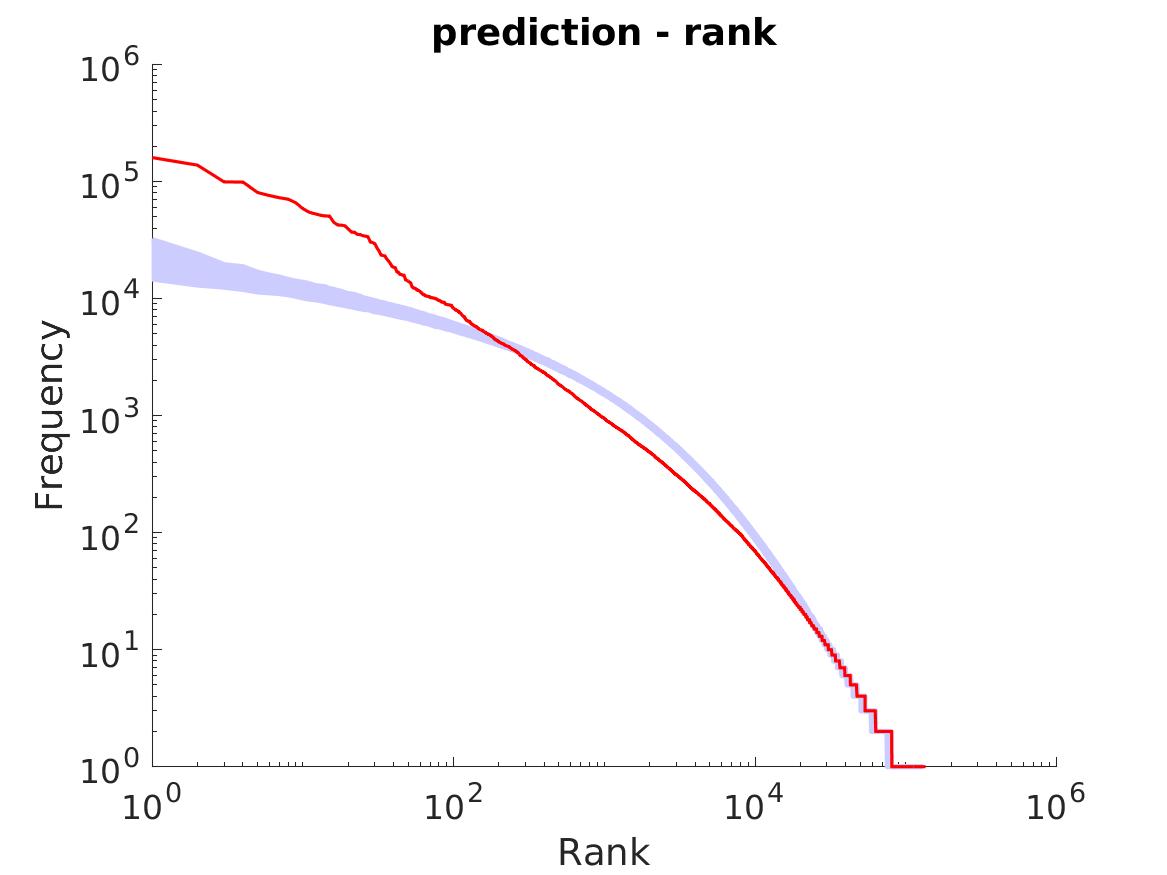}}
\caption{(Top) proportion of clusters of a given size in the French books dataset: $95\%$ credible interval of the posterior predictive in blue, real values in red. (Bottom) ordered size of the clusters in the French books dataset: $95\%$ credible interval of the posterior predictive in blue, real values in red.}
\label{fig:frenchbooks_prop_and_rank}
\end{figure}

\begin{figure}
\centering
\subfigure[Generalized BFRY]{\includegraphics[width=.24\linewidth]{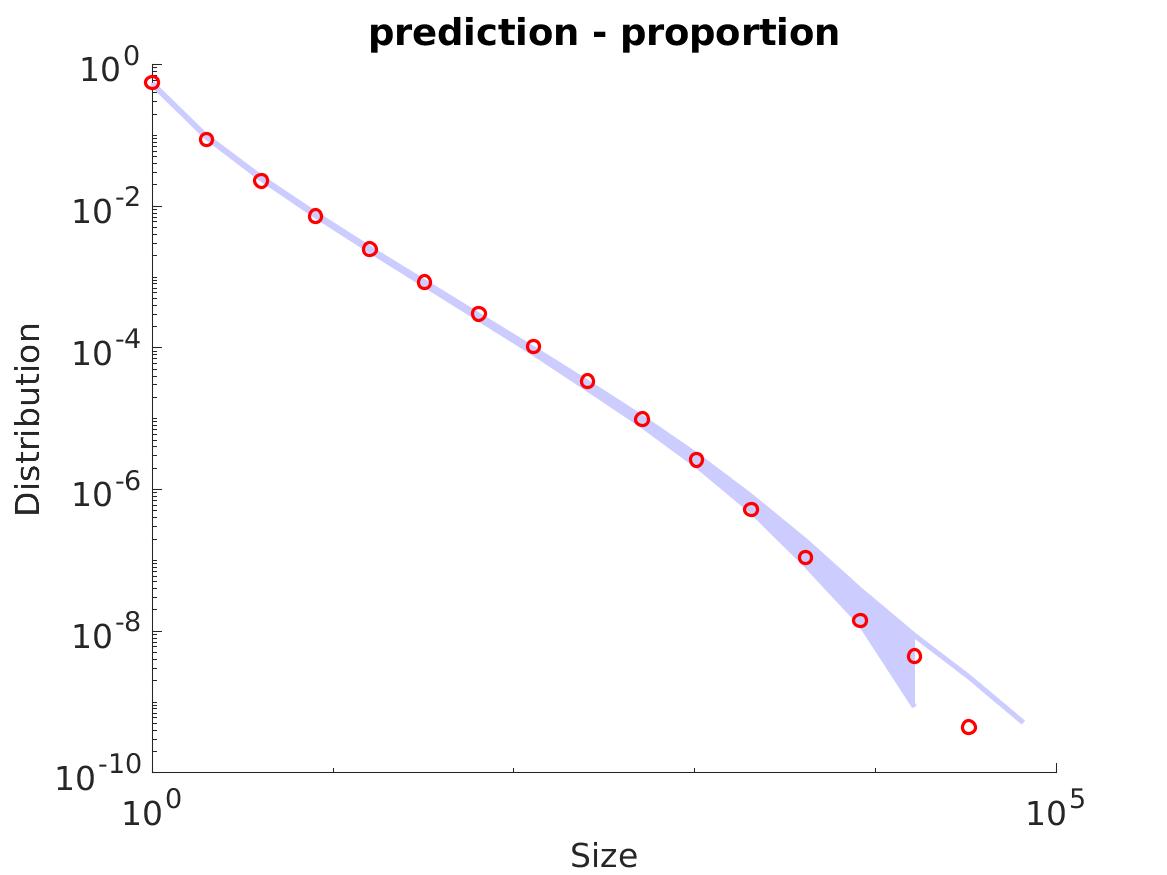}}
\subfigure[Beta prime]{\includegraphics[width=.24\linewidth]{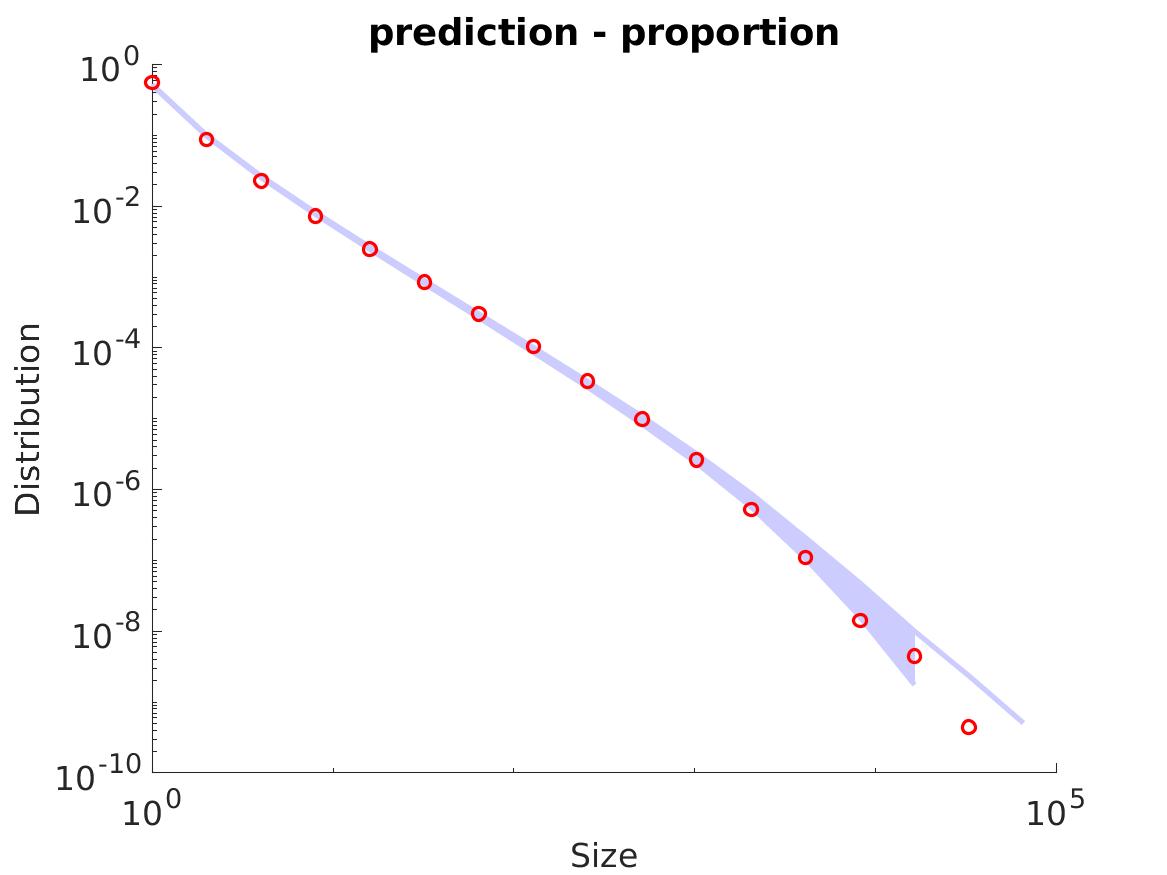}}
\subfigure[GGP]{\includegraphics[width=.24\linewidth]{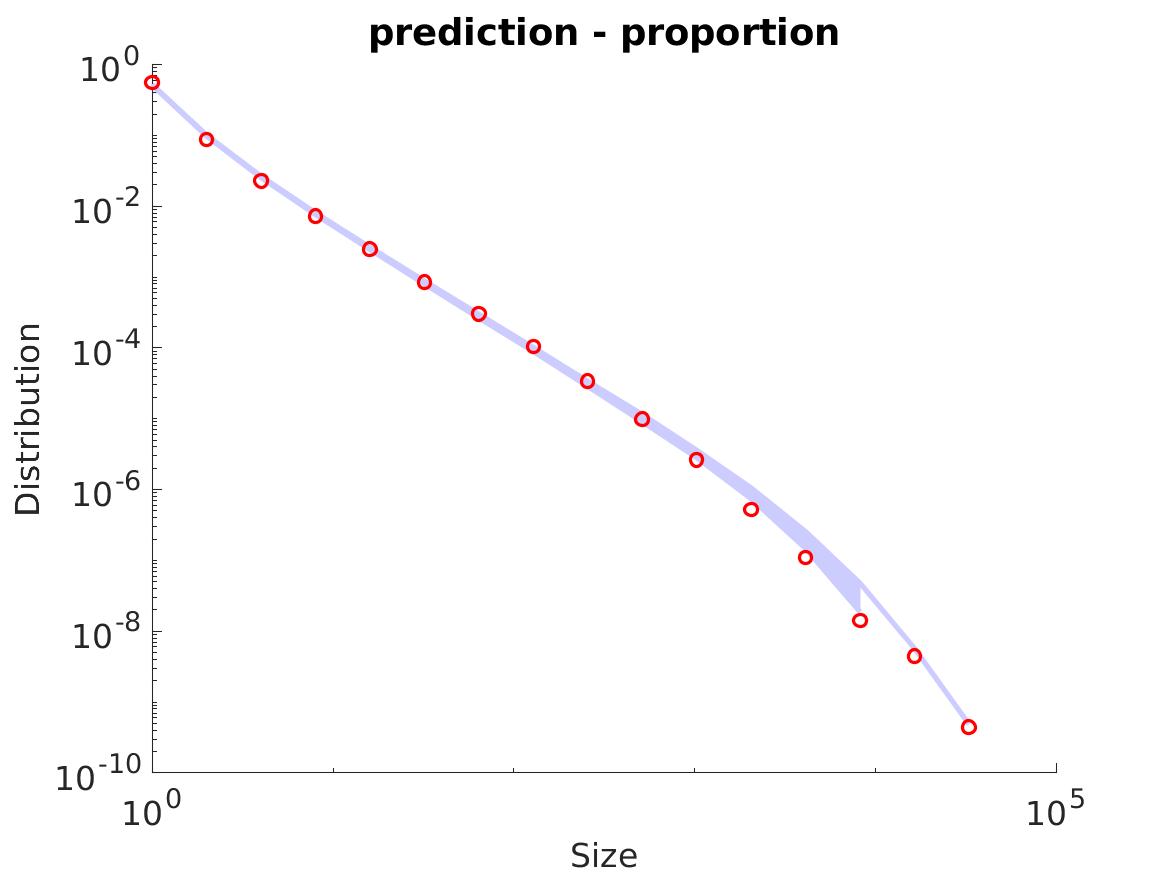}}
\subfigure[PY]{\includegraphics[width=.24\linewidth]{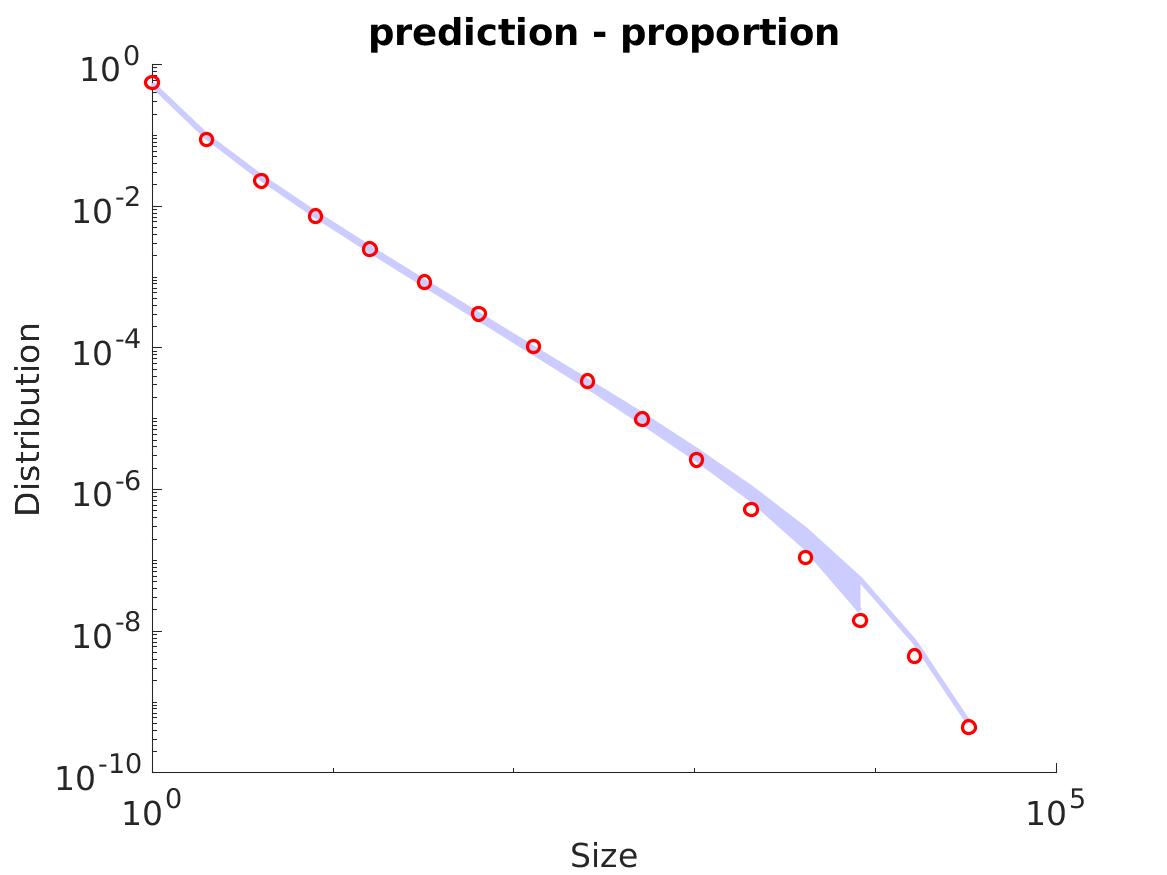}}
\subfigure[Generalized BFRY]{\includegraphics[width=.24\linewidth]{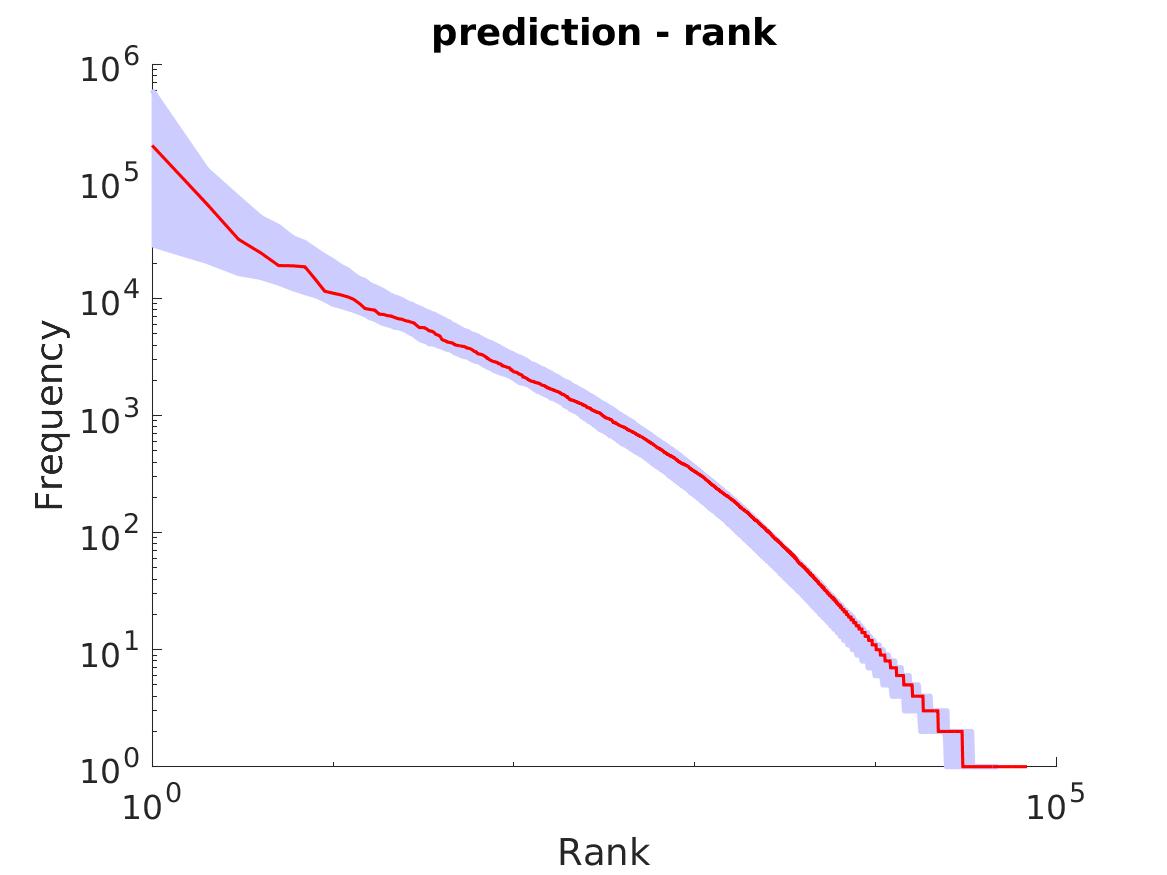}}
\subfigure[Beta prime]{\includegraphics[width=.24\linewidth]{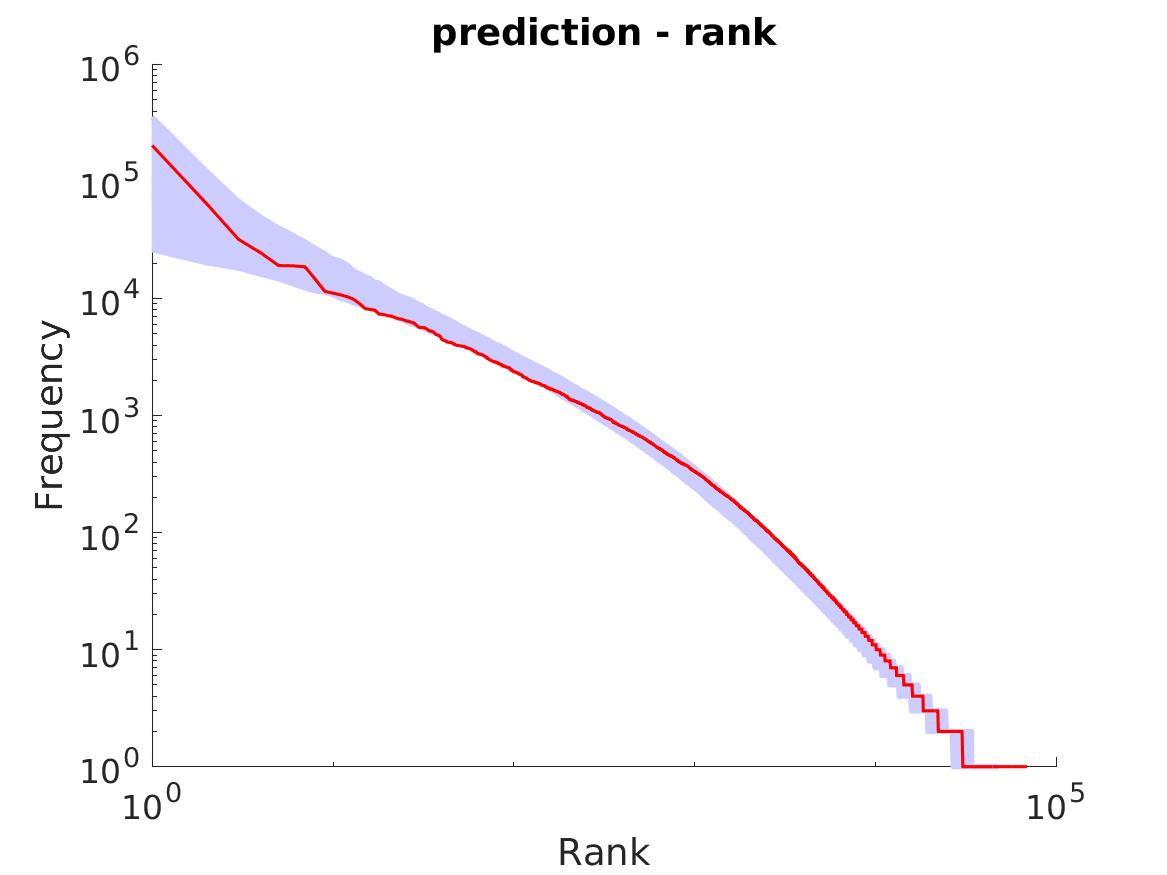}}
\subfigure[GGP]{\includegraphics[width=.24\linewidth]{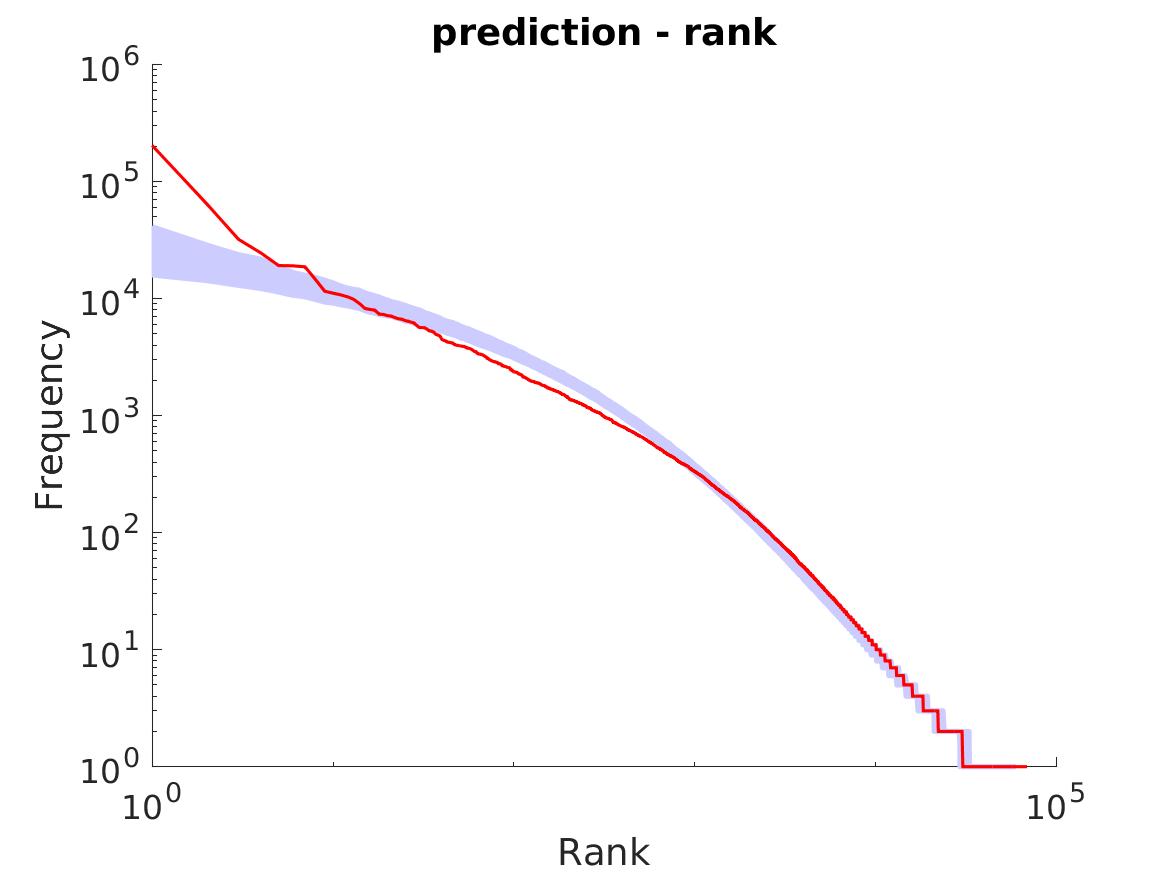}}
\subfigure[PY]{\includegraphics[width=.24\linewidth]{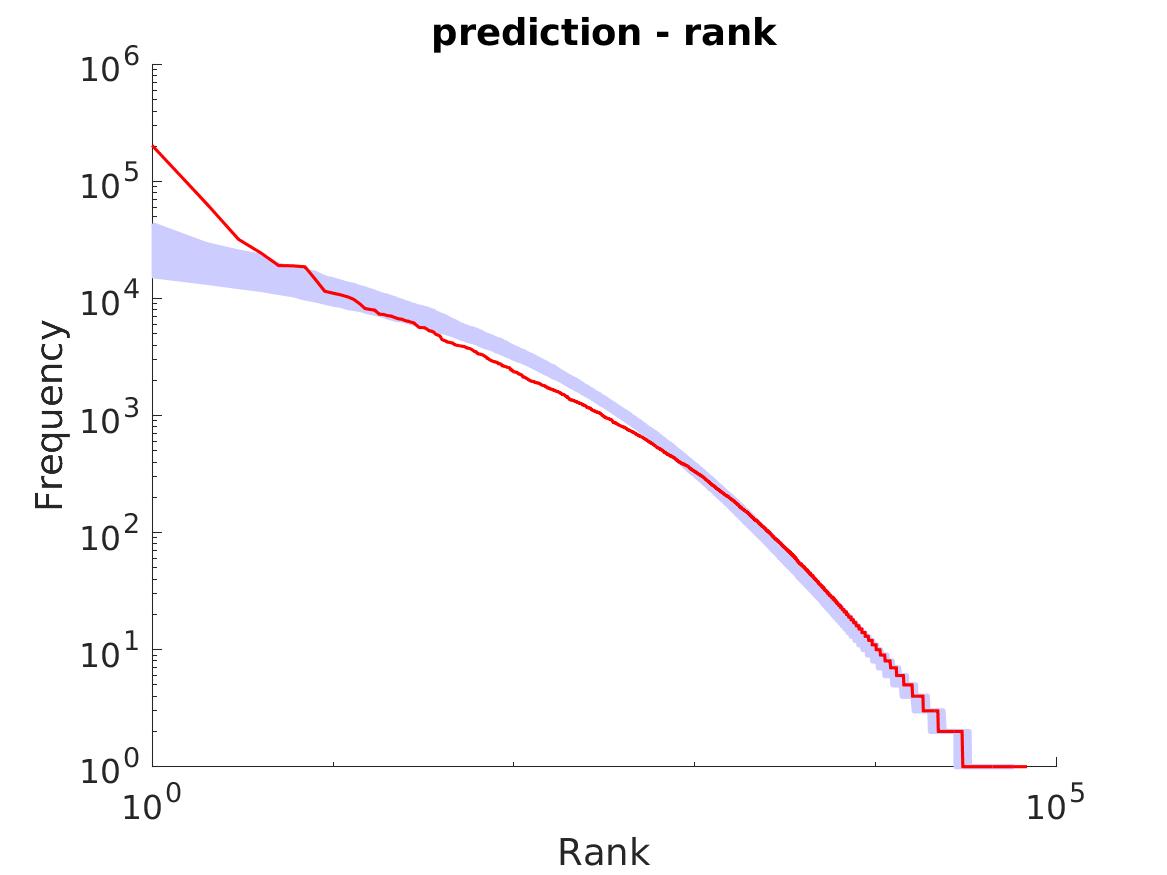}}
\caption{(Top) proportion of clusters of a given size in the NIPS dataset: $95\%$ credible interval of the posterior predictive in blue, real values in red.
(Bottom) ordered size of the clusters in the NIPS dataset: $95\%$ credible interval of the posterior predictive in blue, real values in red.}
\label{fig:nips_prop_and_rank}
\end{figure}

\begin{figure}
\centering
\subfigure[Generalized BFRY]{\includegraphics[width=.24\linewidth]{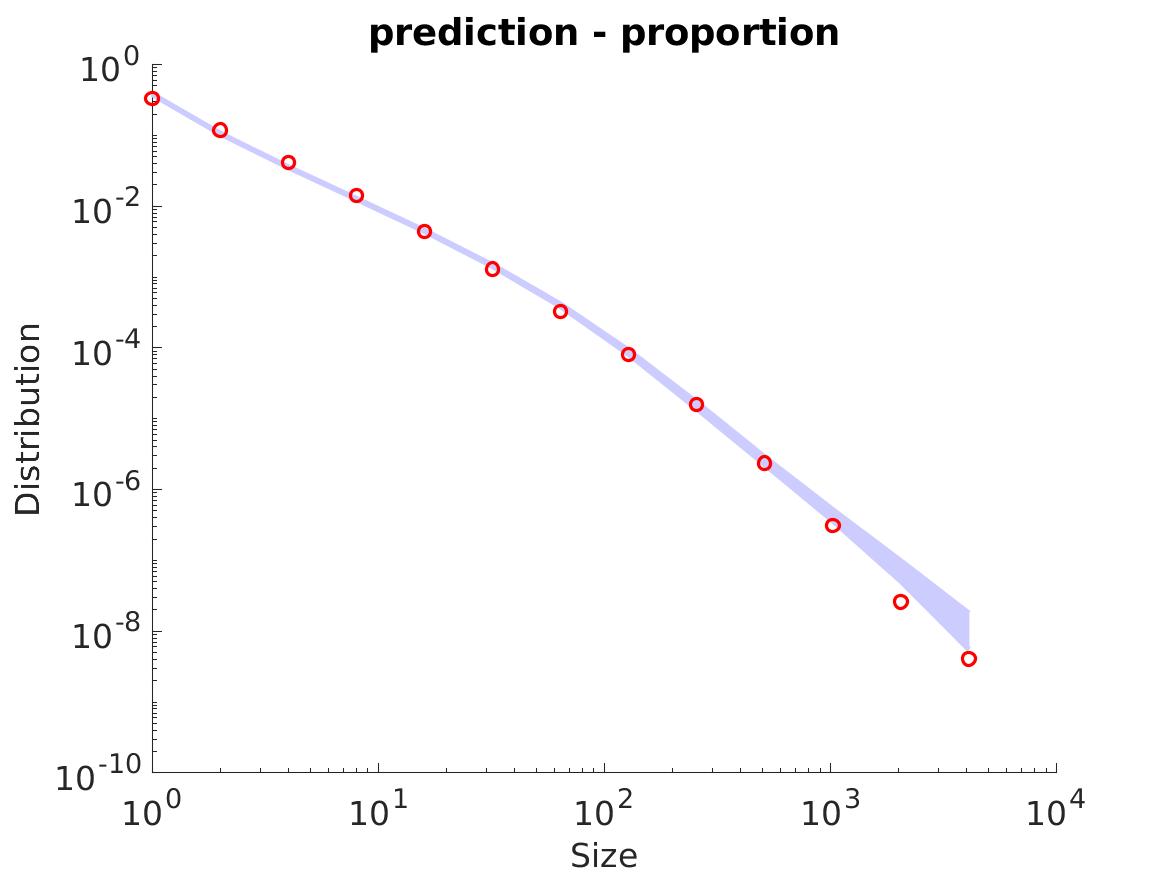}}
\subfigure[Beta prime]{\includegraphics[width=.24\linewidth]{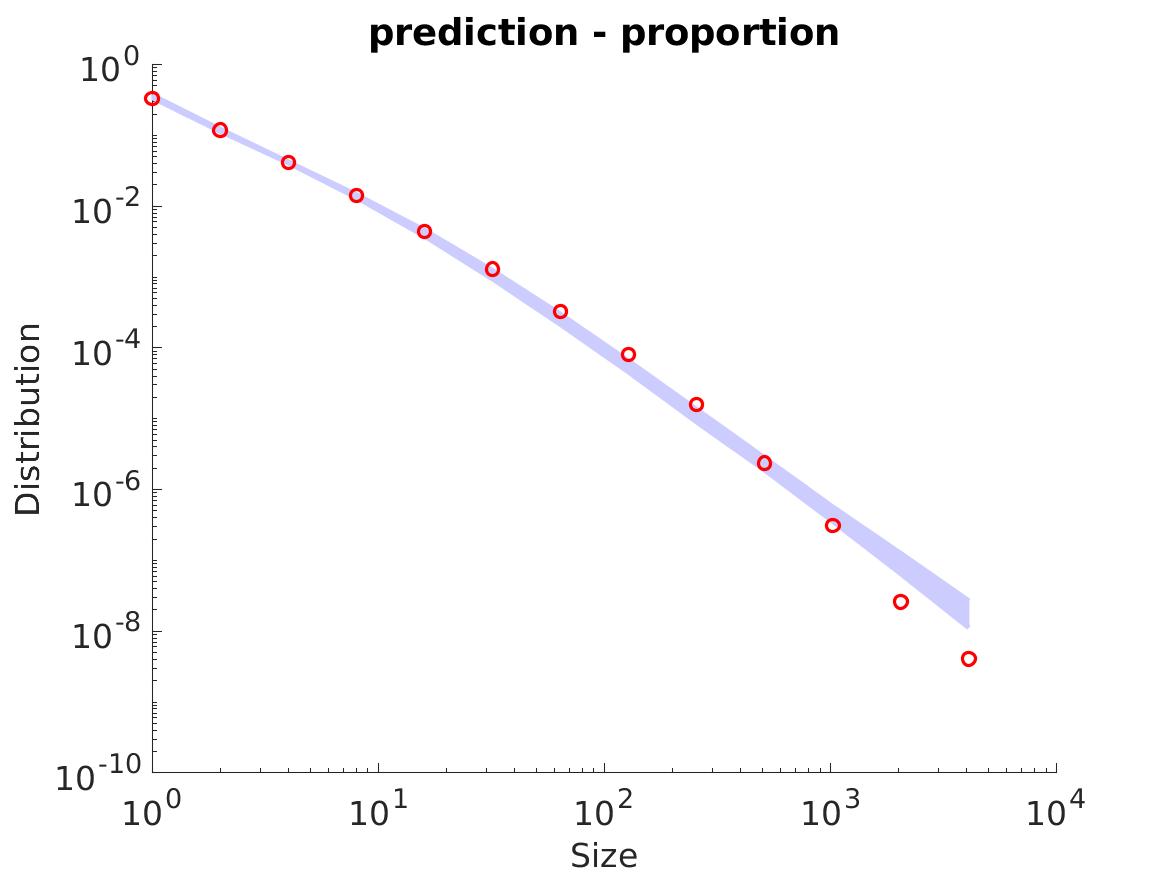}}
\subfigure[GGP]{\includegraphics[width=.24\linewidth]{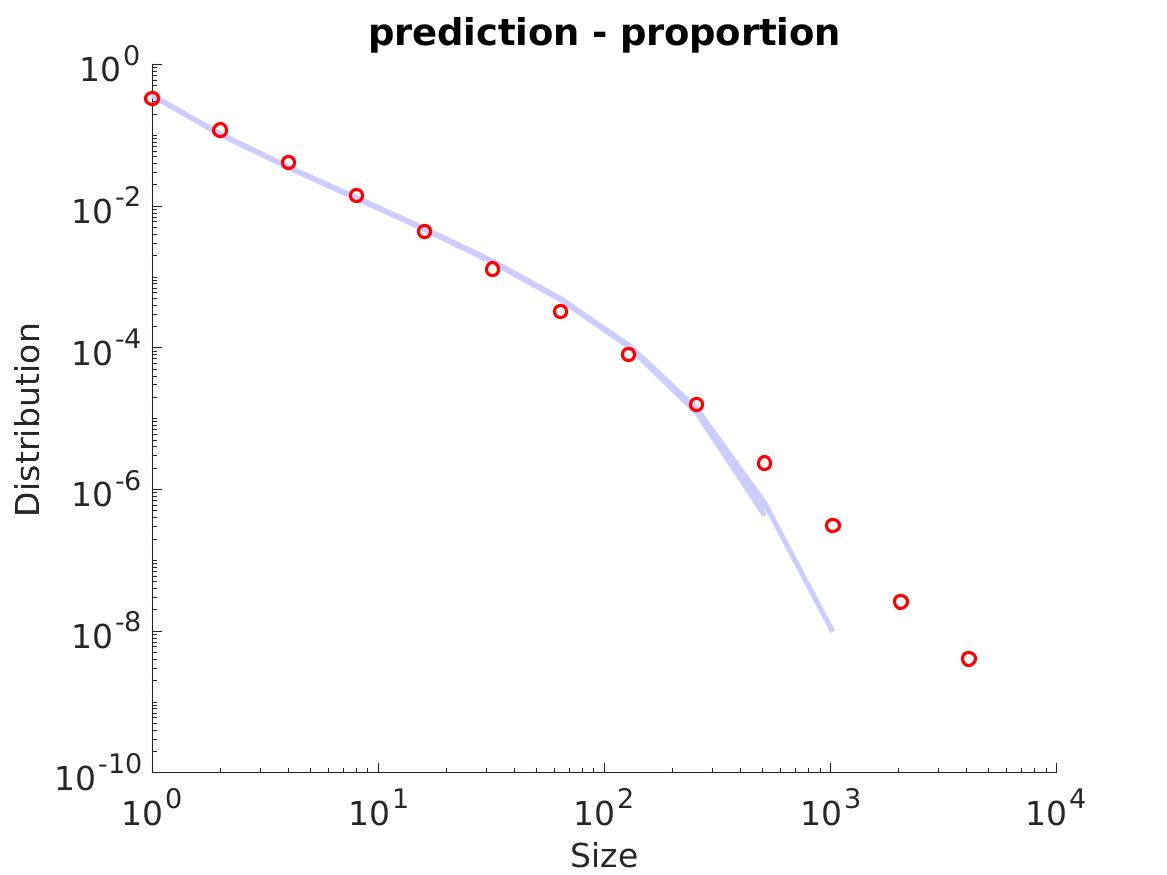}}
\subfigure[PY]{\includegraphics[width=.24\linewidth]{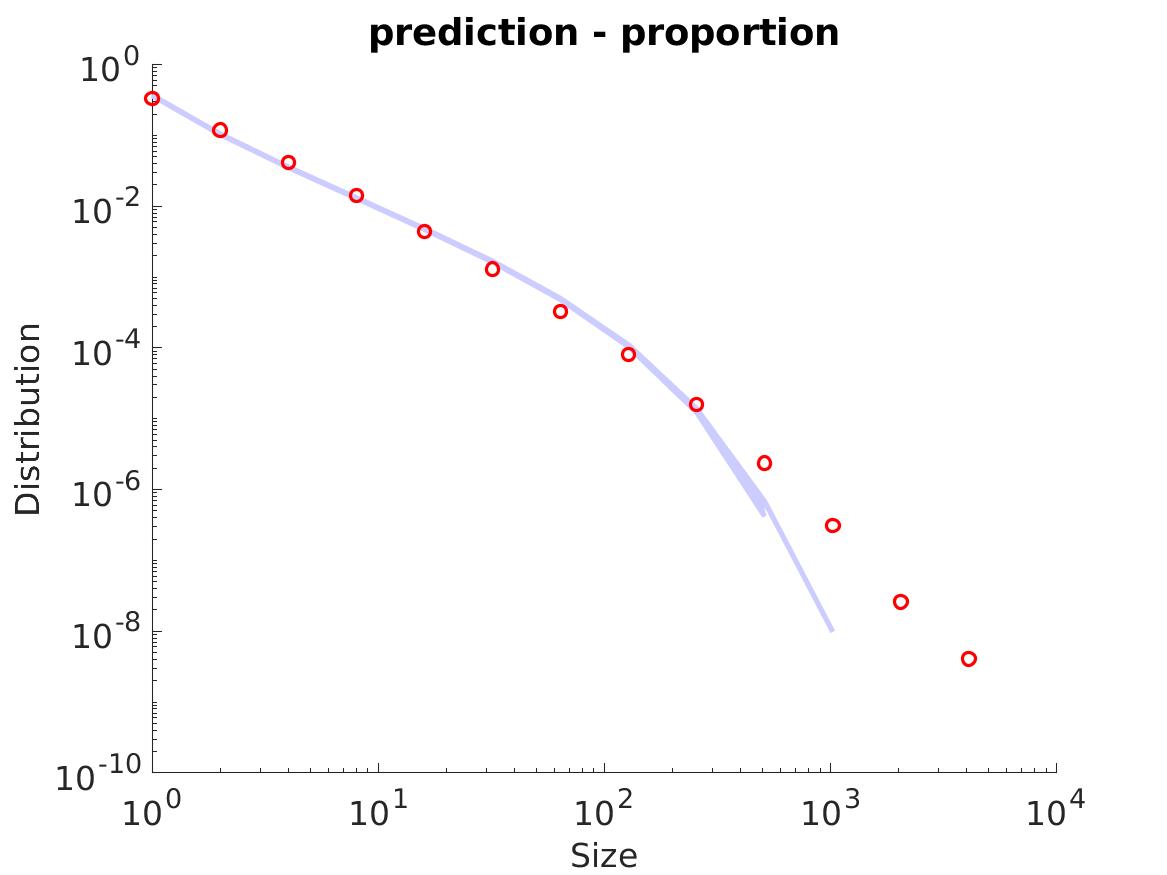}}
\subfigure[Generalized BFRY]{\includegraphics[width=.24\linewidth]{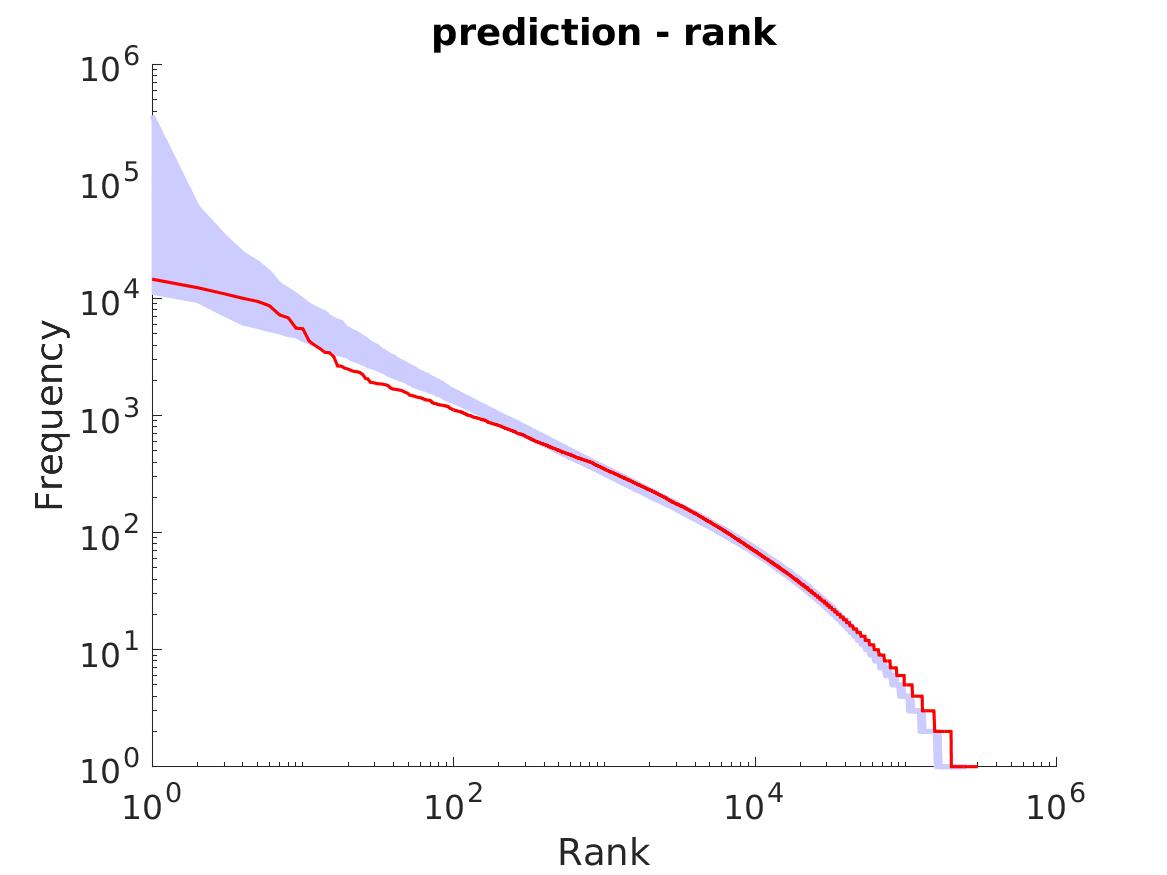}}
\subfigure[Beta prime]{\includegraphics[width=.24\linewidth]{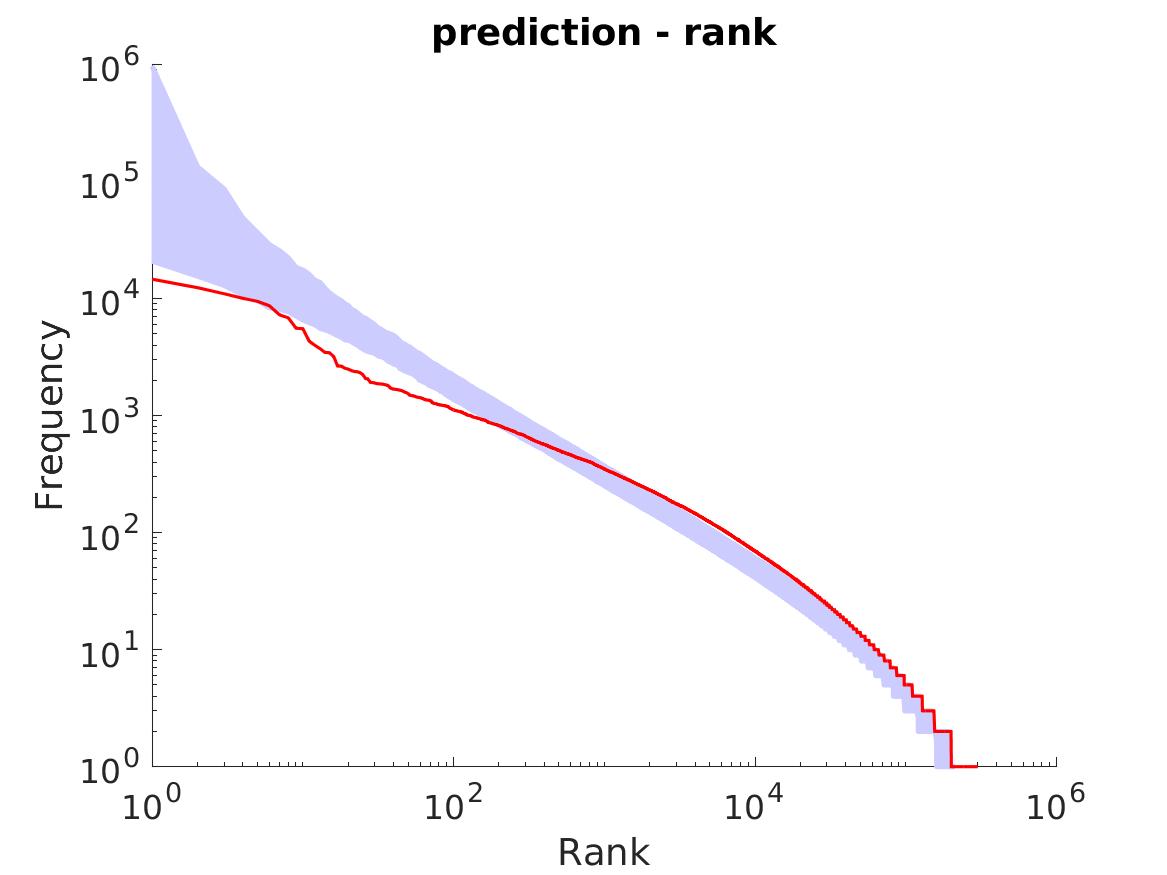}}
\subfigure[GGP]{\includegraphics[width=.24\linewidth]{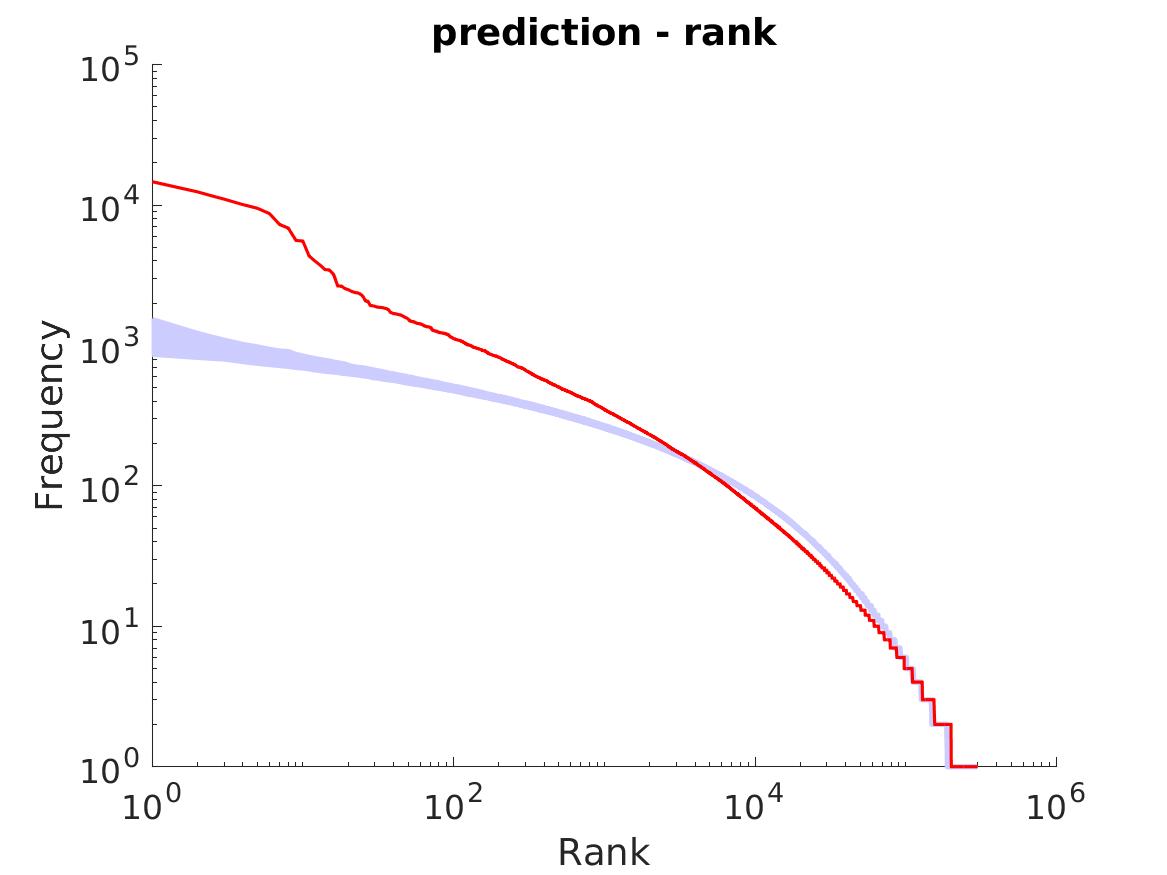}}
\subfigure[PY]{\includegraphics[width=.24\linewidth]{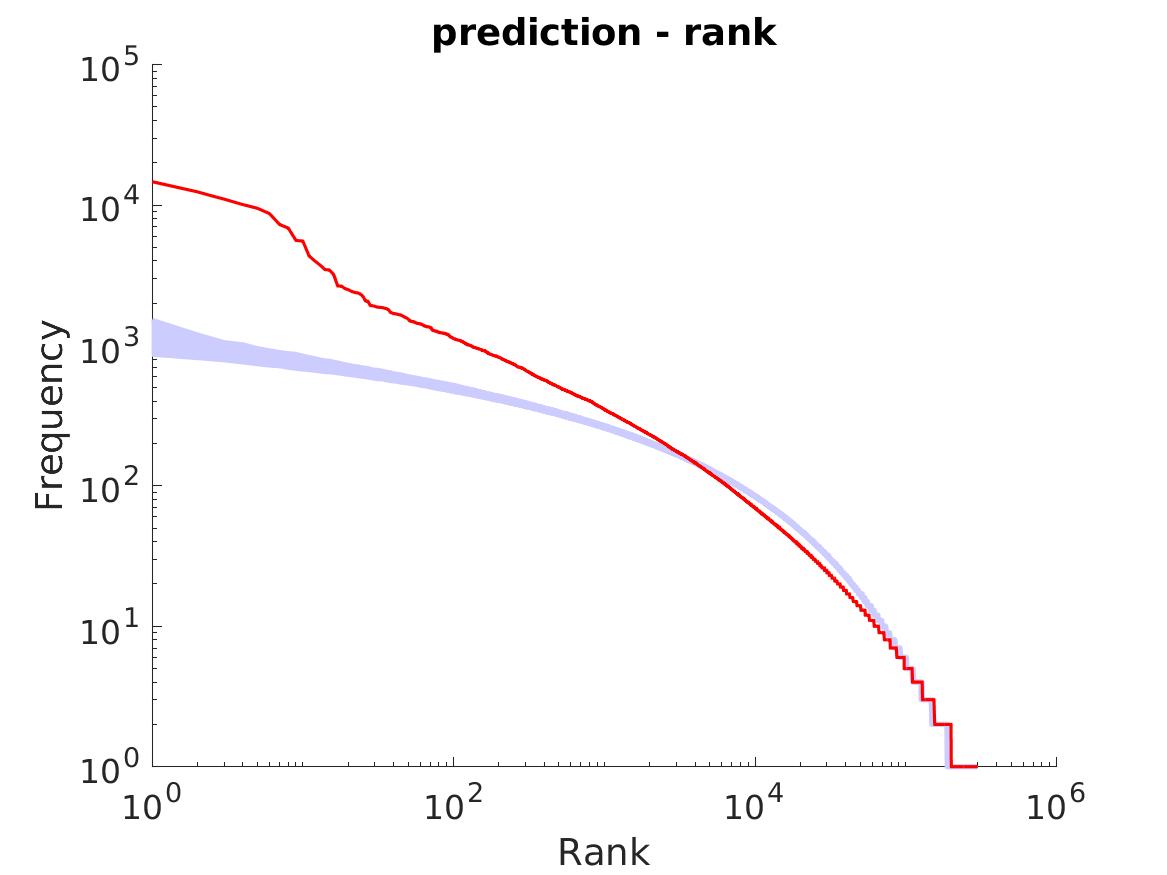}}
\caption{(Top) proportion of nodes with a given degree in the Twitter dataset: $95\%$ credible interval of the posterior predictive in blue, real values in red.
(Bottom) ordered size of the clusters in the Twitter dataset: $95\%$ credible interval of the posterior predictive in blue, real values in red.}
\label{fig:twitter_prop_and_rank}
\end{figure}

\end{appendices}

\end{document}